\newtheorem{theorem}{Theorem}[section]
\newtheorem{corollary}[theorem]{Corollary}
\newtheorem{lemma}[theorem]{Lemma}
\newtheorem{fact}[theorem]{Fact}
\newtheorem{claim}[theorem]{Claim}
\theoremstyle{definition}
\newtheorem{definition}[theorem]{Definition}
\theoremstyle{remark}
\newtheorem*{remark}{Remark}
\newcommand\A{\mathcal{A}}
\newcommand\B{\mathcal{B}}
\newcommand\E{\mathbb{E}}
\newcommand\F{\mathcal{F}}
\newcommand\G{\mathcal{G}}
\newcommand\HH{\mathcal{H}}
\newcommand\I{\mathbb{I}}
\newcommand\N{\mathcal{N}}
\newcommand\R{\mathbb{R}}
\newcommand\T{\mathcal{T}}
\newcommand\V{\mathcal{V}}
\newcommand\X{\mathcal{X}}
\newcommand\Z{\mathcal{Z}}
\newcommand\lap{\mathrm{Lap}}
\newcommand\soa{\mathtt{SOA}}
\newcommand\cnt{\mathrm{Count}}
\newcommand\est{\mathrm{Est}}
\newcommand\err{\mathrm{err}}
\newcommand\dis{\mathrm{dis}}
\newcommand\ld{\mathrm{Ldim}}
\newcommand\vc{\mathrm{VCdim}}
\title{Private Online Learning against an Adaptive Adversary: Realizable and Agnostic Settings}
\author{Bo Li\thanks{Guangzhou HKUST Fok Ying Tung Research Institute and Department of Computer Science and Engineering, HKUST. \texttt{bli@cse.ust.hk}.}
\and 
Wei Wang\thanks{Department of Computer Science and Engineering, HKUST. \texttt{weiwa@cse.ust.hk}.}
\and
Peng Ye\thanks{Department of Computer Science and Engineering, HKUST. \texttt{pyeac@connect.ust.hk}.}}
\begin{document}

\maketitle

\begin{abstract}
  We revisit the problem of private online learning, in which a learner receives a sequence of $T$ data points and has to respond at each time-step a hypothesis. It is required that the entire stream of output hypotheses should satisfy differential privacy. Prior work of~\citet{golowich2021littlestone} established that every concept class $\HH$ with finite Littlestone dimension $d$ is privately online learnable in the realizable setting. In particular, they proposed an algorithm that achieves an $O_{d}(\log T)$ mistake bound against an oblivious adversary. However, their approach yields a suboptimal $\tilde{O}_{d}(\sqrt{T})$ bound against an adaptive adversary. In this work, we present a new algorithm with a mistake bound of $O_{d}(\log T)$ against an adaptive adversary, closing this gap. We further investigate the problem in the agnostic setting, which is more general than the realizable setting as it does not impose any assumptions on the data. We give an algorithm that obtains a sublinear regret of $\tilde{O}_d(\sqrt{T})$ for generic Littlestone classes, demonstrating that they are also privately online learnable in the agnostic setting.
\end{abstract}

\section{Introduction}

Machine learning has demonstrated remarkable performance in various applications due to its capability of extracting informative patterns from vast amounts of data. However, this success also raises critical privacy concerns, particularly in domains like healthcare or finance, where models often process sensitive personal data. As machine learning technologies continue to advance, ensuring the protection of individual privacy has become an urgent societal and technical challenge.

Differential privacy (DP)~\citep{dwork2006calibrating,dwork2006our} is the de facto privacy-preserving technique that addresses these concerns by rigorously formalized privacy guarantees. To ensure that an algorithm protects privacy, DP requires that its output distribution remains nearly indistinguishable when any single individual’s data is modified, thereby limiting privacy leakage. The central challenge in differentially private learning lies in designing algorithms that satisfy the DP requirement while remaining effective. 

To understand the statistical cost of DP in learning, extensive research has studied probably approximately correct (PAC) learning under DP. A line of works~\citep{alon2019private,bun2020equivalence,alon2022private} has established that private learnability is characterized by the Littlestone dimension, a combinatorial measure originally proposed by~\citet{littlestone1988learning} to describe (non-private) online learnability. In other words, a concept class is privately learnable if and only if it is online learnable.

Motivated by this compelling equivalence,~\citet{golowich2021littlestone} pioneered the study of privately online learning generic concept classes and demonstrated that the equivalence includes private online learnability in the realizable setting. For any concept class $\HH$ with Littlestone dimension $d$, their algorithm achieves an $O_d(\log T)$ mistake bound in $T$ rounds against an oblivious adversary that generates the entire data stream prior to interacting with the learner. However, for an adaptive adversary---which dynamically adjusts each data point based on the learner's output history---their approach yields a suboptimal $\tilde{O}_d(\sqrt{T})$ mistake bound. While this upper bound is sufficient to preserve a qualitative equivalence between private and non-private online learning against an adaptive adversary, it leaves open whether adaptive adversaries inherently require higher error rates. Subsequent works~\citep{cohen2024lower,dmitriev2024growth,li2024limits} confirmed that a cost of $\Omega(\log T)$ is unavoidable, yet whether $O_d(\log T)$ is achievable for adaptive settings remains unresolved.


Another limitation of their algorithm is that it operates under the realizability assumption, which requires all the data to be perfectly labeled by some $h\in\HH$. However, this assumption does not hold in many real-world scenarios, as the labeling function may not belong to $\HH$ or even not exist due to noise in data generation. This necessitates the consideration of the \textit{agnostic setting}, where no assumptions are made for the data. Notably, in both (non-private) online learning and private PAC learning, Littlestone classes remain provably learnable in the agnostic setting~\citep{ben2009agnostic,bun2020equivalence,ghazi2021sample,beimel2021learning,alon2020closure}. This raises a compelling open question: Can this result be generalized to private online learning?

\subsection{Our Contributions}

Our first contribution is an algorithm for private online learning in the realizable adaptive setting with a logarithmic mistake bound.

\begin{theorem}
\label{thm:realizable}
    Let $\HH$ be a concept class with Littlestone dimension $d$. In the realizable setting, there exists an $(\varepsilon,\delta)$-differentially private online learner for $\HH$ with an expected mistake bound of $O(2^{2^{O(d)}}(\log T + \log (1/\delta)) / \varepsilon)$ against any adaptive adversary.
\end{theorem}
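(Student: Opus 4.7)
The plan is to design a private online learner that acts \emph{piecewise} in time: it keeps a single ``anchor'' hypothesis $f$ for long stretches of rounds and only switches anchors a bounded number of times. Writing $N$ for the total number of switches and $M$ for the mistakes accumulated between consecutive switches, the expected mistake bound will factor as $N \cdot M$. I will aim for $N \leq |F|$, where $F$ is a doubly-exponential-in-$d$ pool of candidate anchors, and for $M = O((\log T + \log(1/\delta))/\varepsilon)$ via a calibrated Sparse Vector threshold.

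\textbf{Step 1 (Doubly-exponential pool of anchors).} Following the combinatorial blueprint of Golowich--Livni, construct a finite class $F \subseteq \HH$ with $|F| = 2^{2^{O(d)}}$ satisfying: for every realizable $T$-round stream that an adaptive adversary could present, at least one $f \in F$ labels every round correctly. The construction is a recursive/boosting-style composition of $\soa$ over short ``witness'' subsets of examples, so $F$ depends only on $\HH$ (not on the data) and the double exponential is inherited from this construction rather than introduced here.

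\textbf{Step 2 (Private anchor tracking).} Maintain a surviving pool $S \subseteq F$, initialized to $F$, and a current anchor $f \in S$. At each round $t$, output $f(x_t)$; after observing $y_t$, update a $\lap$-noisy count of mistakes $\tilde c_f$ for $f$ and compare against a noisy threshold $\tau + \lap(\cdot)$ with $\tau = \Theta((\log T + \log(1/\delta))/\varepsilon)$, calibrated so that the advanced-composition budget across up to $|F|$ refresh events is $(\varepsilon,\delta)$. When $\tilde c_f$ exceeds the noisy threshold, perform a \emph{refresh}: remove $f$ from $S$, pick a new anchor from $S$, redraw a fresh noisy threshold, and reset the counter. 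Privacy of SVT is insensitive to query adaptivity, so $(\varepsilon,\delta)$-DP of the full output transcript follows verbatim from the standard analysis.

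\textbf{Step 3 (Bounding mistakes; main obstacle).} By Step~1 the perfect anchor lies in $F$ and, with the threshold chosen so that false-positive refreshes on a mistake-free count occur with probability $1/\mathrm{poly}(T)$, the perfect anchor is (with high probability, absorbed into the expectation) never refreshed. Hence $S$ never empties and $N \leq |F| - 1 = 2^{2^{O(d)}}$. Between two refreshes the algorithm outputs a fixed $f$, so its mistakes equal $f$'s mistakes, which SVT utility bounds by $O(\tau) = O((\log T + \log(1/\delta))/\varepsilon)$ in expectation. Multiplying gives the claimed bound. The hard part is the utility analysis in the \emph{adaptive} regime: the standard SVT utility statement compares the noisy count to the true count on a fixed stream, but here $x_t$ may depend on every previous noisy decision, so an adaptive adversary could try to keep a bad anchor's noisy count hovering just below $\tau$ while forcing many true mistakes. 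To handle this, I would couple the adversary to the algorithm's \emph{public transcript} only (which is itself DP), and run a martingale argument (Azuma/Freedman on the Doob filtration generated by that transcript) to show that noisy counts still track true counts up to $O(\tau)$ deviation even against adaptive examples. Drawing a fresh noisy threshold at every refresh is crucial here, because it decorrelates the adversary's future examples from the pending SVT noise; this is the technical upgrade over the Golowich--Livni analysis and the main new ingredient beyond reusing their expert pool.
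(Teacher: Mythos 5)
You claim there exists a data-independent pool $F\subseteq\HH$ of size $2^{2^{O(d)}}$ such that every realizable $T$-round stream is correctly labeled by some $f\in F$. This is false for infinite Littlestone classes, which are precisely the classes this theorem is about. Take the classic counterexample $\X=\mathbb{N}$ and $\HH=\{h_i : i\in\mathbb{N}\}$ with $h_i=\I[\cdot=i]$, which has $\ld(\HH)=1$. The stream $((j,1))$ is realizable for every $j\in\mathbb{N}$, and no $f\in F\subseteq\HH$ with $F$ finite can cover them all. More generally, there is no a~priori finite sub-cover of an infinite Littlestone class that realizes all realizable streams --- if there were, private (and even non-private) online learning would reduce trivially to finite experts and the whole subject would be much easier. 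The double exponential in Golowich--Livni and in this paper is \emph{not} the size of a fixed pool; it is the number of leaves in a forest of $\soa$-tournaments whose internal hypotheses are built \emph{from the data stream itself}. That data-dependence is exactly what makes the problem hard under privacy, and especially under adaptivity: the candidate hypotheses are functions of the private inputs and of the learner's own random choices, so the adversary's future data can correlate with the internal state. Your proposal assumes this difficulty away.

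\textbf{On Step 3.} The martingale argument you sketch is aimed at the wrong obstacle. $\mathsf{AboveThreshold}$/SVT utility already holds verbatim against adaptively chosen queries (the paper invokes it in precisely that form, Theorem~\ref{thm:abovethreshold}, and so does essentially every treatment of SVT), so no new concentration argument is needed there. The genuine adaptivity obstacles, identified in Section~\ref{sec:realizable}, are (i) that revealing hypotheses leaks which random slot each example was inserted into, which breaks the random-permutation argument that guarantees a frequent hypothesis exists, and (ii) that the labeling function $h^\star$ is chosen only at the end of the game, so one cannot fix it in advance when arguing the tournament examples are correct with probability $1/2$. The paper's remedies --- the lazy, layer-at-a-time update followed by a fresh random permutation, and a Sauer-lemma uniform-convergence argument over the finite set of data points appearing in the forest --- are tailored to these two issues and have no analogue in your sketch. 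A correct proof would need to engage with the data-dependent tournament construction and with these two specific adaptivity failures, not with SVT concentration.
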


This result improves upon the previous $\tilde{O}_d(\sqrt{T})$ upper bound established by~\citet{golowich2021littlestone} and addresses an open question they posed. As noted, the logarithmic dependence on $T$ is optimal. However, same as their algorithm, our approach exhibits a double exponential dependence on $d$, which is significantly worse than the non-private case~\citep{ghazi2021sample}.

We next turn to the agnostic setting. For general Littlestone classes, we show that it is possible to achieve an $\tilde{O}(\sqrt{T})$ regret, which is comparable to the non-private case in terms of $T$.

\begin{theorem}
\label{thm:agnostic}
    Let $\HH$ be a concept class with Littlestone dimension $d$. Then there exists an $(\varepsilon,\delta)$-differentially private online learner for $\HH$ with an expected regret of $\tilde{O}(d\sqrt{T}/\varepsilon) + \tilde{O}_d(T^{1/3}/\varepsilon^{2/3})$
    against any adaptive adversary in the agnostic setting. When the adversary is oblivious, the regret can be further reduced to $O(\sqrt{dT\log T}) + \tilde{O}_d(T^{1/3} / \varepsilon^{2/3}).$
\end{theorem}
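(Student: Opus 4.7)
The plan is to reduce agnostic online learning of a Littlestone class to differentially private prediction with experts, and then apply a DP experts aggregator. For a class $\HH$ of Littlestone dimension $d$, a Sauer--Shelah-type bound for Littlestone dimension implies that the number of distinct label sequences $(h(x_1),\ldots,h(x_T))$ realizable by some $h\in\HH$ on any fixed instance sequence is at most $N=T^{O(d)}$; these sequences can be tracked online via the standard optimal algorithm / Littlestone tree structure. Each realizable labeling induces an ``expert'' that simply predicts the corresponding label at each round, and the expert matching the optimal $h^\star\in\HH$ makes exactly $\mathrm{OPT}$ mistakes on the observed stream. An alternative, more privacy-friendly realization is to use instances of the realizable private learner from Theorem~\ref{thm:realizable} as experts, each fed a different ``denoised'' label stream, so that for the correct noise pattern the learner sees a realizable sequence and inherits the $O_d(\log T/\varepsilon)$ mistake bound.

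Over the expert set I would then run a differentially private prediction-with-experts algorithm. For an oblivious adversary, standard exponential weights with output-level privatization (e.g., the exponential mechanism at each round, composed via advanced composition / RDP) gives regret $O(\sqrt{T\log N}) = O(\sqrt{dT\log T})$ against the best expert, plus an additive $\tilde{O}_d(T^{1/3}/\varepsilon^{2/3})$ cost from the usual privacy--utility tradeoff for private experts (as in Jain--Kothari--Thakurta or Asi--Feldman--Talwar style analyses). For an adaptive adversary, I would instead use a private experts algorithm whose regret guarantees extend to adaptive inputs, such as DP Follow-the-Perturbed-Leader or a private multiplicative weights scheme with binary-tree noise injection, yielding regret $\tilde{O}(\sqrt{T}\log N/\varepsilon) = \tilde{O}(d\sqrt{T}/\varepsilon)$. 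Adding the aggregator's regret against the best expert to $\mathrm{OPT}$, together with the $\tilde{O}_d(T^{1/3}/\varepsilon^{2/3})$ additive privacy term, matches the claimed bounds in both adversary models; overall privacy follows by post-processing and composition, since the aggregator accesses the stream only through the experts and through its own private selection rule.

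The main obstacle will be the adaptive case. Because the expert set is revealed only online and the adversary can probe the aggregator's outputs to learn which experts are currently favored, the private aggregator must hide not only the raw per-expert losses but also the evolving identity of the ``active'' experts along the Littlestone tree. Composing the privacy of the aggregator with the privacy of the underlying realizable learners (if one follows the second construction above), while preserving the adaptive-adversary guarantee of Theorem~\ref{thm:realizable}, will require the binary-tree / sparse-vector machinery typical of DP online learning; this is also where the $\tilde{O}_d(T^{1/3}/\varepsilon^{2/3})$ rate arises, from balancing noise magnitude against batching frequency.
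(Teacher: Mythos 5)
Your high-level scheme---reduce agnostic Littlestone learning to experts plus a DP online-prediction-from-experts aggregator, with $N = T^{O(d)}$ experts and $\log N = O(d\log T)$---is exactly the skeleton the paper uses. However, both concrete expert constructions you propose have privacy gaps that your post-processing argument does not close, and the fix is the main technical content of the paper's proof, which you do not supply.

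First, the Ben-David--P\'al--Shalev-Shwartz experts (enumerate the $\le d$ rounds at which $\soa$ errs, flip labels there, and run $\soa$ on the corrected stream) are \emph{data-dependent objects}: expert $E$'s prediction at round $t$ is a function of the raw examples $(x_1,\dots,x_{t-1})$. A DP-OPE aggregator privatizes the \emph{index} $i_t$ it selects, but the learner's actual output hypothesis is $h_t = (\text{expert } i_t\text{'s prediction at round }t)$, which still depends on the raw data through the expert, not only through the aggregator. This is precisely why the paper cannot ``post-process'' its way out and must instead release sanitized data once via a binary-mechanism/interval scheme and construct the experts purely from the sanitized stream. Your second alternative---run $T^{O(d)}$ independent copies of the Theorem~\ref{thm:realizable} learner, one per candidate noise pattern---repairs that leak for each expert individually, but now you have $T^{O(d)}$ private mechanisms each touching the raw data, and composition forces $\varepsilon$ to be scaled down by a factor on the order of $T^{O(d)}$ (or $\sqrt{T^{O(d)}}$ with advanced composition), which destroys the bound.

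Second, a subtlety you skip over: even after deciding to sanitize, one cannot simply run $\soa$ on sanitized data, because $\soa$'s intermediate hypotheses need not lie in a class whose disagreements with $\HH$ can be sanitized with controlled error. The paper replaces $\soa$ with the Hanneke--Livni--Moran online learner, whose outputs are always majorities of $O(\vc^\star(\HH))$ concepts in $\HH$, so that all the ``compare expert to $h\in\HH$'' statistics live in a bounded-Littlestone-dimension class $\HH_{m,1/2}\oplus\HH$ that can be sanitized. Without this substitution (or an equivalent device), the sanitization step has no VC/Littlestone control, and the $\tilde O_d(T^{1/3}/\varepsilon^{2/3})$ term has no derivation; your explanation of that term as ``balancing noise against batching frequency'' is not a proof and does not identify the actual source (the $\tilde O_d(1/\alpha^{1.5})$ realizable-sanitizer sample complexity, amplified by $\log T$ binary-tree intervals). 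Your final paragraph correctly flags that something in the adaptive case is hard, but the gap is already present in the oblivious case and in both of your constructions; it is a privacy-of-the-experts problem, not merely an adaptivity problem.
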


As previously discussed, the results of~\citet{golowich2021littlestone} can be interpreted as an equivalence between non-private and private online learning in the realizable setting. The above conclusion generalizes this equivalence to the agnostic setting. Moreover, for an oblivious adversary, the resulting regret matches the best known non-private constructive algorithm~\citep{hanneke2021online} when $\varepsilon \ge \tilde{\Omega}_d(1 / T^{1/4})$. Such a ``privacy is free'' phenomenon has been widely observed by previous works on private OPE (e.g.,~\citep{asi2023private,asi2024private}). Our result can be viewed as extending this to the nonparametric setting where the class can be infinite (but has finite Littlestone dimension).

\subsection{Related Work}

The investigation of private learning in the PAC framework~\citep{valiant1984theory} was pioneered by~\citet{kasiviswanathan2011can}. Following this, a series of studies aimed to characterize the learnability and sample complexity of learning generic concept classes under DP~\citep{beimel2010bounds,beimel2019characterizing,feldman2014sample,beimel2016private,alon2019private,ghazi2021sample,alon2022private}.~\citet{beimel2019characterizing} demonstrated that, under pure DP, the sample complexity is tightly determined by a measure called the representation dimension. For approximate DP, it was found that learnability is characterized by the Littlestone dimension~\citep{alon2019private,bun2020equivalence,alon2022private}. However, a substantial gap persists between the upper and lower bounds concerning sample complexity~\citep{alon2019private,ghazi2021sample}.

\citet{golowich2021littlestone}'s work extended private learning to the online model. Building upon the method of~\citet{bun2020equivalence} for private PAC learning, they proposed algorithms that attain mistake bounds sublinear in the time horizon $T$. For the lower bound, several works~\citep{cohen2024lower,dmitriev2024growth,li2024limits} discovered that $\Omega(\log T)$ mistakes are necessary under DP. This finding highlights a notable discrepancy between private and non-private settings, as the mistake bound does not grow with $T$ without privacy~\citep{littlestone1988learning}. Whether a stronger separation holds was questioned by~\citet{sanyal2022open}.

The problem of private online learning generic concept classes is also closely related to private online prediction from experts (OPE), which has been extensively studied in the literature~\citep{dwork2010differential,smith2013nearly,jain2014near,agarwal2017price,asi2023near,asi2023private,asi2024private}. While DP-OPE algorithms can be directly applied to finite concept classes, they are not suitable for infinite concept classes with finite Littlestone dimension, which are the focus of this article. Another related problem is private online prediction studied by~\citet{kaplan2023black}, where the learner only releases a single bit representing the prediction result for the current data point. Under this weaker model, they achieved a better mistake bound compared to the results in~\citep{golowich2021littlestone} (in the stronger online learning model) in terms of the Littlestone dimension.
\section{Preliminaries}
We provide some background on online learning, differential privacy, and sanitization in this section.

\subsection{Online Learning}
Online learning can be modeled as a sequential game played between a learner and an adversary. Let $\HH\subseteq \{0, 1\}^\X$ be a concept class over some domain $\X$ and $T$ be an integer indicating the total number of rounds, both of which are known to the learner and the adversary. At each round $t\in[T]$, the learner outputs some hypothesis $h_t:\X\to \{0,1\}$ while at the same time the adversary selects an example $z_t=(x_t, y_t)\in\X\times\{0, 1\}$ and presents it to the learner. The performance of the learner is measured by the \emph{regret}, which is the difference between the number of mistakes made by the learner and by the optimal concept in $\HH$ (in hindsight), defined as

\begin{equation*}
    \sum_{t=1}^T\I[h_t(x_t)\neq y_t] - \min_{h^\star\in\HH}\sum_{t=1}^T\I[h^\star(x_t)\neq y_t].
\end{equation*}

The above scenario is referred to as the \emph{agnostic} setting, where there are no restrictions on the data generated by the adversary. This is in contrast to the \emph{realizable} setting, where there is some $h^\star\in\HH$ such that $y_t=h^\star(x_t)$ for every $t\in[T]$. In this case, the regret is also called the \emph{mistake bound}, as it simply counts the number of mistakes made by the learner. A learner is \emph{proper} if it always outputs $h_t\in\HH$ for every $t\in[T]$. Otherwise we say the learner is \emph{improper}.

We consider two variants of adversaries according to their ability of choosing examples: \emph{oblivious} and \emph{adaptive} adversaries. An oblivious adversary can only determine the entire data sequence before interacting with the learner. That is, the data are independent of the learner's internal randomness. In contrast, an adaptive adversary can decide $(x_t,y_t)$ after observing the learner's output history $(h_1,\dots, h_{t-1})$. Note that in the realizable setting, the adversary does not have to fix in advance an $h^\star$ that labels all the data but just needs to ensure the set $\{(x_1,y_1),\dots,(x_T,y_T)\}$ is consistent with some $h^\star\in\HH$ at the end of the game. Clearly, an adaptive adversary is more powerful and makes it harder to design an effective learning algorithm.

A learner is considered effective if it always attains a sublinear (i.e., $o(T)$) expected regret. We say a concept class $\HH$ is online learnable if there exists such a learner for $\HH$. Without privacy, online learnability is characterized by the Littlestone dimension~\citep{littlestone1988learning,ben2009agnostic}.

\begin{definition}[Shattered Tree]
    An $\X$-valued tree of depth $n$ is a complete binary $\T$ of depth $n$ (i.e, the number of vertices on any root-to-leaf path is $n$) whose vertices are labeled by elements from $\X$. Every vertex located at the $t$-th layer of $\T$ can be identified by a binary sequence $(y_1,\dots, y_{t-1})\in\{0, 1\}^{t-1}$ such that it can be reached by starting from the root, then moving to the left child if $y_i=0$ and to right child if otherwise $y_i=1$ at step $i\in[t-1]$. For every $t\in[n]$, define $\T_t:\{0, 1\}^{t-1}\to \X$ be the mapping from every sequence $(y_1,\dots, y_{t-1})\in\{0, 1\}^{t-1}$ to the label of the vertex it identifies. We say $\T$ is shattered by $\HH$ if for every $(y_1,\dots, y_n)\in\{0,1\}^n$, there exists $h\in\HH$ such that
    \begin{equation*}
        \forall t\in[n],~h(\T_t(y_1,\dots, y_{t-1})) = y_t.
    \end{equation*}
\end{definition}

\begin{definition}[Littlestone Dimension]
    The Littlestone dimension of a concept class $\HH$ over $\X$, denoted by $\ld(\HH)$, is the largest $d$ such that there is $\X$-valued tree $\T$ of depth $d$ shattered by $\HH$.
\end{definition}

One can also view $\X$ as a concept class over domain $\HH$ by defining $x(h) = h(x)$ for any $x\in\X$ and $h\in\HH$. This class $\X$ is called the dual class of $\HH$. The dual Littlestone dimension of $\HH$, denoted by $\ld^\star(\HH)$, is defined as the Littlestone dimension of the dual class $\X$.

In the realizable setting, it was shown by~\citet{littlestone1988learning} that the best attainable mistake bound is exactly $\ld(\HH)$ for deterministic learners.\footnote{For randomized learners, the optimal expected mistake bound is equal to the randomized Littlestone dimension of $\HH$~\citep{filmus2023optimal}, which is between $\ld(\HH)/2$ and $\ld(\HH)$.} The mistake bound is achieved by an algorithm called the \emph{Standard Optimal Algorithm} ($\soa$) that makes at most $\ld(\HH)$ mistakes on any realizable sequence. Like the work of~\citet{golowich2021littlestone}, we will access the $\soa$ as a black box and our algorithm only relies on the fact that the $\soa$ has a mistake bound of $\ld(\HH)$.

We next introduce the online prediction from experts (OPE) problem. In this problem, there are $N$ experts. At each round $t$, the algorithm chooses an expert $i_t\in[N]$ while the adversary chooses a loss function $\ell_t:[N]\rightarrow [0, 1]$. Then the function $\ell_t$ is released to the algorithm and a loss of $\ell_t(i_t)$ is incurred. The regret of the algorithm is defined as
\begin{equation*}
    \sum_{t = 1}^T \ell_t(i_t) - \min_{i\in[N]}\sum_{t=1}^T\ell_t(i).
\end{equation*}
Similar to online learning, an oblivious adversary can only choose $(\ell_1,\dots,\ell_T)$ at the very beginning while an adaptive adversary can choose $\ell_t$ after seeing $(i_1,\dots, i_{t-1})$.

\subsection{Differential Privacy}

We start by recalling the classical notion of differential privacy. Let $\Z$ be some data domain ($\Z = \X\times \{0, 1\}$ in online learning). Let $S = (z_1,\dots,z_T)\in\Z^T$ and $S' = (z_1',\dots,z_T')\in\Z^T$ be two data sequences of length $T$. We say $S$ and $S'$ are neighboring if they differ in at most one entry, i.e., there exists some $i$ such that $z_j = z_j'$ for all $j\in[T]\setminus\{i\}$.

\begin{definition}[Differential Privacy]
    A randomized algorithm $\A$ is $(\varepsilon,\delta)$-differentially private if for any pair of neighboring data sequences $S,S'\in\Z^T$ and any set $O$ of outputs, we have
    \begin{equation*}
        \Pr[\A(S)\in O] \le e^\varepsilon \Pr[\A(S')\in O] + \delta. 
    \end{equation*}
\end{definition}

The above standard definition of differential privacy cannot capture the scenario that the data sequence is adaptively generated by an adversary. We next rigorously define differential privacy in the presence of adaptive inputs following the formulation of~\citet{jain2023price}. Consider a $T$-round game played between an algorithm $\A$ and an adversary $\B$, where $\A$ presents some $h_t$ to $\B$ and receives some data $z_t$ from $\B$ at every round $t$. The adversary $\B$ can (adaptively) choose one special round $t^\star\in[T]$. At this round, $\B$ generates two data points $z_{t^\star}^{(0)}$ and $z_{t^\star}^{(1)}$. Then $z_{t^\star}^{(b)}$ will be sent to $\A$, where $b\in\{0, 1\}$ is some global parameter that is \emph{unknown} to both $\A$ and $\B$. Let $\Pi_{\A,\B}(b)$ denote $\B$'s view of the game, including $(h_1,\dots, h_T)$ and the internal randomness of $\B$. To ensure privacy, we require that $\B$ is unlikely to tell the value of $b$, formalized as follows.

\begin{definition}[Differential Privacy with Adaptive Inputs]
    A randomized algorithm $\A$ is $(\varepsilon,\delta)$-differentially private if for any adversary $\B$ and any set $O$ of views, we have
    \begin{equation*}
        \Pr[\Pi_{\A,\B}(0)\in O]\le e^\varepsilon\Pr[\Pi_{\A,\B}(1)\in O] + \delta.
    \end{equation*}
\end{definition}

Following the common treatment of privacy parameters in private learning~\citet{dwork2014algorithmic,bun2020equivalence}, we will assume throughout this article that $\varepsilon$ is at most some small constant (say, $0.1$) and $\delta$ is significantly smaller than the reciprocal of the time horizon (i.e., $\delta = T^{-\omega(1)}$). We say a concept class $\HH$ is privately online learnable in the realizable (or agnostic) setting if there is an $(\varepsilon,\delta)$-differentially private algorithm that attains a sublinear \emph{expected} mistake bound (or regret) with $\varepsilon\le 0.1$ and $\delta = T^{-\omega(1)}$.

We next present some useful tools to achieve differential privacy. The first is the $\mathsf{AboveThreshold}$ mechanism~\citep{dwork2009complexity}. Given a sequence of sensitivity-$1$ data point, the $\mathsf{AboveThreshold}$ mechanism allows us to privately monitor whether the cumulative sum exceeds some threshold.

\begin{theorem}[\citep{dwork2009complexity,dwork2014algorithmic}]
\label{thm:abovethreshold}
    Let $T$ be the time horizon, $\varepsilon$ be the privacy parameter, and $\tau$ be some threshold value. There exists an $(\varepsilon, 0)$-differentially private algorithm $\mathsf{AboveThreshold}$ that at each round $t\in[T]$ receives some $b_t\in[0,1]$ (may be chosen adaptively) and responds an $a_t\in\{\top,\bot\}$ such that with probability $1-\beta$, we have:
    \begin{itemize}
        \item For all $a_t =\top$, it holds that $\sum_{i=1}^t b_i \ge \tau - \frac{8(\ln T + \ln (2/\beta))}{\varepsilon}$.
        \item For all $a_t =\bot$, it holds that $\sum_{i=1}^t b_i \le \tau + \frac{8(\ln T + \ln (2/\beta))}{\varepsilon}$.
    \end{itemize}
\end{theorem}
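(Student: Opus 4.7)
The mechanism I would analyze is the textbook noisy-threshold construction. Since each increment satisfies $b_t\in[0,1]$, the cumulative-sum queries $S_t:=\sum_{i=1}^t b_i$ have sensitivity $1$ with respect to changing any single $b_i$, so $S_t$ fits the standard $\mathsf{AboveThreshold}$ template. I would draw one threshold noise $\rho\sim\lap(2/\varepsilon)$ at the start and set $\hat\tau:=\tau+\rho$; then at each round $t$ I would draw a fresh query noise $\nu_t\sim\lap(4/\varepsilon)$ and output $a_t=\top$ if $S_t+\nu_t\geq\hat\tau$ and $a_t=\bot$ otherwise. Because $S_t$ is nondecreasing in $t$ (all $b_t\geq 0$), once the procedure first outputs $\top$ the algorithm can halt and echo $\top$ at every subsequent round without violating the stated accuracy.

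For privacy I would follow the canonical $\mathsf{AboveThreshold}$ coupling. Fix any output transcript of the form $(\bot,\ldots,\bot,\top)$ with the first $\top$ at position $k$. Conditioned on $\rho$ and on the already-released $\bot$'s, the adversary's queries are deterministic, so it suffices to bound the joint density ratio of the noise variables between two neighboring input sequences producing identical transcripts. The coupling shifts $\rho\to\rho+1$ and $\nu_k\to\nu_k+2$: since $|S_t-S_t'|\leq 1$, the first shift preserves every $\bot$-event, and the two shifts together preserve the single $\top$-event at round $k$. The Laplace density ratios then contribute a factor $e^{\varepsilon/2}$ from $\rho$ (scale $2/\varepsilon$, shift $1$) and $e^{\varepsilon/2}$ from $\nu_k$ (scale $4/\varepsilon$, shift $2$), giving the required bound $e^{\varepsilon}$ with $\delta=0$.

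For accuracy I would apply a union bound over the $T+1$ Laplace draws, which gives $|\rho|,\max_t|\nu_t|\leq 4(\ln T+\ln(2/\beta))/\varepsilon$ simultaneously with probability at least $1-\beta$. On that event, whenever $a_t=\top$ we have $S_t\geq\hat\tau-\nu_t\geq\tau-8(\ln T+\ln(2/\beta))/\varepsilon$, and whenever $a_t=\bot$ we have $S_t\leq\hat\tau-\nu_t\leq\tau+8(\ln T+\ln(2/\beta))/\varepsilon$, exactly matching the theorem.

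The one step I expect to need care on is that the preliminaries use the stronger \emph{adaptive-input} notion of differential privacy, while textbook $\mathsf{AboveThreshold}$ is usually stated against a fixed query stream. The coupling above still closes in the adaptive model because the adversary's view before round $k$ is a deterministic function of the released $\bot$'s, which are identical in the two coupled runs; so the transcript-level inequality required by the adaptive definition reduces exactly to the noise-variable density ratio bounded above. Making this reduction explicit round by round is the only nonroutine piece of the argument I would write out in detail.
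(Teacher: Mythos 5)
The paper does not prove Theorem~\ref{thm:abovethreshold}; it is stated as a cited black box from~\citet{dwork2009complexity,dwork2014algorithmic}, so there is no in-paper proof to compare against. Your proposal is a correct reconstruction of the standard Sparse Vector Technique argument and matches the construction the cited references use. The reduction from the streaming formulation here (a sequence $b_t\in[0,1]$) to the query formulation (cumulative sums $S_t$ with sensitivity $1$) is exactly right, the noise scales $\lap(2/\varepsilon)$ and $\lap(4/\varepsilon)$ are the canonical ones, and the coupling $\rho\mapsto\rho+1$, $\nu_k\mapsto\nu_k+2$ with the $e^{\varepsilon/2}\cdot e^{\varepsilon/2}$ density-ratio budget is the correct privacy argument. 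Your accuracy calculation (union bound over $T+1$ Laplace tails, then $|\rho|+|\nu_t|\le 8(\ln T+\ln(2/\beta))/\varepsilon$) recovers the stated constant.

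Two points worth flagging since this mechanism has a history of subtly wrong variants (cf.\ the survey of Lyu, Su, and Li, 2017). First, your coupling is valid precisely because you draw the threshold noise $\rho$ \emph{once} at initialization and never refresh it; variants that resample $\rho$ after each $\top$ or add noise on both sides of the comparison are not $(\varepsilon,0)$-DP at these scales. Second, you are right that the theorem is being invoked under the adaptive-input definition of privacy from the preliminaries, and your round-by-round argument is the standard resolution: in the coupled pair of runs the released transcript up to round $t$ is forced to be identical, hence the adversary's choice of $b_{t+1}$ (a deterministic function of its internal randomness and that transcript) is also identical, so the only place the two runs can differ is the single designated round $t^\star$, and the whole comparison collapses to the noise-density ratio you bounded. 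You also correctly note that since $b_t\ge 0$ makes $S_t$ nondecreasing, echoing $\top$ after the first $\top$ preserves the accuracy guarantee for all later rounds at no privacy cost; this matches how Algorithm~\ref{alg:realizable} actually uses the primitive (it halts the current instance at the first $\top$ and starts a fresh one). In short, the proposal is correct and is essentially the proof the cited references give.
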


For a dataset $S=(z_1,\dots,z_n)$, let $\cnt_S(z)$ denote the number of occurrence of $z$ in $S$, i.e., $\cnt_S(z) = \sum_{i\in[n]}\I[z_i = z]$. We can use the $\mathsf{PrivateHistogram}$ algorithm to privately publish $\cnt_S$. The problem was studied in the context of sanitization in~\citep{beimel2016private,bun2019simultaneous}. Here we adopt the algorithm from~\citep{aliakbarpour2024differentially} as the resulting error bound is easier to work with.

\begin{theorem}[Private Histogram~\citep{aliakbarpour2024differentially}]
\label{thm:histogram}
    Let $S$ be a dataset over $\Z$. There exists an $(\varepsilon,\delta)$-differentially private algorithm that outputs a function $\overline{\cnt}_S:\Z\to\R$ such that with probability $1$ we have
    \begin{equation*}
        \sup_{z\in\Z}\left\lvert\overline{\cnt}_S(z) - \cnt_S(z)\right\rvert \le \frac{8\ln(8/\delta)}{\varepsilon}.
    \end{equation*}
\end{theorem}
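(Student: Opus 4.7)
The plan is to design a mechanism based on truncated Laplace noise combined with a stability-based release threshold, so that the error bound holds with probability $1$ rather than merely with high probability. I will fix $A = 4\ln(8/\delta)/\varepsilon$ and aim for a total deviation of $2A = 8\ln(8/\delta)/\varepsilon$. For each element $z$ that actually appears in $S$ (a finite collection), I would sample an independent noise $\eta_z$ from a Laplace distribution of scale $1/\varepsilon$ conditioned to the interval $[-A, A]$, form the noisy count $\hat{c}_z = \cnt_S(z) + \eta_z$, and release $\overline{\cnt}_S(z) = \hat{c}_z$ when $\hat{c}_z \ge A$ and $0$ otherwise; for $z\in\Z$ not appearing in $S$, I would set $\overline{\cnt}_S(z) = 0$ directly. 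The truncation enforces $|\eta_z|\le A$ deterministically, which is what drives the probability-$1$ claim, while the threshold handles privacy for elements present in $S$ but absent in a neighboring dataset $S'$.

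For privacy, I would argue that the untruncated Laplace mechanism is already $(\varepsilon, 0)$-DP per count; truncating at $A$ introduces a failure probability bounded by the Laplace tail $e^{-\varepsilon A}/2 \le \delta/16$, so the release of each count becomes $(\varepsilon, \delta/4)$-DP. The threshold $A$ handles the remaining case in which $z$ has count $0$ on one side and $1$ on the other: the probability that $\hat{c}_z$ crosses $A$ from a true count of $0$ is another Laplace tail of size at most $\delta/16$. Since the count of each $z$ depends on a disjoint subset of records, parallel composition combined with a union bound over the at most two counts affected by a single record change yields $(\varepsilon, \delta)$-DP overall.

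For utility, I would run a short three-case analysis: if $z\notin S$, both the output and the true count are $0$; if $z\in S$ with $\hat{c}_z\ge A$, then $|\overline{\cnt}_S(z) - \cnt_S(z)| = |\eta_z|\le A$; and if $z\in S$ with $\hat{c}_z < A$, then the output is $0$ and the true count satisfies $\cnt_S(z) = \hat{c}_z - \eta_z < A + A = 2A$. In every case the deviation is at most $2A = 8\ln(8/\delta)/\varepsilon$ deterministically, which is the required bound. The main obstacle will be simultaneously calibrating the truncation parameter and the release threshold so that the two independent sources of $\delta$-loss --- the truncated-Laplace tail and the threshold crossings --- each fit within a constant fraction of the target $\delta$ budget without inflating the final constant $8$ in the error bound; the choice $A = 4\ln(8/\delta)/\varepsilon$ aligns both calibrations via a single Laplace tail estimate, after which the deterministic utility bound is immediate from the case analysis.
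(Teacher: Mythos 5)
This is an imported result --- the paper cites \citet{aliakbarpour2024differentially} and does not reprove it --- so there is no internal proof to compare against. Your construction (threshold plus truncated Laplace noise on each occurring symbol, release only when the noisy count clears the threshold) is the standard stability-based private histogram and is consistent with the technique in the cited work. The utility case analysis is correct: truncation makes $|\eta_z|\le A$ deterministic, so the worst case (release suppressed with true count $\cnt_S(z)=\hat c_z-\eta_z<2A$) gives the $2A=8\ln(8/\delta)/\varepsilon$ bound with probability $1$.

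The privacy step has a real gap. You calibrate the Laplace noise at scale $1/\varepsilon$ and then argue ``parallel composition over the at most two counts affected by a single record change yields $(\varepsilon,\delta)$-DP.'' Parallel composition does not apply here: under the paper's swap model of neighboring datasets (same length, one entry replaced), a single change moves a record from one bucket to another, so \emph{two} count mechanisms each see a unit change. Parallel composition requires that a neighboring move touch only one block; when two blocks are touched you are doing sequential composition over those two, which yields $(2\varepsilon,\cdot)$-DP, not $(\varepsilon,\cdot)$-DP. The fix is simple and does not harm your utility claim: use Laplace scale $2/\varepsilon$ so that each per-bucket release is $(\varepsilon/2,\cdot)$-DP, and then the two affected buckets compose to $(\varepsilon,\cdot)$-DP. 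With $A=4\ln(8/\delta)/\varepsilon$ the tail at scale $2/\varepsilon$ is $e^{-\varepsilon A/2}=(\delta/8)^2$, which is still comfortably smaller than the needed $\delta$ fractions, so both the truncation failure mass and the threshold-crossing mass for a bucket with count $1$ in $S$ versus $0$ in $S'$ fit in budget, and the error bound $2A=8\ln(8/\delta)/\varepsilon$ is unchanged. With that correction your argument is sound.
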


\subsection{Sanitization}

Let $S = (x_1,\dots, x_n)\in\X^n$ be a dataset. For any $h\in\HH$, define $\hat{P}_S(h) = \frac{1}{n}\sum_{i=1}^nh(x_i)$. The task of sanitization is to estimate $\hat{P}_S(h)$ for every $h\in\HH$.

\begin{definition}[\citep{blum2013learning,beimel2016private}]
\label{def:sanitizer}
    Let $\HH$ be a concept class over $\X$. An $(\alpha, \beta)$-sanitizer for $\HH$ takes as input a dataset $S\in X^n$ and outputs a function $\est:\HH \to[0, 1]$ such that with probability $1-\beta$ it holds that $\sup_{h\in\HH}\lvert\est(h) - \hat{P}_S(h)\rvert \le \alpha$.
\end{definition}

Note that in the above definition we only require the sanitizer to output a function rather than a sanitized dataset. But one can always use it to generate a synthetic dataset by finding an $S'$ such that $\lvert\est(h) - \hat{P}_{S'}(h)\rvert \le \alpha$ for all $h\in\HH$. With probability $1-\beta$, such an $S'$ is guaranteed to exist since the input $S$ satisfies this property. By the triangle inequality, we have $\lvert \hat{P}_{S'}(h) - \hat{P}_{S}(h)\rvert \le 2\alpha$. Therefore, we can also assume a sanitizer directly outputs a sanitized dataset with error $2\alpha$.

Sometimes we may want to sanitize a labeled dataset $S\in(\X\times \{0,1\})^n$ with respect to an extended class $\HH^{\mathrm{label}} = \{h^{\mathrm{label}}:h\in\HH\}$ over $\X\times \{0, 1\}$, where $h^{\mathrm{label}}:\X\times\{0, 1\}\rightarrow \{0, 1\}$ is the predicate indicating whether $h$ makes an error, i.e., $h^{\mathrm{label}}((x, y)) = \I[h(x)\neq y]$. The following lemma demonstrates that a sanitizer for $\HH$ can be converted to one for $\HH^{\mathrm{label}}$.

\begin{lemma}[\citep{bousquet2020synthetic}]
\label{lem:convert_label}
    Suppose there is an $(\varepsilon,\delta)$-differentially private $(\alpha,\beta)$-sanitizer for $\HH$ with input size $n$. Then there exists an $(O(\varepsilon), O(\delta))$-differentially private $(O(\alpha),O(\beta))$-sanitizer for $\HH^{\mathrm{label}}$ with input size $n$ as long as $n \ge C\ln (1/\beta) / \varepsilon\alpha$ for some constant $C$.
\end{lemma}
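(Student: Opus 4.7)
The plan is to convert the $\HH$-sanitizer into an $\HH^{\mathrm{label}}$-sanitizer by splitting the labeled dataset according to its label and running the given sanitizer twice. The crucial identity, obtained by case analysis on $y \in \{0,1\}$ using $\I[h(x) \ne y] = h(x) \oplus y$, is
\begin{equation*}
\hat{P}_S(h^{\mathrm{label}}) \;=\; \frac{n_0}{n}\,\hat{P}_{S_0^X}(h) \;+\; \frac{n_1}{n}\,\bigl(1 - \hat{P}_{S_1^X}(h)\bigr),
\end{equation*}
where $n_b = |\{i : y_i = b\}|$ and $S_b^X$ denotes the $\X$-valued subsample $(x_i : y_i = b)$. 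It therefore suffices to privately estimate, uniformly over $h \in \HH$, the two empirical probabilities $\hat{P}_{S_b^X}(h)$ together with the two counts $n_0, n_1$.

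Concretely, I would (i) run the given $(\varepsilon,\delta)$-DP $(\alpha,\beta)$-sanitizer on each of $S_0^X$ and $S_1^X$ with privacy budget scaled down by a constant factor to obtain $\est_0,\est_1 : \HH \to [0,1]$; (ii) release noisy counts $\tilde n_0, \tilde n_1$ via the Laplace mechanism with scale $O(1/\varepsilon)$, so that $|\tilde n_b - n_b| = O(\log(1/\beta)/\varepsilon)$ with probability $1-O(\beta)$; and (iii) output
\begin{equation*}
\est(h^{\mathrm{label}}) \;:=\; \frac{\tilde n_0}{n}\est_0(h) \;+\; \frac{\tilde n_1}{n}\bigl(1-\est_1(h)\bigr).
\end{equation*}
For privacy, observe that replacing a single entry of $S$ alters each of $S_0^X, S_1^X$ by at most one element and each of $n_0, n_1$ by at most one, so basic composition of the four mechanisms yields $(O(\varepsilon),O(\delta))$-DP overall.

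For accuracy, combining the decomposition above with the triangle inequality and a union bound gives, uniformly in $h$,
\begin{equation*}
\bigl|\est(h^{\mathrm{label}}) - \hat{P}_S(h^{\mathrm{label}})\bigr| \;\le\; \frac{|\tilde n_0 - n_0| + |\tilde n_1 - n_1|}{n} + \frac{n_0}{n}\alpha + \frac{n_1}{n}\alpha \;\le\; 2\alpha + O\!\left(\frac{\log(1/\beta)}{\varepsilon\, n}\right).
\end{equation*}
The sample-size hypothesis $n \ge C\ln(1/\beta)/(\varepsilon\alpha)$ is precisely what is needed to absorb the second term into $O(\alpha)$, yielding the required accuracy guarantee after rescaling $\alpha$ by a constant.

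The main obstacle is bookkeeping around the fact that the given sanitizer is stated for input size $n$, whereas it is invoked on the subsamples $S_b^X$ of sizes $n_b \le n$, and that a single edit of $S$ in which the label flips induces an add/remove modification on $S_b^X$ rather than a swap. I would handle both issues uniformly by padding each $S_b^X$ up to size $n$ with a fixed canonical element $x_0 \in \X$: this restores swap-one neighboring (so privacy composes cleanly via Theorem~\ref{thm:histogram}-style arguments) and shifts the empirical mean by a term that can be recovered exactly, since the number of pad symbols $n - n_b$ is itself being estimated by $\tilde n_b$. Verifying that these padding corrections keep both parameters within constant factors is the only delicate step; all other calculations are routine.
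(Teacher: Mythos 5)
Your proof is correct; note that the paper itself does not prove Lemma~\ref{lem:convert_label} but cites it from \citet{bousquet2020synthetic}, and the split-by-label strategy you use (sanitize $S_0^X$ and $S_1^X$ separately, add a noisy count, and recombine via $\hat{P}_S(h^{\mathrm{label}}) = \tfrac{n_0}{n}\hat{P}_{S_0^X}(h) + \tfrac{n_1}{n}(1-\hat{P}_{S_1^X}(h))$) is the standard route. The one place to be careful, which you flag but do not fully resolve, is the neighboring relation for the padded subsamples: if you form $S_b^X$ by concatenating the $x_i$ with $y_i=b$ in order and then pad at the end, a single label flip in $S$ removes one interior element and shifts the rest, which is not a swap-neighbor. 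The clean fix is to fix the positions up front, setting the $i$-th entry of $S_b^{X,\mathrm{pad}}$ to $x_i$ if $y_i=b$ and to the public pad $x_0$ otherwise; then any single edit of $S$ alters each padded sequence in exactly one coordinate, basic composition gives $(O(\varepsilon),O(\delta))$-DP, and since $\hat{P}_{S_b^{X,\mathrm{pad}}}(h) = \tfrac{n_b}{n}\hat{P}_{S_b^X}(h) + \tfrac{n-n_b}{n}h(x_0)$ with $h(x_0)$ public, your error accounting goes through with the noisy counts absorbing the remaining $O(\log(1/\beta)/(\varepsilon n)) = O(\alpha)$ term, exactly where the sample-size hypothesis enters.
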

\section{Realizable Online Learning}
\label{sec:realizable}

In this section, we present our realizable learner that achieves a logarithmic mistake bound against an adaptive adversary. For clarity, we denote by $\HH$ the given concept class and by $d$ its Littlestone dimension. For a sequence $S$ of length $t$, we write $\soa(S)$ to represent the hypothesis that the $\soa$ will output at time-step $t + 1$.

We start by reiterating the method of~\citet{golowich2021littlestone} and analyzing why it fails to provide a logarithmic mistake bound in the presence of an adaptive adversary. Their algorithm creates a forest consisting of sufficiently many binary trees of depth $d$ and maintains a set of nodes called pertinent nodes. Initially, every leaf node is pertinent and is associated with an empty sequence. At each round, the learner randomly selects a pertinent node and inserts the input example into the sequence assigned to this node. After that, an update step is performed.

The update procedure follows the key idea of constructing tournament examples in~\citep{bun2020equivalence}. Let $S_1$ and $S_2$ be two sequences associated with two pertinent sibling nodes. Once it becomes the case that $\soa(S_1)\neq\soa(S_2)$, the algorithm chooses some $\bar{x}$ such that $\soa(S_1)(\bar{x})\neq \soa(S_2)(\bar{x})$ and guesses its label $\bar{y}\in\{0,1\}$ randomly. Suppose the $\soa$ predicts the label of $\bar{x}$ incorrectly as $1-\bar{y}$ on $S_k$ ($k\in\{0, 1\}$). Then a new sequence is created by appending the pair $(\bar{x},\bar{y})$ to $S_k$. The two sibling nodes are removed from the pertinent node set while their parent becomes pertinent and is associated with the new sequence. The algorithm then recursively performs the update on their parent until reaching a node whose sibling is not pertinent.

Suppose the input sequence is fixed and let $h^\star\in\HH$ be the labeling function. They observed that the random insertion of the examples is equivalent to a random permutation on every layer. Based on this observation, they proved that among the hypotheses produced by running the $\soa$ on the sequences assigned to pertinent nodes, with high probability there exists at least a frequent one. They designed a mechanism that privately releases a frequent hypothesis at each round with logarithmic cost.

Since the output hypothesis is frequent at every round, once the algorithm makes a mistake, with some positive probability the state of the $\soa$ on some sequence will change and an update will be performed. Note that $h^\star(\bar{x}) = \bar{y}$ with probability $1/2$. Therefore, for every tree the $\soa$ will output $h^\star$ at the root with probability roughly $1 / 2^{2^d}$. As long as the number of trees is sufficiently large, the algorithm is able to identify $h^\star$ privately. As a result, the total number of mistakes can be bounded by the number of nodes in the forest.

In the presence of an adaptive adversary, there are two main obstacles in applying their algorithm:
\begin{itemize}
    \item The output at each round partially reveals the information about the random assignment of examples. This disqualifies their random permutation argument in proving the existence of frequent hypotheses as it requires the examples and the random insertion to be independent.
    \item The labeling function $h^\star\in \HH$ is not fixed in advance. Then one cannot simply conclude that every tournament example is correct with probability $1/2$.
\end{itemize}

In their work, they resort to a standard reduction~\citep{cesa2006prediction} that transforms a learner against an oblivious adversary to one against an adaptive adversary. However, the reduction requires running a new instance from the beginning at each round, incurring a mistake bound of $\sqrt{T}$ due to advanced composition~\citep{dwork2010boosting} of DP.

We next illustrate how we tackle these two challenges to obtain a logarithmic regret. We address the first one by a \emph{lazy update} technique and the second one by the \emph{uniform convergence} argument.

\textbf{Lazy update.} Unlike their algorithm, which performs the update immediately, we delay the update until there are enough collisions (i.e., sibling nodes with sequences on which the $\soa$ outputs differently) in one layer. Once the condition is met, we update the whole layer and proceed to the upper layer. We then perform a random permutation in order to leverage their argument. Since the randomness is independent of the examples in the process, their argument can be successfully applied.

\textbf{Uniform convergence.} Since the labeling function $h^\star$ is not predetermined, we have to argue that the number of trees consistent with $h$ is sufficiently large simultaneously for every $h\in\HH$ that is consistent with the data we have seen so far. However, one cannot directly apply the union bound since there can be infinitely many feasible labeling functions. To circumvent this, we observe that the number of data points in the forest (including input data points and tournament examples we generate) is bounded. We can then prove the result by a classical uniform convergence argument~\citep{vapnik1971uniform} over $\HH$.

We present our update subroutine in Algorithm~\ref{alg:upd}. We use the symbol $\bot$ for the case that the $\soa$ fails on a non-realizable sequence. At the $s$-th layer, there are $N_s$ sequences $S_1^s, \dots, S_{N_s}^s$, where $S_{2i-1}^s$ and $S_{2i}^s$ ($i\in[N_s/2]$) are as considered sibling sequences. In the update procedure, we will create a new sequence from every pair of sibling sequences by padding a tournament example. The new sequence is then placed to a random location at the next layer, i.e., $S_{\pi(i)}^{s+1}$. 

\begin{algorithm}[!ht]
\label{alg:upd}
\DontPrintSemicolon
    \KwGlobal{concept class $\HH$}
    \KwInput{sequences $S_1^s, \dots, S_{N_s}^s$}
    $N_{s + 1}\gets N_s / 2$. \;
    Create $S_1^{s+1},\dots, S_{N_{s+1}}^{s + 1}$ such that every $S_{i}^{s+1}$ is initialized as $\bot$. \;
    Let $\pi$ be a random permutation over $[N_{s+1}]$. \;
    \For{$i = 1,\dots, N_{s+1}$}{
        \If{$S_{2i - 1}^s \neq \bot$ and $S_{2i}^s \neq \bot$ and $\soa(S_{2i - 1}^s) \neq \soa(S_{2i}^s)$}{
            Pick $\bar{x}_i$ such that $\soa(S_{2i - 1}^s)(\bar{x}_i) \neq \soa(S_{2i}^s)(\bar{x}_i)$ and draw $\bar{y}_i$ from $\{0, 1\}$ uniformly. \;
            $S_{\pi(i)}^{s+1}\gets (S_{j}^s, (\bar{x}_i, \bar{y}_i))$ where $j\in\{2i-1,2i\}$ is such that $\soa(S_{j}^s)(\bar{x}_i)\neq \bar{y}_i$. \;
        }
    }
    Output $S_1^{s+1},\dots, S_{N_{s+1}}^{s + 1}$ \;
\caption{$\mathsf{Update}$}
\end{algorithm}

We next describe how the entire algorithm works. At the $s$-th layer, the algorithm maintains $N_s$ sequences and a list $L_s$ of frequent hypotheses. We keep outputting a hypothesis from $L_s$ and run an instance of $\mathsf{AboveThreshold}$ to inspect the number of mistakes. Once the number exceeds a particular threshold, we switch to the next frequent hypothesis with a new instance of $\mathsf{AboveThreshold}$. We also insert the data point received at each round into a random sequence. After iterating over all the frequent hypotheses in $L_s$, we perform an update, invoke $\mathsf{PrivateHistogram}$ to filter all the frequent hypotheses out, and repeat the same procedure for $L_{s+1}$. The details are presented in Algorithm~\ref{alg:realizable}.

\begin{algorithm}[!ht]
\label{alg:realizable}
\DontPrintSemicolon
    \KwGlobal{time horizon $T$, concept class $\HH$, privacy parameters $\varepsilon,\delta$, failure probability $\beta$, initial number of nodes $N_0$}
    \KwInput{input sequence $((x_1,y_1),\dots,(x_T,y_T))$}
    $d\gets \ld(\HH)$, $s\gets 0$, $\varepsilon_0\gets \varepsilon / 2$, $N_i \gets N_0 / 2^i$ for $i\in[d]$. \;
    Create $S_1^0,\dots, S_{N_0}^0$ such that every $S_i^0$ is initialized as $\emptyset$. \;
    Create a list $L_0 \gets \{\soa(\emptyset)\}$. \;
    Initiate an instance of $\mathsf{AboveThreshold}$ with privacy parameter $\varepsilon_0$ and threshold $N_0 + \frac{8(\ln T + \ln(6 T/\beta))}{\varepsilon_0}$. \;
    \For{$t = 1,\dots, T$}{
        Set $h_t$ to be the first element in $L_s$ and output $h_t$, halt if $L_s$ is empty. \;
        Sample $i_t$ uniformly from $[N_s / 2]$. \;
        \If{$\soa(S_{2i_t - 1}^s) = \soa(S_{2i_t}^s)\neq \bot~and~\soa(S_{2i_t-1}^s)(x_t)\neq y_t$}{
            $S_{2i_t-1}^s\gets (S_{2i_t-1}^s, (x_t,y_t))$. \;
        }
        Feed $\I[h_t(x_t)\neq y_t]$ to $\mathsf{AboveThreshold}$ and receive $a_t\in\{\top,\bot\}$. \;
        \If{$a_t = \top$}{
            Halt the current $\mathsf{AboveThreshold}$ and remove the first element in $L_s$. \;
            \While{$s < d$ and $L_s$ is empty}{
                Feed $S_1^s,\dots,S_{N_s}^s$ to $\mathsf{Update}$ and receive $S_1^{s+1},\dots,S_{N_{s+1}}^{s+1}$.\;
                $s\gets s + 1$. \;
                Create a multiset $V_s \gets \{\soa(S_{2i}^s): i\in[N_s / 2]~and~\soa(S_{2i-1}^s) =\soa(S_{2i}^s)\neq\bot\}$. \;
                Run $\mathsf{PrivateHistogram}$ with privacy parameters $(\varepsilon_0/d, \delta / d)$ on $V_s$ and obtain $\overline{\cnt}_{V_s}$. \;
                Set $L_s\gets\{h: \overline{\cnt}_{V_s}(h)\ge 3M_s/4\}$, where $M_s = 128\cdot 2^{-6\cdot 2^s}N_s$. \;
            }
            Initiate a new instance of $\mathsf{AboveThreshold}$ with privacy parameter $\varepsilon_0$ and threshold $N_s + \frac{8(\ln T + \ln(6 T/\beta))}{\varepsilon_0}$.
        }
    }
\caption{Realizable learner}
\end{algorithm}

It is not hard to see that the algorithm preserves privacy. For utility, note that $\mathsf{PrivateHistogram}$ extracts all the frequent hypotheses at layer $s$ and store them in $L_s$. Suppose $h\in L_s$ can be obtained by running the $\soa$ on $M_s$ pairs of sibling sequences. By the property of $\mathsf{AboveThreshold}$, we will output $h$ until it makes roughly $N_s$ mistakes. Since every data point is inserted uniformly at random, classical results of the coupon collector's problem ensure that at least $M_s/2$ pairs are covered and become collisions (i.e., $\soa(S_{2i-1}^s)\neq \soa(S_{2i}^s)$). Once $L_s$ is exhausted, we proceed to the next layer by invoking the update subroutine. By leveraging the idea of~\citet{golowich2021littlestone} and the uniform convergence argument, we can prove that for every $h\in\HH$ that is consistent with the data we have seen so far, after update there are $M_{s+1} = CM_s^2 / N_{s+1}$ ($C$ is some constant) pairs of sibling sequences that are still consistent with $h$ and, either they are already collisions or running the $\soa$ on them gives the same hypothesis $h_0$. This allows us to act recursively until $s = d$, which indicates $L_d$ contains all the possible labeling functions. Solving the recurrence relation gives $N_0\approx 2^{O(2^d)}$, which yields the desired mistake bound.

We formally state our results in the following theorem. A detailed proof is given in Appendix~\ref{sec:appendix_realizable}. Setting the failure probability $\beta=1/T$ directly yields Theorem~\ref{thm:realizable}. 

\begin{theorem}
\label{thm:realizable2}
    Let $\HH$ be a concept class with Littlestone dimension $d$. Algorithm~\ref{alg:realizable} with parameter $N_0 = 2^{\Theta\left(2^d\right)}(\ln(1/\beta) + \ln(1/\delta)/\varepsilon)$ is an $(\varepsilon,\delta)$-differentially private online learner that makes at most
    \begin{equation*}
        O\left(\frac{2^{O\left(2^d\right)}(\log T + \log(1/\beta) + \log(1/\delta))}{\varepsilon}\right)
    \end{equation*}
    mistakes with probability $1-\beta$.
\end{theorem}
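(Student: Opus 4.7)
The plan is to separately establish the $(\varepsilon,\delta)$-differential privacy claim and the mistake bound of Theorem~\ref{thm:realizable2}. For privacy, I would decompose the output into two components: the histograms $\overline{\cnt}_{V_s}$ released at each layer transition, and the sequence of AboveThreshold decisions together with the identities of the hypotheses actually output (the latter being indices into $L_s$, itself a deterministic function of the histograms). There are at most $d+1$ calls to $\mathsf{PrivateHistogram}$ with parameters $(\varepsilon_0/d,\delta/d)$, contributing $(\varepsilon_0,\delta)$ by basic composition. For the AboveThreshold component, each instance is $\varepsilon_0$-DP by Theorem~\ref{thm:abovethreshold}, and by a careful accounting---using that the set of instances launched is determined by the already-released histograms, and each instance halts after its single $\top$---the total cost of the AboveThreshold stream can be bounded by $\varepsilon_0$. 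Summing gives $(\varepsilon,\delta)$ since $\varepsilon_0 = \varepsilon/2$.

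For the mistake bound, the heart of the argument is an invariant maintained inductively across the $d+1$ layers: at the start of layer $s$, for every $h \in \HH$ consistent with the inputs seen so far, the multiset $V_s$ contains at least $M_s = 128\cdot 2^{-6\cdot 2^s}\cdot N_s$ occurrences of some hypothesis $h_0$ that equals $h$ on all seen inputs. I would prove the inductive step by combining three pieces at each layer transition. First, while such an $h_0 \in L_s$ is being output, the current $\mathsf{AboveThreshold}$ does not fire until $\Theta(N_s)$ mistakes are accumulated; by a coupon-collector calculation on the uniformly random insertion indices $i_t$---whose independence from past outputs is preserved by the lazy-update design, since outputs depend only on the histograms and AboveThreshold decisions and never on the permutation applied at the current layer---at least $M_s/2$ of the $M_s$ sibling pairs contributing $h_0$ to $V_s$ absorb a mistake and become collisions. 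Second, each collision receives a tournament example $(\bar{x}_i,\bar{y}_i)$ with $\bar{y}_i$ drawn uniformly and independently; for any fixed $h^\star$, the event $\bar{y}_i = h^\star(\bar{x}_i)$ has probability exactly $1/2$, so a Chernoff bound yields at least $M_s/5$ collisions producing a new sequence consistent with $h^\star$. Third, the random permutation $\pi$ inside $\mathsf{Update}$ scatters these survivors across $N_{s+1}$ slots, and a birthday-paradox calculation shows that $\Omega(M_s^2/N_{s+1})$ sibling pairs at layer $s+1$ are both consistent with $h^\star$; pigeonholing over the finitely many possible SOA outputs produces some $h_0'$ that agrees with $h^\star$ on the seen inputs and appears at least $M_{s+1} = \Theta(M_s^2/N_{s+1})$ times in $V_{s+1}$, closing the induction.

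The main obstacle is that the per-$h^\star$ concentration bounds just sketched must be made uniform over the (possibly infinite) $\HH$, because an adaptive adversary can defer its commitment to $h^\star$ until the end of the game. To handle this, I would observe that the total number of distinct $\X$-values appearing anywhere in the forest is at most $O(N_0 + T)$, so by Sauer--Shelah applied to the projection of $\HH$ onto this finite set, the number of distinct behaviours of $h^\star$ relevant to the invariant is at most $(eN_0T/d)^d$; a union bound over this polynomial-in-$T$ set costs only a $d\log(N_0T) = O_d(\log T)$ factor absorbed into the $\log T$ term of the final bound. Setting $N_0 = 2^{\Theta(2^d)}\cdot(\ln(1/\beta) + \ln(1/\delta)/\varepsilon)$ drives the recurrence $M_{s+1} = \Theta(M_s^2/N_{s+1})$ down to $M_d \ge 1$, so the algorithm cannot exhaust $L_d$ without having identified $h^\star$, and the total mistakes across all layers are bounded by $\sum_{s=0}^{d}|L_s|\cdot(N_s + O(\log T/\varepsilon))$. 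Using $|L_s| \le 2N_s/(3M_s)$ from the definition of $L_s$, this sum is dominated by the $s=d$ term and evaluates to the claimed $O(2^{O(2^d)}(\log T + \log(1/\beta) + \log(1/\delta))/\varepsilon)$.
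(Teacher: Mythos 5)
Your privacy decomposition and the first two pieces of your inductive step (the coupon-collector argument for converting agreement pairs into collisions, and the Chernoff bound giving that a constant fraction of tournament examples produce new sequences consistent with the target $h^\star$) are in line with the paper. However, there is a genuine gap in the third piece of the inductive step, and it breaks the induction.

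Your invariant asserts, for each consistent $h$, that $V_s$ contains $\geq M_s$ copies of some hypothesis $h_0$ agreeing with $h$. But $V_s$ is by definition the multiset of SOA outputs over \emph{agreement} pairs only --- sibling sequences on which the SOA gives the same output. After the random permutation scatters the $\Theta(M_s)$ consistent survivors into $N_{s+1}$ slots, the birthday-paradox calculation does give $\Omega(M_s^2/N_{s+1})$ sibling pairs both consistent with $h^\star$. But nothing forces those pairs to be \emph{agreement} pairs. The SOA outputs on the two halves can disagree (a collision), in which case the pair contributes nothing to $V_{s+1}$. In the extreme where the SOA outputs of all the survivors are pairwise distinct, the permutation produces only collisions, $V_{s+1}$ has no relevant entries, and your invariant fails. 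Your ``pigeonholing over the finitely many SOA outputs'' step does not rescue this: the number of distinct SOA outputs among the survivors can be as large as the number of survivors itself, so pigeonholing yields a count of $O(1)$, not $\Theta(M_s^2/N_{s+1})$.

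The paper's Lemma~\ref{lem:update} fixes exactly this with a \emph{two-alternative} invariant (event $E_{4,s}$): for every consistent $f$, either there are $\geq M_s$ collisions among $f$-consistent pairs, \emph{or} some $h_0$ appears $\geq M_s$ times among $f$-consistent agreements. The proof of the inductive step is a case analysis on the counts $c_h$ of each SOA output among the survivors $U_g$. If $\max_h c_h < \tfrac{2}{3}|U_g|$ the second-moment identity $|U_g|^2 - \sum_h c_h^2 \geq \tfrac{1}{3}|U_g|^2$ gives a $\Omega(|U_g|^2/N_{s+1}^2)$ per-pair probability of producing a \emph{collision}; if $\max_h c_h \geq \tfrac{2}{3}|U_g|$, the dominant hypothesis gives the same per-pair probability of producing an \emph{agreement}. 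McDiarmid's inequality for permutations (Lemma~\ref{lem:mcdiarmid}) then converts the per-pair probability into a high-probability count in whichever alternative holds. The algorithm tolerates the collision branch because, at intermediate layers, it is allowed to burn through a list $L_s$ containing nothing consistent with $f$; what it needs is the \emph{count} of consistent pairs (collisions or agreements) to survive to layer $d$, where the SOA has exhausted its mistake budget and is forced to output $f$ on every $f$-consistent pair, collapsing both alternatives into the agreement case. You would need to weaken your invariant to this disjunction, carry out the $c_h$ case analysis, and invoke the permutation concentration bound to close the induction; without the collision alternative, the argument does not go through.

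A secondary, smaller point: your bound of $O(N_0 + T)$ on the number of distinct $\X$-values in the forest is a loose overestimate; the paper exploits that each sequence $S_i^s$ has length at most $2s$ (one input insertion plus one tournament example per layer, since a pair stops receiving insertions once it becomes a collision), giving a bound of $O(dN_0)$ that does not involve $T$. Your weaker bound still yields a $d\log(N_0 T)$ factor that is absorbed, so this is not a correctness gap, but the paper's tighter accounting is worth noting.
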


We remark that the $\soa$ can be replaced by any deterministic online learner with bounded number of mistakes. For classes that can be properly learned by a deterministic online learner (e.g., thresholds over finite domain), our algorithm can be made proper as well. However, there are simple examples suggesting that randomness is necessary for proper online learning (see, e.g.,~\citep{hanneke2021online}). Hence, our algorithm is improper in general.
\section{Agnostic Online Learning}
\label{sec:agnostic}
In this section, we present our algorithms in the agnostic setting. We first give a simple proper learner with a suboptimal regret. Then we show how to improve the regret to $\tilde{O}_d(\sqrt{T})$ (but result in an improper learner).

\subsection{A Simple Algorithm Using Sanitization}
\label{sec:simple}

In our algorithm, we divide the entire time horizon into batches of size $B$. After each batch, we invoke a sanitizer for $\HH^{\mathrm{label}}$ to obtain synthetic examples. All the synthetic data are partitioned into $B$ disjoint subsequences, where each subsequence contains exactly one data point from each batch. Our prediction at each round is determined by running a (non-private) online learner on one of the disjoint subsequences. The overall framework is depicted in Algorithm~\ref{alg:simple}. Similar to the $\soa$, we write $\A(S)$ to denote the output distribution of $\A$ after inputting a sequence $S$.

\begin{algorithm}[!ht]
\label{alg:simple}
\DontPrintSemicolon
    \KwGlobal{time horizon $T$, concept class $\HH$, batch size $B$}
    \KwInput{online learner $\A$, sanitizer $\B$ for $\HH^{\mathrm{label}}$, input sequence $((x_1,y_1),\dots,(x_T, y_T))$}
    Initialize $S_{1}^{1},\dots,S_{B}^{1}$ as $\emptyset$. \;
    \For{$t = 1,\dots, T$}{
        Let $b = \lceil t / B\rceil$ be the batch index. \;
        Draw $i_t$ uniformly from $[B]$, output $h_t$ where $h_t\sim \A(S_{i_t}^{b})$.\;
        \If{$t \equiv 0 \pmod B$}{
            Run $\B$ on $((x_{t - B + 1},y_{t - B + 1}),\dots,(x_t,y_t))$ and construct synthetic data $((x_{t - B + 1}',y_{t-B + 1}'),\dots, (x_t',y_t'))$, exit if fail. \;
            Perform a random permutation over $((x_{t - B + 1}',y_{t-B + 1}'),\dots, (x_t',y_t'))$. \;
            $S_{i}^{b+1} \gets (S_{i}^{b}, (x_{t - B + i}', y_{t - B + i}'))$ for every $i\in[B]$. \;
        }
    }
\caption{A simple private online learner}
\end{algorithm}

\begin{theorem}
\label{thm:simple}
    Let $\A$ be a (non-private) proper online learner with expected regret $R(T)$ for any time horizon $T$ and $\B$ be an $(\varepsilon,\delta)$-differentially private $(\alpha, \beta)$-sanitizer for $\HH^{\mathrm{label}}$ with input size $B$. Then Algorithm~\ref{alg:simple} is an $(\varepsilon,\delta)$-differentially private proper online learner that attains an expected regret of $O(B \cdot R(T / B) + \alpha T)$ against an adaptive adversary conditioned on some event $E$ with $\Pr[E]\ge 1 - T / B\cdot \beta$.
\end{theorem}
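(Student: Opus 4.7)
The plan is to establish privacy and utility separately. The privacy part is a short composition/post-processing argument; the utility part requires a coupling between the algorithm and $B$ ``virtual'' sub-learners, each running $\A$ on one of the permuted synthetic streams.

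For privacy I would observe that every datum $(x_t,y_t)$ enters the computation only as one input to the sanitizer invocation at its own batch $b=\lceil t/B\rceil$; all remaining operations (permuting synthetic data, sampling $i_t$, invoking $\A$) are post-processings of the sanitizer outputs plus fresh internal randomness. Fixing an adaptive adversary and its challenge round $t^\star$ in batch $b^\star$, for every $t\le b^\star B$ the prediction $h_t$ is a measurable function of the sanitizer outputs of batches $1,\dots,b^\star-1$ and fresh randomness, hence identically distributed in the two universes of the adaptive-inputs definition. The challenge datum affects only the input to $\B$ at batch $b^\star$, whose output is $(\varepsilon,\delta)$-indistinguishable by assumption; all subsequent $h_t$'s are post-processings of that output, so the full view $\Pi_{\A,\B}(b)$ inherits $(\varepsilon,\delta)$-indistinguishability.

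For utility I would condition on the event $E$ that all $T/B$ sanitizer calls succeed; a union bound gives $\Pr[E]\ge 1-(T/B)\beta$, and under $E$ the $\alpha$-accuracy guarantee holds in every batch. For each $i\in[B]$ introduce a virtual player that runs $\A$ on the length-$T/B$ stream $(x'_{b,\pi_b(i)},y'_{b,\pi_b(i)})_{b}$, with prediction $h_{i,b}\sim\A(S_i^b)$; the regret bound of $\A$ gives $\E[L_i^{\mathrm{virt}}]\le\E\bigl[\min_{h^\star\in\HH}\sum_b\I[h^\star(x'_{b,\pi_b(i)})\neq y'_{b,\pi_b(i)}]\bigr]+R(T/B)$. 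The heart of the argument is a per-batch identity: inside batch $b$ the states $S_i^b$ are frozen, so $h_t$ is drawn from the mixture $\tfrac{1}{B}\sum_i\A(S_i^b)$, and conditioning on the pre-batch history gives
\[
\E[L_{\mathrm{alg},b}\mid\mathrm{history}]=\frac{1}{B}\sum_{i=1}^{B}\sum_{t\in\mathrm{batch}\,b}\E_{h\sim\A(S_i^b)}\bigl[\I[h(x_t)\neq y_t]\bigr].
\]
Because the sanitizer guarantee holds uniformly over $\HH$, it can be pulled inside the inner expectation, turning each real-loss sum into its synthetic counterpart at a cost of $\alpha B$ per sub-learner. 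The uniform permutation $\pi_b$ then identifies $\tfrac{1}{B}\sum_{j}\E_{h\sim\A(S_i^b)}[\I[h(x'_{b,j})\neq y'_{b,j}]]$ with $\E[\I[h_{i,b}(x'_{b,\pi_b(i)})\neq y'_{b,\pi_b(i)}]]$, giving $\E[L_{\mathrm{alg},b}]\le\sum_i\E[\I[h_{i,b}(x'_{b,\pi_b(i)})\neq y'_{b,\pi_b(i)}]]+\alpha B$. Summing over $b$, substituting the virtual regret bounds, using $\sum_i\min_{h^\star}f_i\le\min_{h^\star}\sum_i f_i$, and applying sanitizer accuracy once more to replace the synthetic OPT by the real OPT (another $\alpha T$ cost) yields the claimed $O(B\cdot R(T/B)+\alpha T)$ bound.

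The main obstacle will be justifying the coupling cleanly under an adaptive adversary: real data at round $t$ depends on the full history of $h_s$'s, which themselves depend on choices across different sub-learners, so a priori the $B$ virtual players interact in complicated ways. Two features make the coupling go through. First, the sub-learner states freeze within a batch, so the per-round predictive distribution inside a batch is the simple uniform mixture above, which lets one condition on the pre-batch history before integrating. Second, the sanitizer's $\alpha$-accuracy guarantee is uniform over $\HH$, so it survives being pulled inside the expectation over $h\sim\A(S_i^b)$. A lesser subtlety is that the synthetic stream fed to each virtual player is itself adaptive (through the real data it reflects), so the regret guarantee of $\A$ must be understood to hold against adaptive input sequences.
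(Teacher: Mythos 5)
Your proposal follows the same architecture as the paper's proof: condition on the event $E$ that all $T/B$ sanitizer calls succeed; within each batch, exploit that the sub-learner states are frozen so that the per-round prediction is the uniform mixture $\tfrac1B\sum_i\A(S_i^b)$; use the sanitizer's uniform-over-$\HH$ accuracy (and properness of $\A$) to replace real examples by synthetic ones at cost $O(\alpha B)$ per batch; use the random permutation to identify the batch-averaged synthetic loss of $\A(S_i^b)$ with the expected loss incurred by the $i$-th sub-learner on the next element of its own sanitized stream; sum over batches; apply the regret bound of $\A$ to each of the $B$ sub-learners; use $\sum_i\min_h\le\min_h\sum_i$; and spend another $O(\alpha T)$ to translate the synthetic OPT back to the real OPT. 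The paper realizes your ``virtual players'' more compactly through the random variables $p_i^b$ (the conditional mistake probability on the freshly appended synthetic example) and $m_i(h)$ (the number of mistakes $h$ makes on the $i$-th disjoint synthetic subsequence), but the decomposition is the same.

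The one real gap is exactly where you flag a ``lesser subtlety.'' You write that ``the regret guarantee of $\A$ must be understood to hold against adaptive input sequences,'' but that is a stronger assumption than the theorem grants, and it is not the resolution. The paper instead invokes the pseudo-loss reduction (Lemma~4.1 of Cesa-Bianchi and Lugosi, restated as Lemma~11 of Gonen et al.): for a randomized learner whose round-$t$ coins are unseen by the adversary when it picks the $t$-th example, an oblivious-adversary bound on expected regret automatically bounds the expected \emph{pseudo}-regret $\E[\sum_b p_i^b-\min_h m_i(h)]$ against adaptive sequences, precisely because $p_i^b$ is already an expectation over $\A$'s fresh randomness given the state $S_i^b$. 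Your argument reaches the quantity $\E[\sum_b p_i^b]$ and then silently applies the regret bound of $\A$; without invoking the pseudo-loss lemma (or reproving its content), the step from the oblivious regret hypothesis on $\A$ to a bound on the adaptive pseudo-regret of each sub-learner is unjustified. Naming that lemma closes the gap and makes your proof match the paper's.

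Two minor points: the paper accounts for a factor of $2\alpha$ rather than $\alpha$ because it converts the sanitizer's output function into an actual synthetic dataset (see the discussion after Definition~\ref{def:sanitizer}); and your privacy argument, while correct, is longer than necessary --- the $T/B$ calls to $\B$ run on disjoint batches and everything else is post-processing, which is the paper's one-line observation.
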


We now leverage the sanitizer for generic Littlestone classes proposed by~\citet{ghazi2021sample}. The original statement exhibits a sample complexity of $\tilde{O}(d^6\sqrt{d^\star} / \varepsilon\alpha^3)$, which was obtained by applying their private proper agnostic learner to the synthetic data generator proposed by~\citet{bousquet2020synthetic}. But we can save a factor of $1/\alpha$ through a few tweaks to the algorithm and analysis:
\begin{itemize}
    \item The sample complexity of the private proper PAC (realizable) learner~\citep{ghazi2021sample} can be improved to $\tilde{O}(d^6 / \varepsilon \alpha)$ by replacing the uniform convergence argument in their proof with a weaker relative uniform convergence argument.
    \item The discriminator of~\citep{bousquet2020synthetic} can be implemented using a private agnostic empirical learner, which does not incur the $\tilde{O}_d(1/\alpha^2)$ generalization cost.
\end{itemize}
We provide a detailed discussion in Appendix~\ref{sec:proper} and present the final result below.

\begin{theorem}[\citep{bousquet2020synthetic} and~\citep{ghazi2021sample}, Strengthened]
\label{thm:sanitizer}
    Let $\HH$ be a concept class with Littlestone dimension $d$ and dual Littlestone dimension $d^\star$. Then there exists an $(\varepsilon,\delta)$-differentially private $(\alpha,\beta)$-sanitizer for $\HH$ with sample complexity $\tilde{O}(d^6\sqrt{d^\star} / \varepsilon \alpha^2)$.
\end{theorem}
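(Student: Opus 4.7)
The plan is to instantiate the synthetic-data framework of~\citet{bousquet2020synthetic} with the private proper realizable learner of~\citet{ghazi2021sample}, and to strengthen each ingredient as flagged in the two bullets preceding the statement. Recall the Bousquet et al.\ framework: given a \emph{discriminator} that, on inputs $S$ (real) and $S'$ (candidate synthetic), either certifies $\sup_{h\in\HH^{\mathrm{label}}}|\hat{P}_{S}(h)-\hat{P}_{S'}(h)|\le\alpha$ or returns a witness $h^\star$, one produces a sanitizer by iterated multiplicative-weights reweighting of the synthetic distribution over the domain. The number of discriminator calls is $O(\vc(\HH)/\alpha^2)$, and the $\sqrt{d^\star}$ factor enters through a standard dual-covering argument that discretizes the reweighted distribution at each MWU step.

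The first strengthening is to implement the discriminator as a private \emph{empirical agnostic} learner rather than a full PAC learner. Its job is purely to approximately maximize the empirical quantity $|\hat{P}_{S}(h)-\hat{P}_{S'}(h)|$ over $h\in\HH^{\mathrm{label}}$ on the combined dataset of size $2n$, so it never needs to generalize. A private agnostic empirical maximizer can be built from the realizable proper learner by the standard reduction (restrict to a consistent subsample of size $\vc(\HH)$, run the realizable learner, and privately select among the $\tilde{O}(\vc(\HH)/\alpha)$ resulting candidates via a low-sensitivity score such as Report-Noisy-Max or the exponential mechanism), costing only one extra factor of $\vc(\HH)/\alpha$. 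This avoids the $\tilde{O}_d(1/\alpha^2)$ generalization cost that a full PAC guarantee would have paid.

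The second strengthening sharpens the Ghazi et al.\ private proper realizable learner from $\tilde{O}(d^6/\varepsilon\alpha^2)$ to $\tilde{O}(d^6/\varepsilon\alpha)$. Inspecting their proof, the only place whose dependence on $\alpha$ is superlinear is the final uniform-convergence step, which certifies via an additive VC bound that the privately selected hypothesis has population error at most $\alpha$. In the realizable setting every consistent hypothesis has zero empirical risk, so replacing the additive bound by Vapnik's relative (multiplicative) uniform convergence inequality—or equivalently a Bernstein inequality exploiting the zero empirical variance—shows that $O(d\log(1/\alpha)/\alpha)$ fresh examples suffice. No other step of their construction is worse than linear in $1/\alpha$, so the improvement propagates through the whole pipeline.

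Putting the pieces together, each discriminator call uses $\tilde{O}(d^6\sqrt{d^\star}/\alpha)$ samples, and advanced composition over $O(\vc(\HH)/\alpha^2)$ calls allocates a per-call privacy budget of $\tilde{\Theta}(\varepsilon\alpha/\sqrt{d})$ while preserving the overall $(\varepsilon,\delta)$ guarantee; tracking these factors yields the claimed $\tilde{O}(d^6\sqrt{d^\star}/\varepsilon\alpha^2)$. I expect the main obstacle to lie in the first strengthening: the textbook realizable-to-agnostic reduction tacitly certifies its output via uniform convergence and would reintroduce the very $1/\alpha^2$ factor we are trying to avoid. Performing the reduction purely at the empirical level—extracting consistent subsamples and privately ranking the corresponding ERM candidates—and then checking that sensitivities, composition, and the $\HH$-to-$\HH^{\mathrm{label}}$ conversion afforded by Lemma~\ref{lem:convert_label} all line up cleanly is where the technical work will concentrate.
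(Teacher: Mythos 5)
Your two proposed strengthenings are exactly the ones the paper uses: replacing the additive VC uniform-convergence step in the Ghazi et al.\ proper PAC learner with relative uniform convergence (giving $\tilde{O}(d^6/\varepsilon\alpha)$, the paper's Theorem~\ref{thm:proper_PAC}), and implementing the Bousquet et al.\ discriminator via a private proper \emph{agnostic empirical} learner built from the PAC learner without incurring a $1/\alpha^2$ generalization penalty (the paper's Lemma~\ref{lem:pac_to_agnostic} and Corollary~\ref{cor:proper_agnostic}, wrapped inside the discriminator of Lemma~\ref{lem:dis_strengthened}). You also correctly flag the subtlety that a naive realizable-to-agnostic reduction would re-pay the $1/\alpha^2$; the paper handles it with a random-subsample/exponential-mechanism construction at the empirical level (Algorithm~\ref{alg:empirical_agnostic}) precisely in the spirit you describe, so at that level of granularity your plan lines up.

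Where your accounting goes wrong is in the generator/discriminator game itself. You assert the number of discriminator calls is $O(\vc(\HH)/\alpha^2)$ and that the $\sqrt{d^\star}$ then arrives via a per-round ``dual-covering discretization.'' Neither is accurate. In the sequential-fooling game of Bousquet et al.\ (the paper's Lemma~\ref{lem:gen}), the generator forces a ``WIN'' within $O\bigl((d^\star/\alpha^2)\log(d^\star/\alpha)\bigr)$ rounds --- the dual Littlestone dimension appears here, not as a covering overhead inside a round. The $\sqrt{d^\star}$ in the final bound is then produced entirely by advanced composition across those $T = \tilde{O}(d^\star/\alpha^2)$ rounds: the per-round privacy budget becomes $\varepsilon' \approx \varepsilon/\sqrt{T} \approx \varepsilon\alpha/\sqrt{d^\star}$, and feeding the per-round sample complexity $\tilde{O}(d^6/\varepsilon'\alpha)$ into this budget gives $\tilde{O}(d^6\sqrt{T}/\varepsilon\alpha) = \tilde{O}(d^6\sqrt{d^\star}/\varepsilon\alpha^2)$. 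Your version has $O(\vc(\HH)/\alpha^2)$ rounds, a per-call budget of $\tilde{\Theta}(\varepsilon\alpha/\sqrt{d})$ (primal $d$), and a per-call sample count of ``$\tilde{O}(d^6\sqrt{d^\star}/\alpha)$'' with no $\varepsilon'$ in it; chasing these numbers through composition does not actually produce the claimed bound --- e.g., $\tilde{O}(d^6/\varepsilon'\alpha)$ with $\varepsilon' = \varepsilon\alpha/\sqrt{d}$ gives an extra stray $\sqrt{d}$. You should rework the round count and the composition step so that $d^\star$ enters through the game length, and the per-call cost is $\tilde{O}(d^6/\varepsilon'\alpha)$ with no independent $\sqrt{d^\star}$ factor.
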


A combination of Theorem~\ref{thm:simple}, Theorem~\ref{thm:sanitizer}, Lemma~\ref{lem:convert_label}, and regret bounds for proper online learning~\citep{alon2021adversarial,hanneke2021online} yields the following regret bound for private online learning. Since $d^\star \le 2^{2^{d + 2}} - 2$~\citep{bhaskar2021thicket}, it implies that every Littlestone class is privately (and properly) online learnable in the agnostic setting.

\begin{corollary}
\label{cor:proper}
    Let $\HH$ be a concept class with Littlestone dimension $d$ and dual Littlestone dimension $d^\star$. Then there exists an $(\varepsilon,\delta)$-differentially private proper online learner for $\HH$ with an expected regret of $\tilde{O}(T^{3/4}\cdot (d^7\sqrt{d^\star}/\varepsilon)^{1/4})$ against an adaptive adversary. 
\end{corollary}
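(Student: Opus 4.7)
The plan is to instantiate Algorithm~\ref{alg:simple} with the sanitizer supplied by Theorem~\ref{thm:sanitizer} (lifted via Lemma~\ref{lem:convert_label}) and a non-private proper online learner, then optimize the batch size $B$ appearing in Theorem~\ref{thm:simple}. Concretely, I would first take the sanitizer of Theorem~\ref{thm:sanitizer} at privacy parameters $(\varepsilon/C_1,\delta/C_1)$ and feed it through Lemma~\ref{lem:convert_label} to obtain an $(\varepsilon,\delta)$-differentially private $(O(\alpha),O(\beta))$-sanitizer $\B$ for $\HH^{\mathrm{label}}$ with the same asymptotic input size $\tilde{O}(d^6\sqrt{d^\star}/\varepsilon\alpha^2)$. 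Inverting this sample-complexity bound, for any batch size $B$ the achievable error is $\alpha = \tilde{O}\bigl(d^3(d^\star)^{1/4}/\sqrt{\varepsilon B}\bigr)$. For the inner non-private learner $\A$, I would plug in the proper Littlestone online learner of~\citep{alon2021adversarial} or~\citep{hanneke2021online}, which attains expected regret $R(T)=\tilde{O}(\sqrt{dT})$ against an adaptive adversary.

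Feeding these two objects into Theorem~\ref{thm:simple} gives a proper, $(\varepsilon,\delta)$-differentially private learner whose expected regret, conditioned on the high-probability success event $E$ of the theorem, satisfies
\begin{equation*}
  O\bigl(B\cdot R(T/B) + \alpha T\bigr)
  \;=\; \tilde{O}\!\left(\sqrt{dBT}\right) + \tilde{O}\!\left(\frac{d^3(d^\star)^{1/4}\,T}{\sqrt{\varepsilon B}}\right).
\end{equation*}
Choosing $\beta = 1/\mathrm{poly}(T)$ and taking a union bound over the $T/B$ batches makes $\Pr[E]\ge 1-o(1)$, which is enough to fold the complement into the expected regret (the penalty is at most $T\cdot\Pr[E^c]=o(1)$). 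Balancing the two summands forces $B = \Theta\bigl(d^{5/2}(d^\star)^{1/4}\sqrt{T}/\sqrt{\varepsilon}\bigr)$, and substituting back yields exactly the claimed regret $\tilde{O}\bigl(T^{3/4}\cdot (d^7\sqrt{d^\star}/\varepsilon)^{1/4}\bigr)$.

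With Theorems~\ref{thm:simple} and~\ref{thm:sanitizer} and Lemma~\ref{lem:convert_label} already in place, the proof is largely bookkeeping; the only points worth verifying carefully are (i) that the inner learner $\A$ really retains its $\tilde{O}(\sqrt{dT})$ regret against an \emph{adaptive} adversary when run on the $B$ interleaved synthetic subsequences produced by Algorithm~\ref{alg:simple} (this is why we invoke an adaptive-adversary result rather than an oblivious one), and (ii) that shrinking $\beta$ to $1/\mathrm{poly}(T)$ does not break the sanitizer's sample-complexity estimate, which it does not since $\beta$ enters only logarithmically. The genuine difficulty has been pushed entirely into Theorem~\ref{thm:sanitizer}: its $1/\alpha^2$ rather than $1/\alpha^3$ dependence is what allows the final exponent of $T$ to land at $3/4$ instead of something larger.
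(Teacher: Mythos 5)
Your proposal is correct and follows essentially the same route as the paper's own proof: convert the sanitizer of Theorem~\ref{thm:sanitizer} to one for $\HH^{\mathrm{label}}$ via Lemma~\ref{lem:convert_label}, plug it together with a proper online learner of regret $\tilde{O}(\sqrt{dT})$~\citep{hanneke2021online,alon2021adversarial} into Theorem~\ref{thm:simple}, and optimize the batch size $B$. Your value $B=\Theta\bigl(d^{5/2}(d^\star)^{1/4}\sqrt{T}/\sqrt{\varepsilon}\bigr)$ is exactly the paper's $\tilde{\Theta}\bigl((Td^5\sqrt{d^\star}/\varepsilon)^{1/2}\bigr)$, and the bookkeeping about $\beta = 1/\mathrm{poly}(T)$ matches the paper's choice $\beta = 1/T^2$.
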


\subsection{Online Learning via Privately Constructing Experts}
\label{sec:agnostic_expert}

We have shown a private online learner with regret $\tilde{O}_d(T^{3/4})$. However, even if we have a sanitizer with error $\alpha = 1/B$, optimizing the choice of $B$ (i.e., $B=\Theta_d(T^{1/3})$) gives an $O_d(\sqrt{TB} + T / B) = O_d(T^{2/3})$ regret, which is still significantly worse than the $O_d(\sqrt{T})$ bound in the non-private case.

To break this barrier, we exploit the approach of constructing experts, which was proposed by~\citet{ben2009agnostic} for agnostic online learning. The idea is based on the fact that for any $h\in\HH$ the $\soa$ makes at most $d$ mistakes on the sequence relabeled by $h$. Hence one can enumerate the rounds at which the $\soa$ errors and use the $\soa$ to simulate the behavior of $h$ on the entire input sequence. Then an $\tilde{O}(\sqrt{dT})$ regret can be achieved by creating $\binom{T}{\le d} = O(T^d)$ instances of the $\soa$ as experts and running an OPE algorithm~\citep{littlestone1994weighted}.

Since the constructed experts heavily depend on the input data, directly employing the same method would violate privacy. Therefore, we again resort to the idea of sanitization. By incorporating the binary mechanism for continual observation~\citep{dwork2010differential}, we can detect if a concept makes more than $\tilde{O}_d(\sqrt{T})$ mistakes in a time interval. We can enumerate the $d$ endpoints that decide the intervals. Since we also have to enumerate which sanitized data points are fed to the $\soa$, the number of experts will grow by some amount, but remains adequate to run a private OPE algorithm.

An issue of the above construction is that the output hypothesis of the $\soa$ may not belong to $\HH$ and its mistakes cannot be observed on the sanitized sequence. Moreover, the structure of the output hypotheses of the $\soa$ is hard to characterize. We bypass this by replacing the $\soa$ with the online learner proposed by~\citet{hanneke2021online}. Their online learner has a slightly larger mistake bound of $O(d)$, but the output is guaranteed to be the majority of very few concepts in $\HH$. Thus, we only have to sanitize a moderately larger class. Now we have a set of experts such that one of them is no worse than the optimal $h^\star\in\HH$ by $\tilde{O}_d(\sqrt{T})$. This allows us to run any private OPE algorithm to obtain a private online learner for $\HH$.

Note that we can further reduce the number of mistakes made by the experts if we have a sanitizer with a better dependence on $\alpha$. Though we don't know if the current sample complexity of sanitization can be improved, we can still refine the above result by an alternative approach. This is due to an observation that we only need to detect if an expert already made a lot of mistakes rather than an absolute bound on the number of mistakes. We design an algorithm for this problem with sample complexity $\tilde{O}_d(1/\alpha^{1.5})$ based on~\citet{bousquet2020synthetic}'s framework, thereby achieving a better regret. We present below a simplified statement of our final result, which implies Theorem~\ref{thm:agnostic} by applying existing algorithms for private OPE~\citep{jain2014near,asi2024private}. The detailed results and proofs can be found in Appendix~\ref{sec:expert}.

\begin{theorem}
\label{thm:agnostic2}
    Let $\HH$ be a concept class with Littlestone dimension $d$. Suppose there exists an $(\varepsilon,\delta)$-differentially private algorithm for the OPE problem with $N$ experts and time horizon $T$ that has an expected regret of $R(\varepsilon,\delta, T, N)$. Then there exists a $(2\varepsilon,2\delta)$-differentially private online learner for $\HH$ with an expected regret of $R(\varepsilon,\delta, T, N) + \tilde{O}_d(T^{1/3}/\varepsilon^{2/3})$. Furthermore, if the regret bound for the OPE problem holds against an adaptive adversary, and the resulting regret bound for online learning $\HH$ also holds against an adaptive adversary.
\end{theorem}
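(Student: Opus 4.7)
The plan is to reduce private online learning of $\HH$ to the given OPE black box by privately building a polynomial-size family of experts one of which is guaranteed to closely track the optimal $h^\star\in\HH$, and then running the OPE algorithm on these experts. The total regret then decomposes into (i) the OPE regret $R(\varepsilon,\delta,T,N)$ on the constructed experts and (ii) the approximation gap between the best expert and $h^\star$, and the goal is to drive the latter down to $\tilde{O}_d(T^{1/3}/\varepsilon^{2/3})$.

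I would construct the experts following Section~\ref{sec:agnostic_expert}. Partition $[T]$ into batches of size $B$ to be chosen later; at the end of each batch, invoke the improved sanitization/threshold-testing procedure (with sample complexity $\tilde{O}_d(1/\alpha^{1.5})$, built on the framework of~\citet{bousquet2020synthetic}) to privately summarize the batch so that, for every $h\in\HH$, we can later check whether $h$ makes too many mistakes in any given union of batches. As the base learner I take the proper online learner of~\citet{hanneke2021online}, whose predictions are majority votes over a bounded number of concepts in $\HH$; this keeps expert predictions inside a slightly enlarged class to which the sanitizer's guarantee still lifts via Lemma~\ref{lem:convert_label}. Each expert is indexed by (i) a choice of at most $d$ \emph{update batches} and (ii) one sanitized point per update batch with which to correct the base learner; this yields a family of $N = T^{O(d)}$ experts that the OPE can ingest.

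For the approximation bound I would adapt the construction of~\citet{ben2009agnostic}: for any $h^\star\in\HH$, Hanneke's learner can make at most $O(d)$ mistakes on the sequence relabeled by $h^\star$, so there exist $\le d$ ideal correction points on which the learner, if fed the true labels of $h^\star$, would thereafter agree with $h^\star$. Because the sanitizer preserves the mistake structure of every $h$ and of every relevant majority-vote hypothesis up to $\alpha$, some concrete expert mimics this ideal run with extra cumulative loss at most $\tilde{O}_d(\alpha T) + O(dB)$, the latter accounting for intra-batch slippage between successive corrections. Budgeting privacy $\varepsilon$ across the $T/B$ sanitization calls by appropriate composition and then balancing against the $\alpha T + dB$ approximation bound yields the claimed $\tilde{O}_d(T^{1/3}/\varepsilon^{2/3})$ after a routine optimization of $B$.

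For privacy, the sanitizer outputs as a whole form an $(\varepsilon,\delta)$-DP function of the raw input stream, and the OPE predictions are a further $(\varepsilon,\delta)$-DP function, so basic composition gives the stated $(2\varepsilon,2\delta)$-DP guarantee; adaptivity is inherited because both the sanitizer and the assumed OPE algorithm tolerate adaptive inputs, and the experts are just a deterministic post-processing of the sanitizer's output. The main obstacle I anticipate is the approximation guarantee against an adaptive adversary: sanitization's error is only a statistical guarantee, whereas an adaptive adversary can concentrate hard examples around batch boundaries or exploit past predictions. To close this, I would exploit the fact that the sanitizer is itself DP, so its output is essentially independent of the adversary's future choices, and then invoke a martingale / potential-function argument across batches to control the accumulated discrepancy between the ideal expert and $h^\star$ in expectation.
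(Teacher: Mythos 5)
Your high-level plan matches the paper's template: construct a family of $N = T^{O(d)}$ experts from sanitized data, using the online learner of Hanneke--Yang (whose outputs live in the sparse-majority class $\HH_{m,1/2}$) as the base predictor, exploit the improved $\tilde{O}_d(1/\alpha^{1.5})$-sample realizable sanitizer, and feed the experts to a private OPE black box. You also correctly see that the two DP components compose to $(2\varepsilon,2\delta)$. But the interval-sanitization mechanism you propose is genuinely different from the paper's and does not reach $\tilde{O}_d(T^{1/3}/\varepsilon^{2/3})$.

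The paper does \emph{not} partition $[T]$ into $T/B$ disjoint blocks and sanitize each once. Instead it uses the binary mechanism for continual observation (Fact~\ref{fact:binary}, Algorithm~\ref{alg:binary}): it sanitizes every dyadic interval, so that each data point participates in only $O(\log T)$ sanitizer calls (hence the per-call privacy budget is $\varepsilon/O(\log T)$ by basic composition, not a fraction of $\varepsilon$ shared over $T/B$ calls), and every interval $[l,r]$ can be covered by $O(\log T)$ dyadic pieces. Because the realizable sanitizer's sample complexity is $\tilde{O}_d(1/\alpha^{1.5})$, an interval of length $n$ has absolute error $n\alpha(n)\propto n^{1/3}$, so the interval-uniform bound is $\Delta=\max_n n\alpha(n)=\tilde{O}_d(T^{1/3}/\varepsilon^{2/3})$ \emph{independently of which interval you query}. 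An expert indexed by arbitrary switch times $i_1<\dots<i_M$ (not batch boundaries) therefore suffers only $O(M\log T\cdot\Delta)=\tilde{O}_d(T^{1/3}/\varepsilon^{2/3})$ extra loss, with no intra-batch slippage.

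In contrast, with disjoint blocks of size $B$ the per-block absolute error is $\approx B^{1/3}/\varepsilon^{2/3}$, a union of $k$ blocks has error $\approx kB^{1/3}/\varepsilon^{2/3}$, worst case $T/(B^{2/3}\varepsilon^{2/3})$, and you additionally pay the $O(dB)$ slippage from restricting switch points to block boundaries. Optimizing $B$ gives at best $\tilde{O}_d(T^{3/5}/\varepsilon^{2/5})$, not $T^{1/3}$. So the binary-mechanism / tree-based aggregation is not an implementation detail --- it is the key step, and your account omits it.

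Two smaller points. First, you invoke Lemma~\ref{lem:convert_label} to lift the sanitizer to the labeled class; the paper instead directly sanitizes the class $\HH_{m,1/2}\oplus\HH$ of disagreement predicates, whose Littlestone/dual parameters are controlled by Lemma~\ref{lem:bool} and Lemma~\ref{lem:bool_dual} (Claim~\ref{cla:dim_h}), which is the form needed to count mistakes of arbitrary $h\in\HH$ against the base learner's output. Second, your worry about adaptivity and the proposed martingale fix are unnecessary: the realizable sanitizer's guarantee is a high-probability statement about data that is already fixed when the sanitizer is invoked; the experts are a DP post-processing of the raw stream; and the OPE layer is assumed to tolerate adaptive losses. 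The paper's Theorem~\ref{thm:experts} and the composition argument suffice without any potential-function machinery.
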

\section{Discussion and Future Work}
\label{sec:conclusion}
In this work, we study online learning under differential privacy. For the realizable setting, we propose an algorithm with an $O_d(\log T)$ mistake bound against any adaptive adversary, which significantly improves the previous result of~\citet{golowich2021littlestone} and achieves an optimal dependence on $T$. For the agnostic setting, our algorithm achieves a regret of $\tilde{O}_d(\sqrt{T})$, which achieves nearly the same rate as the non-private case in terms of $T$ up to logarithmic factors. 

We discuss some potential future directions below.

\textbf{Proper private online learning.}
Our optimal algorithms for the realizable and agnostic settings are improper. For the realizable setting, it is known that a mistake bound of $O_d(\mathrm{polylog}(T))$ is attainable without privacy~\citep{daskalakis2022fast}. We ask if this is also possible under differential privacy. For the agnostic setting, our Algorithm~\ref{alg:simple} has a suboptimal $\tilde{O}_d(T^{3/4})$ regret. We believe this can be improved to $\tilde{O}_d(\sqrt{T})$ and leave it as future work.

\textbf{Dependence on $d$.} All of our algorithms incur a doubly exponential dependence on $d$. We wonder if the dependence can be improved to polynomial as in private PAC learning~\citep{ghazi2021sample}.\footnote{The realizable case against an oblivious adversary was solved by~\citet{lyu2025privatelearninglittlestoneclasses}.}

\textbf{Unknown horizon $T$.} In this work, we assume the time horizon $T$ is known in advance. This requirement can be removed by the classical doubling trick --- splitting the input sequence into buckets of lengths $1,2,4,8,\cdots$ and running our algorithm on each bucket separately (with $T=1,2,4,8,\cdots$). This does not affect our regret bound for the agnostic setting but will increase the mistake bound of our realizable learner (Algorithm~\ref{alg:realizable}) to $O_d(\log^2 T)$. Whether a mistake bound of $O_d(\log T)$ is still achievable without knowing $T$ in advance is an interesting question.

\section*{Acknowledgements}
The research was supported in part by an NSFC grant 62432008, RGC RIF grant R6021-20, an RGC TRS grant T43-513/23N-2, RGC CRF grants C7004-22G, C1029-22G and C6015-23G, NSFC/RGC grant CRS\_HKUST601/24 and RGC GRF grants 16207922, 16207423, 16203824 and 16211123.

{

\bibliographystyle{plainnat}
\bibliography{references.bib}

}


\newpage
\appendix

\section{Additional Preliminaries}
\label{sec:add}

\subsection{PAC Learning}
Let $P$ be a distribution over domain $\X$ and $h$ be a hypothesis, we write $P(h)$ to denote $\E_{x\sim P}[h(x)]$. For an unlabeled dataset $S\in\X^n$, we write $\hat{P}_S$ to denote the empirical distribution over $S$. Given two hypotheses $h_1$ and $h_2$, we define $h_1\oplus h_2$ as the hypothesis such that $(h_1\oplus h_2)(x) = \I[h_1(x)\neq h_2(x)]$ for all $x\in \X$. For two hypothesis classes $\HH_1$ and $\HH_2$, define $\HH_1\oplus \HH_2 = \{h_1\oplus h_2:h_1\in\HH_1,~ h_2\in\HH_2\}$. The \emph{generalization disagreement} between $h_1$ and $h_2$ with respect to $P$ is defined as $\dis_{P}(h_1, h_2) = P(h_1\oplus h_2)$. The \emph{empirical disagreement} between $h_1$ and $h_2$ is defined as $\dis_S(h_1,h_2) = \dis_{\hat{P}_S}(h_1, h_2)$.

We then take into account the labels. For a distribution $P$ over $\X\times\{0, 1\}$. The \emph{generalization error} of a hypothesis $h$ with respect to $P$ is defined as $\err_{P}(h) = \Pr_{(x,y)\sim P}[h(x)\neq y]$. Recall that $h^{\mathrm{label}}((x,y)) = \I[h(x)\neq y]$, we have $\err_{P}(h) = P(h^{\mathrm{label}})$. For a labeled dataset $S\in(\X\times \{0,1\})^n$, the \emph{empirical error} of $h$ with respect to $S$ is defined as $\err_{S}(h) = \err_{\hat{P}_S}(h)$.

We now introduce the PAC learning model. In this model, the learner takes as input a labeled dataset $S$ with each element sampled from some unknown distribution $P$. Moreover, it is guaranteed that there exists some $h^\star\in\HH$ that labels all the data points. The task of the leaner is to find a hypothesis that minimizes the generalization error.

\begin{definition}[PAC Learning~\citep{valiant1984theory}]
    An algorithm $\A$ is said to be an $(\alpha,\beta)$-PAC learner for concept class $\HH$ with sample complexity $n$ if for any distribution $P$ over $\X\times \{0, 1\}$ such that $\Pr_{(x,y)\sim P}[h^\star(x) = y] = 1$ for some $h^\star\in\HH$, it takes as input a dataset $S=((x_1,y_1),\dots,(x_n,y_n))$, where every $(x_i, y_i)$ is drawn i.i.d. from $P$, and outputs a hypothesis $h$ satisfying
    \begin{equation*}
       \Pr[\err_P(h) \le \alpha] \ge 1 - \beta,
    \end{equation*}
    where the probability is taken over the random generation of $S$ and the random coins of $\A$.
\end{definition}

In contrast to PAC learning, the agnostic learning model~\citep{haussler1992decision,kearns1994toward} imposes no assumptions on the underlying distribution. The objective is to identify a hypothesis whose generalization error is close to that of the best one in $\HH$.

\begin{definition}[Agnostic Learning]
    An algorithm $\A$ is said to be an $(\alpha,\beta)$-agnostic learner for concept class $\HH$ with sample complexity $n$ if for any distribution $P$ over $\X\times \{0, 1\}$, it takes as input a dataset $S=((x_1,y_1),\dots,(x_n,y_n))$, where every $(x_i, y_i)$ is drawn i.i.d. from $P$, and outputs a hypothesis $h$ satisfying
    \begin{equation*}
        \Pr[\err_P(h) \le \inf_{h^\star\in \HH}\err_P(h^\star) + \alpha] \ge 1 - \beta,
    \end{equation*}
    where the probability is taken over the random generation of $S$ and the random coins of $\A$.
\end{definition}
    
A learner $\A$ is said to be \emph{proper} if it always output some $h\in\HH$. Otherwise we say $\A$ is \emph{improper}.

\begin{definition}[Growth Function]
    Let $S = (x_1,\dots, x_n)$ be an unlabeled dataset. The projection of $\HH$ onto $S$ is defined as
    \begin{equation*}
        \Pi_{\HH}(S) = \{((x_1, h(x_1)), \dots, (x_n,h(x_n))) : h\in \HH\}.
    \end{equation*}
    The growth function of $\HH$ is defined as $\Pi_\HH(n) = \max_{S\in\X^n} \lvert \Pi_{\HH}(S) \rvert $.
\end{definition}
    
Now we can define the Vapnik-Chervonenkis (VC) dimension~\citep{vapnik1971uniform}, which characterizes the PAC and agnostic learnability of a concept class.

\begin{definition}
    Let $\HH$ be a concept class over $\X$. The VC dimension of $\HH$, denoted by $\vc(\HH)$, is the largest $d$ such that $\Pi_\HH(d) = 2^d$.
\end{definition}

Also, one can view $\X$ as the concept class and $\HH$ as the domain. The dual VC dimension of $\HH$ is then defined as $\vc^\star(\HH) = \vc(\X)$.

The following Sauer's lemma~\citep{sauer1972density} states that the growth function is polynomially bounded as long as the class has finite VC dimension.

\begin{lemma}[Sauer's Lemma]
    Let $\HH$ be a concept class with VC dimension $d_V$. Then $\Pi_{\HH}(n) \le 2^n$ for any $n\le d_V$ and 
    \begin{equation*}
        \Pi_{\HH}(n)\le \sum_{i=0}^{d_V}\binom{n}{i} \le \left(\frac{en}{d_V}\right)^{d_V}
    \end{equation*}
    for all $n > d_V$.
\end{lemma}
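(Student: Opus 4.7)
The plan is to prove Sauer's lemma by double induction on the point-count $n$ and the VC dimension $d_V$. Throughout, by choosing a set $S \in \X^n$ attaining $\Pi_{\HH}(n)$ and passing to the induced family, I can identify $\HH$ with a set system on an $n$-element ground set of size $\Pi_{\HH}(n)$, and I may assume $\HH = \Pi_\HH(S)$.

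The first part is immediate: when $n \le d_V$, we have $\Pi_\HH(n) \le 2^n$ simply because there are at most $2^n$ possible labeling patterns of any $n$-element set. So I focus on the inductive claim $\Pi_\HH(n) \le \sum_{i=0}^{d_V}\binom{n}{i}$ for $n > d_V$. The base cases are $n = 0$ (trivially $|\HH| \le 1$) and $d_V = 0$ (if $\HH$ shatters no singleton, then $|\HH| \le 1 = \binom{n}{0}$). For the inductive step, fix a distinguished point $x \in S$ and partition $\HH$ according to the value assigned to $x$: let $\HH_0 = \{h \in \HH : h(x)=0\}$ and $\HH_1 = \{h\in\HH : h(x)=1\}$. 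Define the ``doubled'' family
\[
\HH' = \bigl\{h|_{S\setminus\{x\}} : h \in \HH_0 \text{ and there exists } h'\in\HH_1 \text{ with } h|_{S\setminus\{x\}} = h'|_{S\setminus\{x\}}\bigr\}.
\]
Then a straightforward double-counting gives $|\HH| = |\HH|_{S\setminus\{x\}}| + |\HH'|$, since each projection in $\HH|_{S\setminus\{x\}}$ contributes $1$ to the projection count and an extra $1$ exactly when both extensions are present.

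The key structural observation is that $\vc(\HH') \le d_V - 1$: if $\HH'$ were to shatter some $T \subseteq S\setminus\{x\}$ with $|T| = d_V$, then by construction $\HH$ itself would shatter $T \cup \{x\}$ of size $d_V + 1$, contradicting $\vc(\HH) = d_V$. Likewise $\vc(\HH|_{S\setminus\{x\}}) \le d_V$ trivially. Applying the inductive hypothesis on an $(n-1)$-element ground set to both families yields
\[
|\HH| \le \sum_{i=0}^{d_V}\binom{n-1}{i} + \sum_{i=0}^{d_V - 1}\binom{n-1}{i},
\]
which collapses to $\sum_{i=0}^{d_V}\binom{n}{i}$ by Pascal's identity $\binom{n-1}{i} + \binom{n-1}{i-1} = \binom{n}{i}$, completing the induction.

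For the closed-form upper bound, the standard estimate is to multiply by $(n/d_V)^{d_V - i} \ge 1$ inside the sum and extend to all $i$:
\[
\sum_{i=0}^{d_V}\binom{n}{i} \le \left(\frac{n}{d_V}\right)^{d_V}\sum_{i=0}^{d_V}\binom{n}{i}\left(\frac{d_V}{n}\right)^{i} \le \left(\frac{n}{d_V}\right)^{d_V}\sum_{i=0}^{n}\binom{n}{i}\left(\frac{d_V}{n}\right)^{i} = \left(\frac{n}{d_V}\right)^{d_V}\left(1+\frac{d_V}{n}\right)^{n},
\]
and then $(1 + d_V/n)^n \le e^{d_V}$ finishes the bound by $(en/d_V)^{d_V}$. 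I do not expect a genuine obstacle here; the only subtlety is being careful that the doubled family $\HH'$ is defined on the ground set $S\setminus\{x\}$ so that the induction on $n$ is legitimate, and that the shattering argument correctly adds back the distinguished coordinate $x$ to witness a $(d_V + 1)$-shattered set.
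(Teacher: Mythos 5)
Your proof is correct: it is the classical Sauer--Shelah argument (split on a distinguished point, bound the ``doubled'' family by VC dimension $d_V-1$, apply Pascal's identity, then the standard $(1+d_V/n)^n\le e^{d_V}$ estimate for the closed form). The paper does not prove this lemma at all --- it is stated as a cited classical result (Sauer, 1972) --- so there is nothing to compare against beyond noting that your argument is the canonical one and all the delicate points (the counting identity $|\HH|=|\HH|_{S\setminus\{x\}}|+|\HH'|$, the shattering of $T\cup\{x\}$, and the restriction to $n>d_V$ so that $(n/d_V)^{d_V-i}\ge 1$) are handled correctly.
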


In this work, we may use the following technical lemma together with Sauer's lemma to derive sample complexity bounds.

\begin{lemma}[\citep{shalev2014understanding}]
\label{lem:xgelogx}
    Let $a\ge 1$ and $b>0$. Then:
    \begin{equation*}
        x \ge4a\ln(2a) + 2b\Rightarrow x\ge a\ln(x) + b.
    \end{equation*}
\end{lemma}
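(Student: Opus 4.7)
The plan is to instantiate the expert-construction blueprint described just before the theorem: privately extract a polynomial-size pool of ``experts'' from the stream so that some expert tracks the best $h^\star \in \HH$ up to $\tilde O_d(\sqrt T)$ mistakes, then delegate the per-round decisions to the hypothesized black-box DP OPE algorithm.

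First I would privately maintain, throughout the $T$ rounds, a surrogate that detects when any hypothesis in a relevant class starts accumulating mistakes. Concretely, wrap the sanitization-based mistake detector alluded to in the text (sample complexity $\tilde O_d(1/\alpha^{1.5})$) with a binary-tree continual observation scheme, so that the $(\varepsilon,\delta)$-DP surrogate estimates the cumulative mistake counts of every concept in the class up to accuracy $\tilde O_d(1/\alpha^{1.5}\varepsilon)$ uniformly over the stream. An expert is then parameterized by (i) at most $O(d)$ ``reset points'' in $[T]$, and (ii) at most $O(d)$ labeled examples drawn from the sanitized data; between reset points the expert runs the Hanneke online learner of~\citet{hanneke2021online} on those inserted examples. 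Enumeration gives $N = T^{O(d)} \cdot |\text{sanitized alphabet}|^{O(d)}$ experts.

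Next I would prove the key coverage lemma. For any fixed $h^\star \in \HH$, the sequence relabeled by $h^\star$ is realizable, so the Hanneke learner makes at most $O(d)$ mistakes on it, and by construction it outputs majorities of $O_d(1)$ concepts in $\HH$. Each of those $O(d)$ mistakes can be ``patched'' by selecting the right reset point (from the mistake-count surrogate) and the right tournament example (from the sanitized data). Because the expert pool enumerates all such choices, there is always one expert whose losses on the \emph{true} stream differ from $h^\star$'s losses by at most $O(\alpha T)$ rounds, where $\alpha$ is the surrogate's accuracy. Balancing $O(\alpha T)$ against the sample cost of achieving $\alpha$ (which, after composing with continual observation, costs $\tilde O_d(T/(\alpha^{1.5}\varepsilon))$ of the privacy budget but is fixed by setting the appropriate batch length) gives the additive $\tilde O_d(T^{1/3}/\varepsilon^{2/3})$ term quoted in the theorem.

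Finally I would feed these $N$ experts' per-round 0/1 losses to the assumed $(\varepsilon,\delta)$-DP OPE algorithm, producing regret $R(\varepsilon,\delta,T,N)$ against the best expert. Combining with the coverage lemma yields the claimed regret bound. Privacy follows by basic composition: the expert pool depends on the stream only through the $(\varepsilon,\delta)$-DP sanitizer/detector, and the OPE is separately $(\varepsilon,\delta)$-DP, giving $(2\varepsilon,2\delta)$ overall. Adaptive robustness lifts cleanly because the OPE adaptive guarantee is assumed, the sanitizer satisfies DP under adaptive inputs, and the expert construction is a postprocessing of the DP surrogate. The main obstacle is the coverage lemma: one must verify that the detector's accuracy really does suffice to identify the correct reset points for every $h^\star$ simultaneously (a uniform-convergence-style argument over the relevant hypothesis class used by the Hanneke learner), and that the resulting expert count $N = T^{O(d)}$ is small enough that $\log N$ only contributes logarithmically inside $R(\varepsilon,\delta,T,N)$, so the advertised $\tilde O_d(\sqrt T)$ leading term survives.
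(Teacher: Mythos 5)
There is a genuine gap, and it is total: your proposal does not engage with the statement at all. The lemma you were asked to prove is a purely scalar, two-line calculus fact (cited from Shalev-Shwartz and Ben-David): for $a\ge 1$ and $b>0$, if $x \ge 4a\ln(2a)+2b$ then $x \ge a\ln(x)+b$. What you wrote instead is a sketch of the expert-construction result for agnostic private online learning (essentially Theorem~\ref{thm:agnostic2}/Theorem~\ref{thm:experts}): sanitizers, continual observation, the Hanneke learner, private OPE. None of those ingredients bears on the inequality $x\ge a\ln(x)+b$, and no step of your argument, even if fully fleshed out, would establish it. In this paper the lemma plays the opposite role to what you assume --- it is an elementary tool \emph{used inside} the sample-complexity calculations (e.g.\ to invert conditions of the form $n \ge a\ln n + b$ arising from Sauer's lemma), not a consequence of the learning-theoretic machinery.

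A correct proof is short and self-contained. By concavity of $\ln$, the tangent line at $x_0=2a$ gives $\ln(x) \le \frac{x}{2a} + \ln(2a) - 1$ for all $x>0$. Hence
\begin{equation*}
    a\ln(x) + b \;\le\; \frac{x}{2} + a\ln(2a) - a + b,
\end{equation*}
and it suffices to check that $a\ln(2a) - a + b \le x/2$, which follows immediately from the hypothesis $x \ge 4a\ln(2a) + 2b \ge 2a\ln(2a) + 2b$ (using $a\ge 1$, so $\ln(2a)\ge 0$). That is the entire argument; nothing about experts, privacy, or online learning is needed.
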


The following realizable generalization result~\citep{vapnik1971uniform,blumer1989learnability} suggests that given sufficient examples, with high probability every pair of concepts with a small empirical disagreement also has a small generalization disagreement.

\begin{lemma}[Realizable Generalization Bound]
\label{lem:realizable_generalization}
    Let $\HH$ be a concept class with VC dimension $d_V$ and $P$ be a distribution over $\X$. Suppose $S\in\X^n$ is a dataset of size $n$, where each element in $S$ is drawn i.i.d. from $P$ and 
    \begin{equation*}
        n \ge C\frac{d_V\ln(1/ \alpha) + \ln(1/ \beta)}{\alpha}
    \end{equation*}
    for some universal constant $C$ (i.e., $C$ does not depend on $\HH$ and $P$). Then with probability $1-\beta$ over the random generation of $S$, we have for all $h_1,h_2\in\HH$:
    \begin{itemize}
        \item If $\dis_P(h_1,h_2)\le \alpha$ then $\dis_S(h_1,h_2) \le 2\alpha$.
        \item If $\dis_S(h_1,h_2)\le \alpha$ then $\dis_P(h_1,h_2) \le 2\alpha$.
    \end{itemize}
\end{lemma}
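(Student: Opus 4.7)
The plan is to reduce the pairwise claim to a relative uniform convergence bound on the XOR class. For $h_1, h_2 \in \HH$, define $g_{h_1,h_2}(x) = \I[h_1(x) \neq h_2(x)]$ and set $\G = \HH \oplus \HH = \{g_{h_1,h_2} : h_1, h_2 \in \HH\}$. Then $\dis_P(h_1, h_2) = P(g_{h_1, h_2})$ and $\dis_S(h_1, h_2) = \hat{P}_S(g_{h_1, h_2})$, so the two bullets amount to the following statement about $\G$: with probability $1 - \beta$, every $g \in \G$ simultaneously satisfies (i) $P(g) \le \alpha \Rightarrow \hat{P}_S(g) \le 2\alpha$ and (ii) $\hat{P}_S(g) \le \alpha \Rightarrow P(g) \le 2\alpha$.

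First I would bound the complexity of $\G$. The restriction of $g_{h_1, h_2}$ to any set $T$ of size $m$ is fully determined by the pair $(h_1|_T, h_2|_T)$, so $\Pi_\G(m) \le \Pi_\HH(m)^2$. Sauer's lemma then yields $\Pi_\G(m) \le (em/d_V)^{2 d_V}$ for $m \ge d_V$.

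Next I would invoke the classical relative (normalized) form of the Vapnik--Chervonenkis inequality. By the standard ghost-sample symmetrization, a random-swap argument on $S \cup S'$, and a union bound over the at most $\Pi_\G(2n)$ realizable labelings, one obtains that with probability $1 - \beta$, every $g \in \G$ satisfies
\begin{equation*}
    |P(g) - \hat{P}_S(g)| \;\le\; \epsilon \sqrt{P(g)},
    \qquad
    \epsilon \;=\; O\!\left(\sqrt{\frac{d_V \log(n/d_V) + \log(1/\beta)}{n}}\right).
\end{equation*}
Plugging in the premises: if $P(g) \le \alpha$ then $\hat{P}_S(g) \le \alpha + \epsilon\sqrt{\alpha}$, which is at most $2\alpha$ whenever $\epsilon^2 \le \alpha$; conversely, if $\hat{P}_S(g) \le \alpha$ and the conclusion $P(g) \le 2\alpha$ failed, the bound would force $\epsilon\sqrt{P(g)} \ge P(g) - \alpha \ge P(g)/2$, giving $P(g) \le 4\epsilon^2$, contradicting $P(g) > 2\alpha$ as soon as $\epsilon^2 \le \alpha/2$. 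Both implications therefore hold once $\epsilon^2 \lesssim \alpha$, i.e., $n \gtrsim (d_V \log(n/d_V) + \log(1/\beta))/\alpha$; applying Lemma~\ref{lem:xgelogx} to solve this implicit inequality yields the stated $n \ge C(d_V \log(1/\alpha) + \log(1/\beta))/\alpha$.

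The main obstacle is choosing the correct flavor of uniform convergence: the crude additive Hoeffding/VC bound would only yield an $O(d_V/\alpha^2)$ sample size, losing the desired linear-in-$1/\alpha$ rate. Using Vapnik's original multiplicative (relative) inequality---or equivalently applying the realizable-case VC bound to the localized sub-family $\{g \in \G : P(g) \le \alpha\}$---is what delivers the faster $1/\alpha$ scaling, with the variance bound $\mathrm{Var}(g) \le P(g)$ playing the key role in the Bernstein-style concentration. Once this refinement is in place, every remaining step (Sauer, symmetrization, union bound, and the algebraic inversion via Lemma~\ref{lem:xgelogx}) is textbook.
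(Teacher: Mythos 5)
Your proposal is correct in substance, but note that the paper does not prove this lemma at all: it is stated as a classical result and attributed to Vapnik--Chervonenkis and Blumer et al., so there is no in-paper argument to match against. Your derivation is a standard and valid route to it: passing to the symmetric-difference class $\G=\HH\oplus\HH$, bounding its growth function by $\Pi_\HH(m)^2\le (em/d_V)^{2d_V}$, and invoking a relative (normalized) uniform convergence bound, then inverting the implicit sample-size condition via Lemma~\ref{lem:xgelogx}; this is essentially the argument contained in the cited sources, and you could equally have cited the paper's own Lemma~\ref{lem:relative} (with $\lambda=\Theta(1)$, $\mu=\Theta(\alpha)$, applied to $\G$ viewed as error against all-zero labels, in both directions) instead of re-deriving the ghost-sample bound. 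One small technical caveat: the two-sided inequality you write, $\lvert P(g)-\hat{P}_S(g)\rvert\le\epsilon\sqrt{P(g)}$, is not the textbook statement --- the standard pair of one-sided bounds normalizes the deviation $P(g)-\hat{P}_S(g)$ by $\sqrt{P(g)}$ but the deviation $\hat{P}_S(g)-P(g)$ by $\sqrt{\hat{P}_S(g)}$; converting the latter into your form costs an additive $O(\epsilon^2)$ term, which is harmless here since you anyway enforce $\epsilon^2\lesssim\alpha$, but the statement as written should be adjusted (or the constants in the case analysis rechecked) to be fully rigorous.
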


The above bound requires the disagreement to be small. This is in contrast to the following agnostic generalization bound, which provides an absolute upper bound on the difference between empirical error and generalization error. However, it incurs an extra factor of $1/\alpha$.
    
\begin{lemma}[Agnostic Generalization Bound~\citep{talagrand1994sharper}]
    Let $\HH$ be a concept class with VC dimension $d_V$ and $P$ be a distribution over $\X\times\{0, 1\}$. Suppose $S\in(\X\times\{0, 1\})^n$ is a dataset of size $n$, where each element in $S$ is drawn i.i.d. from $P$ and 
    \begin{equation*}
        n \ge C\frac{d_V + \ln(1/ \beta)}{\alpha^2}
    \end{equation*}
    for some universal constant $C$. Then with probability $1-\beta$ over the random generation of $S$, we have $\sup_{h\in\HH}\lvert\err_S(h) - \err_P(h)\rvert\le \alpha$.
\end{lemma}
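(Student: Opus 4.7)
The plan is to follow the classical double-sample symmetrization argument and then apply chaining to the resulting Rademacher process, in order to shave off the $\log(1/\alpha)$ factor that a naive union bound would produce. Throughout I would work with the composed class $\HH^{\mathrm{label}} = \{h^{\mathrm{label}} : h \in \HH\}$ over $\X\times\{0,1\}$, observing that $\vc(\HH^{\mathrm{label}}) = \vc(\HH) = d_V$ because composing each hypothesis with the fixed label flip preserves shattering. Hence Sauer's lemma gives $\Pi_{\HH^{\mathrm{label}}}(m) \le (em/d_V)^{d_V}$, and since $\err_P(h) = P(h^{\mathrm{label}})$ and $\err_S(h) = \hat P_S(h^{\mathrm{label}})$, the target statement reduces to a uniform deviation bound for a $\{0,1\}$-valued class of VC dimension $d_V$.

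First I would introduce a ghost sample $S' = (z_1',\dots,z_n')$ drawn i.i.d.\ from $P$ independently of $S$. Applying Chebyshev's inequality to $\err_{S'}(h)-\err_P(h)$ for a candidate maximizer of the deviation on $S$ (whose variance is at most $1/(4n)$) and averaging over $S'$ yields, for $n = \Omega(1/\alpha^2)$, the standard symmetrization inequality
\begin{equation*}
\Pr\bigl[\sup_h |\err_S(h) - \err_P(h)| > \alpha\bigr] \le 2\Pr\bigl[\sup_h |\err_S(h) - \err_{S'}(h)| > \alpha/2\bigr].
\end{equation*}
Since the joint law of $(S, S')$ is invariant under independently swapping each pair $(z_i, z_i')$, I can inject Rademacher signs $\sigma_i \in \{\pm 1\}$ and upper bound the right-hand side by $2\Pr[\sup_h |\frac{1}{n}\sum_i \sigma_i (h^{\mathrm{label}}(z_i) - h^{\mathrm{label}}(z_i'))| > \alpha/2]$. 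Crucially, conditional on the $2n$ points, $\HH^{\mathrm{label}}$ induces at most $(2en/d_V)^{d_V}$ distinct labelings.

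At this point, a crude union bound over those labelings combined with Hoeffding's inequality already yields a sample complexity of order $(d_V\log(1/\alpha) + \log(1/\beta))/\alpha^2$ after inverting the resulting transcendental inequality via Lemma~\ref{lem:xgelogx}. The main technical obstacle is to remove the stray $\log(1/\alpha)$ factor, and this is precisely what Talagrand's sharper bound buys. To achieve it I would replace the union bound by a chaining argument (Dudley's entropy integral) applied to the Rademacher process indexed by $\HH^{\mathrm{label}}$: Sauer's lemma gives $L_2(\hat P_{S\cup S'})$ covering numbers of order $(C/\epsilon^2)^{d_V}$, so the entropy integral $\int_0^1 \sqrt{\log \N(\epsilon)/n}\,d\epsilon$ converges and produces the expectation bound $\E \sup_h |\frac{1}{n}\sum_i \sigma_i h^{\mathrm{label}}(z_i)| = O(\sqrt{d_V/n})$ with no $\log(1/\alpha)$ dependence. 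A bounded-differences (or Talagrand) concentration inequality then lifts this expectation bound to a high-probability statement, contributing the $\log(1/\beta)/\alpha^2$ term, and a final rearrangement recovers exactly the claimed sample complexity $n = O((d_V + \log(1/\beta))/\alpha^2)$.
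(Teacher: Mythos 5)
The paper does not prove this lemma at all: it is imported verbatim from the literature (Talagrand's sharper bounds for empirical processes), so there is no internal proof to compare against. Your sketch follows the standard modern route to that result---reduce to the loss class $\HH^{\mathrm{label}}$ (and your observation that $\vc(\HH^{\mathrm{label}})=\vc(\HH)$ is correct), symmetrize, bound the Rademacher supremum by chaining, and lift to high probability by concentration---and that route does deliver the stated $n = O((d_V+\log(1/\beta))/\alpha^2)$ sample complexity.

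However, there is a genuine gap at exactly the step that is supposed to kill the $\log(1/\alpha)$ factor. You claim that ``Sauer's lemma gives $L_2(\hat P_{S\cup S'})$ covering numbers of order $(C/\epsilon^2)^{d_V}$.'' Sauer's lemma does not give this: it only bounds the number of distinct labelings on the $2n$ points by $(2en/d_V)^{d_V}$, which is a covering bound at the finest scale and depends on $n$, not on $\epsilon$. Feeding that into Dudley's entropy integral yields $\E\sup_h\bigl|\frac1n\sum_i\sigma_i h^{\mathrm{label}}(z_i)\bigr| = O\bigl(\sqrt{(d_V/n)\log(n/d_V)}\bigr)$, which after inversion reproduces the $d_V\log(1/\alpha)$ term you set out to remove, so the argument as written proves only the weaker bound. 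The scale-sensitive, $n$-independent bound $\N(\epsilon,\HH^{\mathrm{label}},L_2(\mu)) \le (C/\epsilon)^{O(d_V)}$ that makes $\int_0^1\sqrt{\log \N(\epsilon)/n}\,d\epsilon = O(\sqrt{d_V/n})$ is Haussler's packing theorem (or an equivalent, e.g.\ generic chaining/majorizing-measure arguments as in Talagrand's original proof), a substantially deeper fact than Sauer's lemma. If you replace the appeal to Sauer at that point by Haussler's bound, the rest of your outline (conditional chaining plus bounded-differences or Talagrand concentration, then rearranging) is sound and gives the lemma as stated.
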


In PAC (and agnostic) learning, the output hypothesis is required to have a low generalization error. In contrast, empirical learners produce hypotheses only with low empirical errors.

\begin{definition}[Empirical Learner~\citep{bun2015differentially}]
    An algorithm $\A$ is said to be an $(\alpha, \beta)$-PAC empirical learner for concept class $\HH$ with sample complexity $n$ if it takes as input a dataset $S\in(\X\times\{0, 1\})^n$ such that $\min_{h^\star\in\HH}\err_S(h^\star) = 0$, and outputs a hypothesis $h$ satisfying
    \begin{equation*}
        \Pr[\err_S(h) \le \alpha]\ge 1 - \beta.
    \end{equation*}
    Similarly, an algorithm $\A$ is said to be an $(\alpha, \beta)$-agnostic empirical learner for concept class $\HH$ with sample complexity $n$ if it takes as input a dataset $S\in(\X\times\{0, 1\})^n$ and outputs a hypothesis $h$ satisfying
    \begin{equation*}
        \Pr[\err_S(h) \le \min_{h^\star\in\HH}\err_{S}(h^\star) + \alpha]\ge 1 - \beta.
    \end{equation*}
\end{definition}

When there are no privacy constraints, empirical learners can be trivially constructed. The following lemma shows that one can create private empirical learners from private learners.

\begin{lemma}[\citep{li2025private}, Based on~\citep{bun2015differentially}]
\label{lem:pac_empirical}
    Let $\A$ be an $(\varepsilon,\delta)$-differentially private $(\alpha, \beta)$-PAC learner for $\HH$ with sample complexity $n$, where $\varepsilon\le 1$ and $n \ge 1/\varepsilon$. Then there exists a $(1, O(\delta/\varepsilon))$-differentially private $(\alpha, \beta)$-PAC empirical learner $\A'$ for $\HH$ with sample complexity $O(\varepsilon n)$. Moreover, if $\A$ is proper, then $\A'$ is also proper.
\end{lemma}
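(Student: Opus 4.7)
The plan is to construct $\A'$ via a sample-amplification reduction. On input $S \in (\X\times\{0,1\})^m$ with $m = \Theta(\varepsilon n)$, the algorithm $\A'$ forms $S' = (w_1,\ldots,w_n)$ by drawing $n$ i.i.d.\ samples from the empirical distribution $\hat{P}_S$ and returns $h = \A(S')$. Properness transfers directly from $\A$ to $\A'$, since if $\A(S') \in \HH$ then so is the output.

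For utility, the empirical realizability of the input (i.e., $\min_{h^\star \in \HH}\err_S(h^\star) = 0$) forces every draw from $\hat{P}_S$ to be correctly labeled by the witnessing $h^\star \in \HH$. Hence $S'$ is a realizable i.i.d.\ sample of the exact size $n$ required by $\A$, and the PAC guarantee of $\A$ produces, with probability at least $1-\beta$, a hypothesis $h$ with $\err_{\hat{P}_S}(h) \le \alpha$. Since $\err_{\hat{P}_S}(h) = \err_S(h)$, this is exactly the empirical learner conclusion.

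For privacy, I would combine a slotwise coupling with group privacy of $\A$. Fix neighboring inputs $S, \tilde S$ differing only at index $i^* \in [m]$, and couple the two resampling procedures slot-by-slot so that the $j$-th slot of $S'$ agrees across the two runs whenever it does not draw index $i^*$. The number of disagreeing slots $K$ is $\mathrm{Binomial}(n, 1/m)$-distributed with mean $n/m = \Theta(1/\varepsilon)$. Setting $K_0 = 1/\varepsilon$ so that $K_0 \varepsilon = 1$, group privacy of $\A$ applied to $K_0$ coupled changes yields, on the event $\{K \le K_0\}$, the $(K_0\varepsilon,\, K_0 e^{(K_0-1)\varepsilon}\delta) = (1, O(\delta/\varepsilon))$-DP inequality between the output distributions under $S$ and $\tilde S$. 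The complementary tail event is absorbed into the additive $\delta$-slack via a multiplicative Chernoff bound after taking the constant in $m = c\varepsilon n$ small enough that $\mathbb{E}[K]$ is well below $K_0$.

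The main obstacle is ensuring the Chernoff tail on $\{K > K_0\}$ actually fits inside the $O(\delta/\varepsilon)$ budget uniformly across the parameter regime. A Chernoff bound suffices in the moderate regime after tightening the constant $c$; for very small $\delta$ a cleaner alternative is to replace the i.i.d.\ resampling with a deterministic $(n/m)$-fold repetition of $S$ followed by a uniformly random permutation. Under any single-entry substitution in $S$, this coupling has \emph{exactly} $k = n/m = 1/\varepsilon$ mismatched positions, so group privacy of $\A$ delivers $(1, O(\delta/\varepsilon))$-DP with no tail term whatsoever. Utility of this deterministic variant follows by comparing the permutation-of-$k$-copies distribution to $n$ i.i.d.\ samples from $\hat{P}_S$ and absorbing the (constant) statistical-distance loss into a constant blow-up of the sample size in $m$, so that the PAC guarantee of $\A$ is preserved up to rescaling of $\beta$. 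Properness is preserved in either variant, yielding the final $(1, O(\delta/\varepsilon))$-DP empirical learner of sample complexity $O(\varepsilon n)$.
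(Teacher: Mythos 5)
The paper does not actually prove this lemma (it imports it from \citet{li2025private}), so I can only judge your argument on its own merits. Your overall skeleton --- blow the $m$-point input up to an $n$-point input, get utility by viewing the blown-up input as i.i.d.\ draws from the realizable empirical distribution $\hat{P}_S$, and get privacy by group privacy over the $\approx 1/\varepsilon$ entries that change when one input point changes --- is the right (Bun et al.-style) route, and the utility half of your first variant and the privacy half of your second variant are fine. The problem is that neither variant is correct in both halves, and the glue you propose does not work. In the i.i.d.-resampling variant, the additive privacy loss you obtain is $O(\delta/\varepsilon)+\Pr[K>K_0]$, and a Chernoff bound only gives $\Pr[K>1/\varepsilon]\le e^{-\Omega(1/\varepsilon)}$. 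Since the lemma places no lower bound on $\delta$ (and this paper works in the regime $\delta=T^{-\omega(1)}$), $e^{-\Omega(1/\varepsilon)}$ is in general not $O(\delta/\varepsilon)$, so "tightening the constant $c$" cannot close this; no choice of constant makes a binomial tail vanish. (Also, a small sign slip: to push $\E[K]=n/m$ below $K_0=1/\varepsilon$ you need the constant in $m=c\varepsilon n$ to be \emph{large}, not small.)

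Your proposed rescue --- deterministic $(n/m)$-fold repetition plus a random permutation --- has clean privacy, but its utility argument is genuinely wrong. The total variation distance between a random permutation of $k$ copies of each of the $m$ points and $n=km$ i.i.d.\ draws from $\hat{P}_S$ is not an absorbable constant: an i.i.d.\ multinomial sample has an exactly balanced histogram with probability that is exponentially small in $m$, so the TV distance tends to $1$, and with a black-box learner $\A$ you cannot transfer the PAC guarantee across a distance-$\approx 1$ change of input distribution. Moreover, "absorbing the statistical-distance loss into a constant blow-up of the sample size" is not a meaningful operation here --- TV loss converts into an additive increase of the failure probability $\beta$, not into sample-complexity overhead, and this particular distance does not shrink as $m$ grows. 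So as written, variant one fails the privacy bound for small $\delta$ and variant two fails utility; to complete the proof you need some additional idea that gives a \emph{worst-case} (not merely high-probability) bound on the number of changed entries while still presenting $\A$ with an input on which its distributional guarantee can legitimately be invoked, which is precisely the step your proposal leaves open.
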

\begin{remark}
    A similar result can be derived for transforming agnostic learners to agnostic empirical learners~\citep{bun2019simultaneous}. However, this could be suboptimal in terms of $\alpha$ since agnostic learning requires $\Omega(\vc(\HH) / \alpha^2)$ examples~\citep{simon1996general} even without privacy.
\end{remark}

\subsection{Concentration Inequalities}

\begin{lemma}[Hoeffding's Inequality~\citep{hoeffding1963probability}]
\label{lem:hoeffding}
    Let $Z_1,\dots,Z_n$ be independent bounded random variables with $Z_i\in[a, b]$. Then
    \begin{equation*}
        \Pr\left[\sum_{i=1}^n\left(\E[Z_i] - Z_i\right)\ge t\right]\le \exp\left(-\frac{2t^2}{n(b-a)^2}\right)
    \end{equation*}
    for all $t \ge 0$.
\end{lemma}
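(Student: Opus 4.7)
The plan is to prove Hoeffding's inequality by the classical Chernoff (exponential moment) method combined with Hoeffding's lemma for single bounded random variables. Let $X_i = \E[Z_i] - Z_i$, so that $\E[X_i] = 0$ and $X_i \in [a - \E[Z_i],\, b - \E[Z_i]]$, an interval of length $b - a$. The goal becomes bounding $\Pr[\sum_{i=1}^n X_i \ge t]$.

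First, I would apply Markov's inequality to the exponential: for any $s > 0$,
\begin{equation*}
\Pr\!\left[\sum_{i=1}^n X_i \ge t\right] = \Pr\!\left[e^{s\sum_i X_i} \ge e^{st}\right] \le e^{-st}\,\E\!\left[e^{s\sum_i X_i}\right].
\end{equation*}
Independence of the $Z_i$ (hence of the $X_i$) then lets me factor the moment generating function as $\prod_{i=1}^n \E[e^{sX_i}]$.

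The heart of the argument is Hoeffding's lemma: for a zero-mean random variable $X \in [\alpha,\beta]$, one has $\E[e^{sX}] \le \exp(s^2(\beta-\alpha)^2/8)$. I would prove this by using convexity of $x \mapsto e^{sx}$ to write, for $x \in [\alpha,\beta]$,
\begin{equation*}
e^{sx} \le \frac{\beta - x}{\beta - \alpha}\, e^{s\alpha} + \frac{x - \alpha}{\beta - \alpha}\, e^{s\beta},
\end{equation*}
then taking expectations (using $\E[X] = 0$) to obtain a function of $s$ of the form $\varphi(u) := -\gamma u + \ln(1 - \gamma + \gamma e^u)$ where $u = s(\beta - \alpha)$ and $\gamma = -\alpha/(\beta - \alpha) \in [0,1]$. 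A Taylor expansion of $\varphi$ about $u = 0$, together with the bound $\varphi''(u) \le 1/4$ for all $u$ (which follows since $\varphi''(u) = p(1-p)$ with $p = \gamma e^u / (1 - \gamma + \gamma e^u) \in [0,1]$), gives $\varphi(u) \le u^2/8$, which is the stated bound. Applying this with $\alpha = a - \E[Z_i]$ and $\beta = b - \E[Z_i]$ yields $\E[e^{sX_i}] \le \exp(s^2(b-a)^2/8)$ uniformly in $i$.

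Combining gives $\Pr[\sum_i X_i \ge t] \le \exp\!\bigl(-st + n s^2 (b-a)^2/8\bigr)$, and the final step is to optimize the exponent over $s > 0$. Differentiating in $s$ yields the minimizer $s^\ast = 4t / (n(b-a)^2)$, at which the exponent equals $-2t^2 / (n(b-a)^2)$, completing the proof. The one genuinely technical step is Hoeffding's lemma itself; the convex upper bound is immediate, but the clean $u^2/8$ estimate on $\varphi(u)$ requires the second-derivative computation and the observation that $p(1-p) \le 1/4$. Everything else is bookkeeping: Markov, independence to factor the MGF, and a one-variable minimization.
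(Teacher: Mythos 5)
Your proof is correct: the Chernoff/exponential-moment argument, Hoeffding's lemma via convexity of $x\mapsto e^{sx}$ and the bound $\varphi''(u)=p(1-p)\le 1/4$, and the optimization $s^\ast = 4t/(n(b-a)^2)$ giving exponent $-2t^2/(n(b-a)^2)$ all check out. The paper does not prove this lemma at all --- it is quoted as a standard result with a citation to Hoeffding (1963) --- and your argument is exactly the classical proof from that reference, so there is nothing further to compare.
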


\begin{lemma}[Mcdiarmid's Inequality for Permutations~\citep{McDiarmid_1989,golowich2021littlestone,talagrand1995concentration,costello2013concentration}]
\label{lem:mcdiarmid}
    Suppose $f:\Z^n\to \R$ is some function such that $\lvert f(\bar{z}_1,\dots,\bar{z}_n) - f(\bar{z}_1',\dots,\bar{z}_n')\rvert \le c$ for any two sequences $(\bar{z}_1,\dots,\bar{z}_n)$ and $(\bar{z}_1',\dots,\bar{z}_n')$ that differ in at most one element. Let $(z_1,\dots, z_n)\in\Z^n$ be some fixed sequence and $\pi$ be a random permutation over $[n]$, then we have
    \begin{equation*}
        \Pr\left[\E[f(z_{\pi(1)},\dots,z_{\pi(n)})] - f(z_{\pi(1)},\dots,z_{\pi(n)}) \ge  r\right] \le \exp\left(-\frac{2r^2}{9nc^2}\right).
    \end{equation*}
\end{lemma}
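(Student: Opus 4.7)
The plan is to reduce this concentration bound for random permutations to the standard McDiarmid bounded-differences inequality for \emph{independent} random variables, by representing the uniform random permutation as the output of a Fisher--Yates shuffle. First I would introduce independent random variables $U_1, \dots, U_n$ with $U_i$ uniform on $\{i, i+1, \dots, n\}$, and define the permuted sequence iteratively: start with the array $a^{(0)} = (z_1, \dots, z_n)$, and at step $i$ swap the content at position $i$ with the content at position $U_i$ to produce $a^{(i)}$. It is a textbook fact that $a^{(n)}$ is distributed as $(z_{\pi(1)}, \dots, z_{\pi(n)})$ for a uniformly random permutation $\pi$. Let $g(u_1, \dots, u_n)$ denote the value of $f$ applied to this final array.

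Second, I would prove a per-coordinate bounded-differences property for $g$. Fix $i$ and compare the runs with $(U_1, \dots, U_i, \dots, U_n)$ and $(U_1, \dots, U_i', \dots, U_n)$; the arrays agree up through step $i-1$. At step $i$, the two swaps differ, and an easy induction on $j = i, i+1, \dots, n$ shows that $a^{(j)}$ and its primed counterpart agree in every position except at most two: intuitively, one "wandering" element may take different trajectories through later positions, but because the subsequent random choices $U_{i+1}, \dots, U_n$ are identical, the arrays remain a transposition of each other on their disagreement set. The bounded-differences hypothesis on $f$ (which bounds its variation under a single-coordinate change by $c$, and hence under a two-coordinate change by $2c$) then yields
\[
\bigl|g(U_1,\dots,U_i,\dots,U_n) - g(U_1,\dots,U_i',\dots,U_n)\bigr| \le 2c.
\]

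Third, since $U_1, \dots, U_n$ are independent, I would apply the standard McDiarmid / Azuma--Hoeffding inequality to the Doob martingale of $g$ with respect to the filtration generated by $U_1, \dots, U_n$. With per-coordinate bound $2c$, this yields a one-sided tail of the form
\[
\Pr\bigl[\mathbb{E} g - g \ge r\bigr] \;\le\; \exp\!\left(-\frac{2r^2}{\sum_{i=1}^n (2c)^2}\right) \;=\; \exp\!\left(-\frac{r^2}{2nc^2}\right),
\]
which is already at least as strong as the claimed $\exp(-2r^2/(9nc^2))$; loosening the constant absorbs any slack in the trajectory-tracking step and gives the stated bound.

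The main obstacle I expect is the coupling argument in the second step. One must track carefully how a single alteration $U_i \mapsto U_i'$ propagates through the remaining swaps, since even though $U_{i+1}, \dots, U_n$ are the same, they act on positions whose contents may differ. The clean way to handle this is the invariant, proved by induction on $j \ge i$, that the two arrays at time $j$ are either identical or differ by a single transposition of two entries; this is preserved by every subsequent swap step because swapping two positions whose contents either both agree or form one differing pair produces the same two outcomes, just possibly relocated.
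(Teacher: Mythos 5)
The paper does not prove this lemma at all --- it is imported from the literature (McDiarmid, Talagrand, Costello--Vu, and the appendix of Golowich--Livni) --- so there is no in-paper proof to compare against; I can only assess your argument on its own merits. Your overall strategy (realize the random permutation by a Fisher--Yates shuffle driven by independent $U_1,\dots,U_n$ and apply the standard bounded-differences inequality to $g(U_1,\dots,U_n)$) is sound and can indeed be turned into a complete proof of exactly the stated bound.

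However, the key coupling step contains a genuine error. Your claimed invariant --- that after altering $U_i \mapsto U_i'$ the two arrays ``are either identical or differ by a single transposition of two entries'' --- is false already immediately after step $i$. Before step $i$ both runs hold the same array $a$; run A swaps positions $i$ and $U_i$, run B swaps positions $i$ and $U_i'$, and when $i$, $U_i$, $U_i'$ are distinct the resulting arrays differ in \emph{three} positions (they are related by a $3$-cycle: position $i$ holds $a[U_i]$ versus $a[U_i']$, position $U_i$ holds $a[i]$ versus $a[U_i]$, position $U_i'$ holds $a[U_i']$ versus $a[i]$). A concrete instance: $z=(1,2,3)$, $i=1$, $U_1=2$, $U_1'=3$ gives $(2,1,3)$ versus $(3,2,1)$, differing everywhere, and subsequent identical swaps can never shrink this below three. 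The correct invariant is that the discrepancy set has size at most three and never grows (applying the same positional swap to both arrays only relocates discrepancies), so the per-coordinate bound is $3c$, not $2c$, and your claimed intermediate tail $\exp\left(-r^2/(2nc^2)\right)$ does not follow from this argument. The fix is painless and in fact explains the lemma's constant: McDiarmid with $n$ coordinates each of bounded difference $3c$ yields $\exp\left(-\frac{2r^2}{n(3c)^2}\right)=\exp\left(-\frac{2r^2}{9nc^2}\right)$, which is precisely the stated inequality. (If you really want the sharper $\exp\left(-r^2/(2nc^2)\right)$, you would need the finer Doob-martingale argument that conditions on $\pi(1),\dots,\pi(i)$ and couples the conditional laws by a transposition, which is not the plain independent-coordinate McDiarmid application you invoke.)
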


\begin{lemma}[Chernoff Bound, Sampling Without Replacement~\citep{chernoff1952measure,hoeffding1963probability}]
\label{lem:chernoff}
    Let $Z_1,\dots,Z_n$ be random variables drawn without replacement from $(z_1,\dots, z_N)\in\{0, 1\}^N$ ($N\ge n$) and $Z=\sum_{i = 1}^n Z_i$ denote their sum. Then for any $t\in(0, 1)$, we have
    \begin{equation*}
        \Pr\left[Z \le (1-t)\E[Z]\right]\le \exp\left(-\frac{t^2\E[Z]}{2}\right).
    \end{equation*}
\end{lemma}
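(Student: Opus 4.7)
The plan is to reduce the without-replacement setting to the with-replacement (i.i.d.) one via Hoeffding's classical convexity argument, and then apply the standard multiplicative lower-tail Chernoff bound. Set $\mu = \E[Z]$, and note $\mu = n\bar z$ where $\bar z = \frac{1}{N}\sum_{j=1}^N z_j$. Let $Z_1',\dots,Z_n'$ be i.i.d. samples drawn \emph{with} replacement from the same multiset $(z_1,\dots,z_N)$, so that each $Z_i'$ is Bernoulli with mean $\bar z$ and $\E\bigl[\sum_i Z_i'\bigr] = \mu$ as well.

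First I would apply Markov's inequality to $e^{-\lambda Z}$ for arbitrary $\lambda>0$:
\begin{equation*}
    \Pr[Z \le (1-t)\mu] \;=\; \Pr[e^{-\lambda Z} \ge e^{-\lambda(1-t)\mu}] \;\le\; e^{\lambda(1-t)\mu}\,\E\!\left[e^{-\lambda Z}\right].
\end{equation*}
The key step is then the MGF comparison: for every convex function $\phi:\mathbb{R}\to\mathbb{R}$,
\begin{equation*}
    \E\!\left[\phi(Z_1+\dots+Z_n)\right] \;\le\; \E\!\left[\phi(Z_1'+\dots+Z_n')\right],
\end{equation*}
which is Hoeffding's 1963 theorem on sampling without replacement. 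Applied to $\phi(u)=e^{-\lambda u}$, this shows $\E[e^{-\lambda Z}] \le \E[e^{-\lambda \sum_i Z_i'}] = \prod_i \E[e^{-\lambda Z_i'}]$ by independence in the with-replacement model.

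From here the argument is the textbook multiplicative Chernoff derivation for a sum of independent $[0,1]$-valued random variables with mean $\mu$. Using $\E[e^{-\lambda Z_i'}] \le \exp\!\bigl(\bar z(e^{-\lambda}-1)\bigr)$ (from $1+x\le e^x$ applied to each Bernoulli-style factor), one obtains $\E[e^{-\lambda Z}] \le \exp\!\bigl(\mu(e^{-\lambda}-1)\bigr)$, so
\begin{equation*}
    \Pr[Z\le(1-t)\mu] \;\le\; \exp\!\left(\mu(e^{-\lambda}-1) + \lambda(1-t)\mu\right).
\end{equation*}
Optimizing at $\lambda=\ln\frac{1}{1-t}$ and then using the standard inequality $(1-t)\ln(1-t)+t \ge t^2/2$ for $t\in(0,1)$ yields the claimed bound $\exp(-t^2\mu/2)$.

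The only nontrivial ingredient is Hoeffding's MGF comparison between sampling with and without replacement; the rest is routine. I would state this comparison as a cited fact (it appears in Hoeffding 1963) rather than reprove it, since its proof—writing the without-replacement sum as an average over all permutations and applying Jensen's inequality to the convex $\phi$—is standard and orthogonal to the rest of the paper.
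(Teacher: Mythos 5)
Your proposal is correct: the Markov/MGF step, Hoeffding's convex-ordering comparison between sampling without and with replacement, the bound $\E[e^{-\lambda Z_i'}]\le\exp(\bar z(e^{-\lambda}-1))$, the choice $\lambda=\ln\frac{1}{1-t}$, and the inequality $(1-t)\ln(1-t)+t\ge t^2/2$ all go through (with the degenerate case $\E[Z]=0$ holding trivially). Note that the paper does not prove this lemma at all---it is stated as a classical fact with citations to Chernoff (1952) and Hoeffding (1963)---and your argument is exactly the standard derivation underlying those citations, so there is nothing to reconcile.
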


\begin{lemma}[Coupon Collector]
\label{lem:coupon}
    Let $X_1,\dots, X_m$ be i.i.d. drawn from the uniform distribution over $[n]$. Suppose $m\ge 2n$ and $4\ln(1/\beta)\le k\le n$, then 
    \begin{equation*}
        \Pr[\lvert\{j\in[k]:\exists i\in[m],~X_i = j\}\rvert > k / 2] \ge 1-\beta.
    \end{equation*}
\end{lemma}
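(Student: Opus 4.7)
The plan is to control the number of coupons in $[k]$ that remain \emph{uncollected} after $m$ draws. For each $j \in [k]$, define $Y_j = \I[X_i \neq j \text{ for all } i \in [m]]$ and let $Y = \sum_{j=1}^k Y_j$. The event in the lemma statement is equivalent to $\{Y < k/2\}$, so it suffices to upper bound $\Pr[Y \geq \lceil k/2 \rceil]$ by $\beta$.

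\textbf{Key steps.} First, I would observe that if $Y \geq \lceil k/2 \rceil$ then one can pick a subset $S \subseteq [k]$ of size exactly $\lceil k/2 \rceil$ all of whose elements are uncollected. A union bound over the $\binom{k}{\lceil k/2\rceil}$ such subsets then gives
\begin{equation*}
    \Pr\left[Y \geq \lceil k/2 \rceil\right] \leq \binom{k}{\lceil k/2\rceil} \cdot \Pr\left[X_i \notin S_0 \text{ for all } i \in [m]\right]
\end{equation*}
for an arbitrary fixed $S_0$ with $|S_0| = \lceil k/2 \rceil$. By independence of the $X_i$'s across $i$, the inner probability equals $(1 - |S_0|/n)^m \leq \exp(-m|S_0|/n)$. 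The assumptions $m \geq 2n$ and $|S_0| \geq k/2$ yield $m|S_0|/n \geq k$, so this factor is at most $e^{-k}$. Combined with the standard estimate $\binom{k}{\lceil k/2\rceil}\leq 2^k$, this gives $\Pr[Y \geq \lceil k/2\rceil] \leq (2/e)^k$. Since $1-\ln 2 > 1/4$, we have $(2/e)^k \leq e^{-k/4}$, and the hypothesis $k \geq 4\ln(1/\beta)$ then yields $e^{-k/4} \leq \beta$, finishing the proof.

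\textbf{Main obstacle.} The main subtlety is that the indicators $Y_j$ are not independent --- they are negatively correlated through the shared sample $(X_1,\dots,X_m)$ --- so a direct Chernoff bound on $Y$ would require either a version tailored to negatively associated random variables or a Poissonization argument. The union-bound-over-subsets approach sidesteps this cleanly by exploiting independence of the $X_i$'s \emph{across samples} rather than across the coordinates $j$. The price is the combinatorial factor $\binom{k}{\lceil k/2\rceil} \approx 2^k$, which must be dominated by the empty-collision probability $e^{-k}$; the hypothesis $m \geq 2n$ is precisely what provides the needed slack, since it forces $\exp(-m|S_0|/n) \leq e^{-k}$ when $|S_0|\geq k/2$.
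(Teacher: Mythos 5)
Your proof is correct and takes essentially the same approach as the paper: both pass to the complement event (at least $\lceil k/2\rceil$ coupons in $[k]$ uncollected), union-bound over the $\binom{k}{\lceil k/2\rceil}$ candidate subsets, and use $(1-|S|/n)^m\le e^{-m|S|/n}\le e^{-k}$ together with $\binom{k}{\lceil k/2\rceil}\le 2^k$ and $k\ge 4\ln(1/\beta)$ to conclude. The only cosmetic difference is that you spell out the final numerical step via $(2/e)^k\le e^{-k/4}$, whereas the paper states the same bound more tersely.
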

\begin{proof}
    For any $S\subseteq [k]$, we have
    \begin{equation*}
        \Pr[\forall i\in[m],~X_i\notin S] = \left( 1 - \frac{\lvert S\rvert}{n}\right)^m \le \exp(-m\lvert S\rvert /n).
    \end{equation*}
    Therefore, 
    \begin{align*}
        &{}\Pr[\lvert\{j\in[k]:\exists i\in[m],~X_i = j\}\rvert > k / 2] \\
        =& 1 - \Pr[\exists S\subseteq[k]~with~\lvert S\rvert = \lceil k / 2\rceil~s.t.~\forall i\in[m],~X_i\notin S] \\
        \ge& 1 - \binom{k}{\lceil k / 2\rceil}\exp(-m\lceil k / 2\rceil / n) \\
        \ge& 1 - 2^k\exp(-k) \\
        \ge& 1 - \beta.
    \end{align*}
\end{proof}

\subsection{Closure Bounds Under Boolean Aggregation}

The following notion of $0$-covering number was introduced by~\citet{rakhlin2015sequential}.

\begin{definition}[\citep{rakhlin2015sequential}]
    Let $\T$ be an $\X$-valued tree of depth $n$ and $V$ be a set of $\{0, 1\}$-valued tree of depth $n$. We say $V$ is a $0$-cover of $\HH$ on $\T$ if for any $h\in\HH$ and $y_1,\dots, y_n\in\{0, 1\}^n$, there exists $\V\in V$ such that
    \begin{equation*}
        \forall t\in[n],~h(\T_t(y_1,\dots, y_{t-1})) = \V_t(y_1,\dots, y_{t-1}).
    \end{equation*}
    The $0$-covering number of $\HH$ on $\T$ is defined as
    \begin{equation*}
        \N(0, \HH, \T) = \min_{V\text{ is a }0\text{-cover of }\HH\text{ on }\T} \lvert V\rvert.
    \end{equation*}
    Also, define
    \begin{equation*}
        \N(0, \HH, n) = \max_{\T\text{ is an }\X\text{-valued tree of depth }n}\N(0, \HH, \T).
    \end{equation*}
\end{definition}

They proved the following upper bound on the $0$-covering number for Littlestone classes, which can be seen as an analogy of the celebrated Sauer's lemma on trees.

\begin{lemma}[\citep{rakhlin2015sequential}]
\label{lem:cover}
    Let $\HH$ be a concept class with Littlestone dimension $d$. Then we have $\N(0, \HH, n) \le 2^n$ for any $n\le d$ and
    \begin{equation*}
        \N(0, \HH, n) \le \sum_{i = 0}^d\binom{n}{i}\le \left(\frac{en}{d}\right)^d
    \end{equation*}
    for all $n > d$.
\end{lemma}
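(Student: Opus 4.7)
The plan is to prove the uniform bound $\N(0,\HH,n)\le \sum_{i=0}^{d}\binom{n}{i}$ by induction on $n+d$, mirroring the classical Sauer--Shelah argument for VC dimension adapted to the tree setting. This implies both stated claims: for $n\le d$ the sum collapses to $2^n$ since $\binom{n}{i}=0$ when $i>n$, and for $n>d$ the standard estimate $\sum_{i\le d}\binom{n}{i}\le (n/d)^d(1+d/n)^n\le (en/d)^d$ gives the second inequality. The base cases are easy: $n=0$ admits a single empty cover tree, and $d=0$ forces all $h\in\HH$ to agree on every $x\in\X$ (otherwise a depth-$1$ tree would be shattered), so $|\HH|\le 1$.

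For the inductive step with $n,d\ge 1$, fix an $\X$-valued tree $\T$ of depth $n$ with root labeled $x$ and left and right subtrees $\T_L,\T_R$ of depth $n-1$. Partition $\HH=\HH_0\sqcup\HH_1$ according to $h(x)$ and set $d_b=\ld(\HH_b)$. A standard Littlestone argument yields $\min(d_0,d_1)\le d-1$: if both $\HH_0,\HH_1$ shattered depth-$d$ trees $\T^{(0)},\T^{(1)}$, attaching them as children of a new root labeled $x$ would produce a depth-$(d+1)$ tree shattered by $\HH$, contradicting $\ld(\HH)=d$.

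The crux is a combinatorial lemma: $\N(0,\HH_b,\T)\le \max\bigl(\N(0,\HH_b,\T_L),\N(0,\HH_b,\T_R)\bigr)$. Since every $h\in\HH_b$ forces the cover tree's root label to be $b$, on paths with $y_1=0$ only the left subtree of the cover tree is constrained, and on paths with $y_1=1$ only the right. Hence, given covers $V^L,V^R$ of $\HH_b$ on $\T_L,\T_R$ with $|V^L|\ge|V^R|$, one builds a cover of $\HH_b$ on $\T$ of size $|V^L|$ by listing trees with root $b$ whose left subtrees enumerate $V^L$ and whose right subtrees cycle (with repetition) through $V^R$ so that every element of $V^R$ appears. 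Combined with the disjoint-root-label bound $\N(0,\HH,\T)\le \N(0,\HH_0,\T)+\N(0,\HH_1,\T)$ and the inductive hypothesis (WLOG $d_1\le d-1$), this yields
\begin{equation*}
\N(0,\HH,\T)\le \sum_{i=0}^{d}\binom{n-1}{i}+\sum_{i=0}^{d-1}\binom{n-1}{i}=\sum_{i=0}^{d}\binom{n}{i}
\end{equation*}
by Pascal's identity, completing the induction.

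The main obstacle is the combinatorial lemma. A naive decomposition that treats left and right subtree covers of $\HH_b$ independently yields $\N(0,\HH_b,\T_L)+\N(0,\HH_b,\T_R)$, which after summing over $b\in\{0,1\}$ produces a stray factor of $2$ that breaks the exact match with Pascal's identity. The non-trivial observation that saves us is that within $\HH_b$ the two subtrees of a cover tree constrain disjoint sets of paths, so a single list of $\max(|V^L|,|V^R|)$ trees can simultaneously realize both subtree covers, converting a ``sum'' recursion into the desired ``max.''
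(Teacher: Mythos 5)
Your proof is correct. The paper does not prove this lemma itself---it is quoted from \citet{rakhlin2015sequential}---and your argument (induction on $n+d$, splitting $\HH$ by the value at the root, the key ``max'' combination of the left- and right-subtree covers within each root-value class, and Pascal's identity) is exactly the sequential Sauer--Shelah proof given in that cited source, so there is nothing to fix.
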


The following fact directly follows from the definition of shattering on trees (for a rigorous proof, see~\citep{ghazi2021near}).
\begin{fact}
\label{fact:cover}
    Let $\HH$ be a concept class of Littlestone dimension $d$. Then $\N(0, \HH, d) = 2^d$.
\end{fact}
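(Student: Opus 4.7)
The plan is to prove both inequalities separately; the upper bound is essentially free from Lemma~\ref{lem:cover}, so the substantive content lies in the matching lower bound, which I will obtain from a direct shattering argument.

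For the upper bound $\N(0,\HH,d)\le 2^d$, I would simply invoke the first case of Lemma~\ref{lem:cover}, which states $\N(0,\HH,n)\le 2^n$ whenever $n\le d$.

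For the lower bound $\N(0,\HH,d)\ge 2^d$, I would exploit the fact that $\ld(\HH)=d$ to fix an $\X$-valued tree $\T$ of depth $d$ that is shattered by $\HH$, and then argue that \emph{this particular} tree already forces the $0$-cover to have size $2^d$ (which suffices by the definition of $\N(0,\HH,d)$ as a maximum over trees). For each branch $y=(y_1,\dots,y_d)\in\{0,1\}^d$, shattering gives an $h_y\in\HH$ with $h_y(\T_t(y_1,\dots,y_{t-1}))=y_t$ for all $t\in[d]$. Let $V$ be any $0$-cover of $\HH$ on $\T$. Applying the cover to the pair $(h_y,y)$, there must exist some $\V^{(y)}\in V$ satisfying $\V^{(y)}_t(y_1,\dots,y_{t-1})=y_t$ for every $t\in[d]$; that is, the tree $\V^{(y)}$ takes the value $y_t$ at the node identified by the prefix $(y_1,\dots,y_{t-1})$.

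The key step, and the only place any argument is really required, is showing that distinct branches $y\neq y'$ yield distinct trees $\V^{(y)}\neq \V^{(y')}$. Let $t^\star$ be the first index where $y$ and $y'$ differ; then the prefixes $(y_1,\dots,y_{t^\star-1})$ and $(y_1',\dots,y_{t^\star-1}')$ coincide, so both trees are constrained at the \emph{same} node of depth $t^\star$, but by different values $y_{t^\star}\neq y_{t^\star}'$. This forces $\V^{(y)}\neq \V^{(y')}$ as $\{0,1\}$-valued trees. Hence the map $y\mapsto \V^{(y)}$ injects $\{0,1\}^d$ into $V$, giving $\lvert V\rvert\ge 2^d$. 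Since $V$ was arbitrary, $\N(0,\HH,\T)\ge 2^d$, and combined with the upper bound we conclude $\N(0,\HH,d)=2^d$.

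I do not anticipate any serious obstacle: the argument is essentially a clean one-line observation about paths sharing prefixes, and the only subtlety is correctly unpacking the definition of a $0$-cover so that the constraint $\V_t(y_1,\dots,y_{t-1})=y_t$ is read at the specific node along the branch rather than universally.
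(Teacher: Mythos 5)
Your proof is correct. The paper itself gives no argument for this fact, merely asserting it ``directly follows from the definition of shattering on trees'' and deferring to \citep{ghazi2021near}; your argument is exactly the advertised direct unwinding of the definitions (upper bound from Lemma~\ref{lem:cover}, lower bound by picking a shattered tree and observing that branches sharing a prefix but diverging at step $t^\star$ force distinct cover elements at that node), and the injectivity step is airtight.
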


Let $G:\{0, 1\}^k\to\{0, 1\}$ be a boolean function and $\HH_1,\dots, \HH_k$ be $k$ hypothesis classes over domain $\X$. Define the hypothesis class $G(\HH_1,\dots, \HH_k)$ as
\begin{equation*}
    G(\HH_1,\dots, \HH_k) = \{G(h_1,\dots,h_k):h_1\in\HH_1,\dots,h_k\in\HH_k\},
\end{equation*}
where $G(h_1,\dots, h_k)(x) = G(h_1(x),\dots,h_k(x))$ for any $x\in\X$. The following lemma bounds the VC dimension and the Littlestone dimension of $G(\HH_1,\dots, \HH_k)$. The upper bound on the VC dimension is by a classical argument of~\citet{dudley1978central} (see~\citep{alon2020closure} for a detailed explanation) that leverages Sauer's lemma to bound the growth function. The upper bound on the Littlestone dimension is due to~\citep{ghazi2021near} in a similar manner using Lemma~\ref{lem:cover}.

\begin{lemma}
\label{lem:bool}
    Let $G: \{0, 1\}^k\to\{0,1\}$ be a boolean function and $\HH_1,\dots, \HH_k$ be $k$ hypotheses classes over domain $\X$. Let $d = \max_{i\in[k]}\ld(\HH_i)$ and $d_V = \max_{i\in[k]}\vc(\HH_i)$. Then we have
    \begin{equation*}
        \ld(G(\HH_1,\dots,\HH_k)) = O(kd\log k)
    \end{equation*}
    and
    \begin{equation*}
        \vc(G(\HH_1,\dots,\HH_k)) = O(kd_V\log k).
    \end{equation*}
\end{lemma}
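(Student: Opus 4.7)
My plan is to prove the two bounds in parallel, via the same two-step recipe: count how many distinct behaviors $G(\HH_1,\dots,\HH_k)$ can exhibit on a shattered set (or tree), then invoke Sauer's lemma (for the VC bound) or its tree analogue, Lemma \ref{lem:cover} (for the Littlestone bound), on each $\HH_i$. The crucial observation driving both arguments is that the restriction of $G(h_1,\dots,h_k)$ to any subset of $\X$ (or any branch of an $\X$-valued tree) is entirely determined by the corresponding restrictions of $h_1,\dots,h_k$, so aggregate behaviors multiply rather than add.

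For the VC bound, suppose $G(\HH_1,\dots,\HH_k)$ shatters a set $S\subseteq\X$ of size $n$, so $\lvert\Pi_{G(\HH_1,\dots,\HH_k)}(S)\rvert = 2^n$. By the multiplicativity above and Sauer's lemma applied to each $\HH_i$,
\begin{equation*}
  2^n \le \prod_{i=1}^k \lvert\Pi_{\HH_i}(S)\rvert \le \left(\frac{en}{d_V}\right)^{kd_V}.
\end{equation*}
Taking logarithms and substituting $m = n/d_V$ reduces this to $m \le k\log_2(em)$, from which Lemma \ref{lem:xgelogx} yields $m = O(k\log k)$, and hence $n = O(kd_V\log k)$.

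For the Littlestone bound, suppose $G(\HH_1,\dots,\HH_k)$ shatters an $\X$-valued tree $\T$ of depth $n$. I would first show $\N(0, G(\HH_1,\dots,\HH_k), \T) \ge 2^n$: shattering produces, for each branch $y\in\{0,1\}^n$, a hypothesis $h_y$ whose labels along the path $y$ equal $y$ itself, so any element of a 0-cover witnessing $h_y$ on this path must coincide with $y$ along it, forcing distinct branches to yield distinct cover elements. Next, given minimal 0-covers $V_i$ of $\HH_i$ on $\T$, the product family $\{G(\V_1,\dots,\V_k):\V_i\in V_i\}$ is a 0-cover of $G(\HH_1,\dots,\HH_k)$ on $\T$: for any $(h_1,\dots,h_k)$ and branch $y$, one picks $\V_i\in V_i$ witnessing coverage of $h_i$ along $y$ independently per coordinate, and then $G(\V_1,\dots,\V_k)$ agrees with $G(h_1,\dots,h_k)$ along $y$. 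Combining with Lemma \ref{lem:cover} gives $2^n \le (en/d)^{kd}$, and the same arithmetic yields $n = O(kd\log k)$. The only real subtlety is that the witness $\V_i$ is allowed to depend on the branch $y$, but this is harmless because the product family is large enough to pre-record every branch-wise combination of witnesses.
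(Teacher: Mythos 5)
Your proposal is correct and follows essentially the same route the paper takes: bound the projection/0-cover size of $G(\HH_1,\dots,\HH_k)$ multiplicatively, invoke Sauer's lemma (resp.\ Lemma~\ref{lem:cover}), and solve the resulting inequality via Lemma~\ref{lem:xgelogx}. The paper cites \citet{dudley1978central} and \citep{ghazi2021near} for Lemma~\ref{lem:bool} itself but spells out the identical argument (including the product-of-covers construction and the branch-dependent witness observation) in its proof of the dual analogue Lemma~\ref{lem:bool_dual}, so your write-up matches the paper's intended proof.
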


An analogous argument also leads to the following bounds on the dual VC dimension and the dual Littlestone dimension. We include a proof for completeness.

\begin{lemma}
\label{lem:bool_dual}
    Let $G: \{0, 1\}^k\to\{0,1\}$ be a boolean function and $\HH_1,\dots, \HH_k$ be $k$ hypotheses classes over domain $\X$. Let $d^\star = \max_{i\in[k]}\ld^\star(\HH_i)$ and $d_V^\star = \max_{i\in[k]}\vc^\star(\HH_i)$. Then we have
    \begin{equation*}
        \ld^\star(G(\HH_1,\dots,\HH_k)) = O(kd^\star\log k)
    \end{equation*}
    and
    \begin{equation*}
        \vc^\star(G(\HH_1,\dots,\HH_k)) = O(kd_V^\star\log k).
    \end{equation*}
\end{lemma}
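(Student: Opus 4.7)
I reduce both dual bounds to the primal Lemma~\ref{lem:bool} by introducing auxiliary classes over the product domain. For each $i\in[k]$, define
\begin{equation*}
    \G_i := \{g_{i,x} : x\in\X\},\qquad g_{i,x}(h_1,\dots,h_k) := h_i(x),
\end{equation*}
viewed as a concept class over $D := \HH_1\times\cdots\times\HH_k$. Since $g_{i,x}$ depends on its input only through the $i$-th coordinate, any $D$-valued Littlestone tree of depth $n$ shattered by $\G_i$ projects coordinate-wise to an $\HH_i$-valued tree of depth $n$ shattered by $\{x^\star:x\in\X\} = \HH_i^\star$; the analogous statement holds for shattered sets. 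Hence $\ld(\G_i)\le \ld^\star(\HH_i)\le d^\star$ and $\vc(\G_i)\le \vc^\star(\HH_i)\le d_V^\star$.

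Next, applying Lemma~\ref{lem:bool} to $\G_1,\dots,\G_k$ over the common domain $D$ gives
\begin{equation*}
    \ld(G(\G_1,\dots,\G_k)) = O(kd^\star\log k)\quad\text{and}\quad \vc(G(\G_1,\dots,\G_k)) = O(kd_V^\star\log k).
\end{equation*}
It then suffices to show that the dual of $G(\HH_1,\dots,\HH_k)$ is combinatorially dominated by $G(\G_1,\dots,\G_k)$. Suppose the dual class $\{\phi_x : x\in\X\}$, where $\phi_x(f) = f(x)$, shatters a $G(\HH_1,\dots,\HH_k)$-valued tree $\T'$ of depth $n$. For each node label $f$ of $\T'$, fix an arbitrary tuple $(h_1^*,\dots,h_k^*)$ with $G(h_1^*,\dots,h_k^*) = f$, and build a $D$-valued tree $\T$ by replacing each label of $\T'$ with its chosen tuple. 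For any branch $(y_1,\dots,y_n)$, let $x\in\X$ witness the dual shattering of $\T'$; then the function $G(g_{1,x},\dots,g_{k,x})\in G(\G_1,\dots,\G_k)$ satisfies
\begin{align*}
    G(g_{1,x},\dots,g_{k,x})(\T_t(y_1,\dots,y_{t-1})) &= G(h_1^*(x),\dots,h_k^*(x)) \\
    &= f(x) = \phi_x(\T'_t(y_1,\dots,y_{t-1})) = y_t
\end{align*}
for every $t\in[n]$. Thus $\T$ is shattered by $G(\G_1,\dots,\G_k)$, so $\ld^\star(G(\HH_1,\dots,\HH_k))\le \ld(G(\G_1,\dots,\G_k))$; a set-shattering version of the same lift yields the analogous VC inequality.

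The main point requiring care is the identification between $\G_i$, which lives over the product domain $D$, and the dual $\HH_i^\star$, which lives over $\HH_i$. The resolution is that $g_{i,x}$ forgets all coordinates except the $i$-th, so the shattering structure descends cleanly to $\HH_i$ (any would-be duplicate labels along a shattered branch are automatically forbidden by the shattering condition). Once this projection step is in place, the remainder is a routine lift of dual shatterings to primal shatterings via the product-domain representation, and the stated bounds follow directly from Lemma~\ref{lem:bool}.
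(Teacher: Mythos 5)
Your proof is correct, and it takes a genuinely different route from the paper. The paper proves Lemma~\ref{lem:bool_dual} directly, mirroring the proof of Lemma~\ref{lem:bool}: for the dual VC bound it decomposes a dataset over $G(\HH_1,\dots,\HH_k)$ into $k$ component datasets over $\HH_1,\dots,\HH_k$, bounds the projection of $\X$ by the product of the component projections, and invokes Sauer's lemma; for the dual Littlestone bound it does the analogous thing with $0$-covers and Lemma~\ref{lem:cover}. You instead reduce the dual statement to the primal Lemma~\ref{lem:bool} as a black box: you introduce the auxiliary classes $\G_i$ over the product domain $D=\HH_1\times\cdots\times\HH_k$, observe that $\ld(\G_i)\le\ld^\star(\HH_i)$ and $\vc(\G_i)\le\vc^\star(\HH_i)$ by coordinate projection (and, as you note, shattering of the original tree rules out any problematic label collisions along shattered branches after projection), and then exhibit a lift showing that any dual-shattered tree or set for $G(\HH_1,\dots,\HH_k)$ lifts to a primal-shattered tree or set for $G(\G_1,\dots,\G_k)$. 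The lift is valid: each node label $f$ is replaced by a preimage tuple under $G$, distinctness for the set case follows because $G$ maps distinct tuples to distinct $f$'s only when needed (coinciding tuples would force $f_j=f_{j'}$), and the witness $G(g_{1,x},\dots,g_{k,x})$ with the common $x$ is indeed an element of $G(\G_1,\dots,\G_k)$. Your approach buys a cleaner, reduction-style argument that avoids re-running the Sauer-type counting; the paper's direct proof is more self-contained and makes the quantitative dependence on the growth function visible in one place. Both ultimately rest on the same combinatorial facts (Sauer's lemma and its sequential analogue), so the two arguments are equivalent in strength.
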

\begin{proof}
    We bound the dual VC dimension first. Let
    \begin{equation*}
        S = (G(h_1^1,\dots, h_k^1), \dots, G(h_1^n,\dots,h_k^n))
    \end{equation*}
    be a dataset of size $n \ge d_V^\star$ over $G(\HH_1,\dots,\HH_k)$. Construct $k$ datasets $S^1,\dots, S^k$, where $S^i = (h_i^1,\dots, h_i^n)$ is a dataset over $\HH_i$ for every $i\in[k]$. By Sauer's lemma, we have (since the function $(e n / x)^x$ is monotonically increasing when $1\le x\le n$)
    \begin{equation*}
        \lvert\Pi_{\X}(S^i)\rvert \le \left(\frac{e n}{\vc^\star(\HH_i)}\right)^{\vc^\star(\HH_i)} \le \left(\frac{e n }{d_V^\star}\right)^{d_V^\star}.
    \end{equation*}
    Then we can bound the size of projection of $\X$ onto $S$:
    \begin{align*}
        \lvert \Pi_\X(S)\rvert &= \lvert\{(G(h_1^1,\dots, h_k^1)(x), \dots, G(h_1^n,\dots,h_k^n)(x)):x\in\X\}\rvert \\
        &= \lvert\{(G(h_1^1(x),\dots, h_k^1(x)), \dots, G(h_1^n(x),\dots,h_k^n(x))):x\in\X\}\rvert \\
        &\le \lvert\{(G(h_1^1(x_1),\dots, h_k^1(x_k)), \dots, G(h_1^n(x_1),\dots,h_k^n(x_k))):(x_1,\dots,x_k)\in\X^k\}\rvert \\
        &\le \lvert\{((h_1^1(x_1),\dots, h_k^1(x_k)), \dots, (h_1^n(x_1),\dots,h_k^n(x_k))):(x_1,\dots,x_k)\in\X^k\}\rvert \\
        &= \Pi_\X(S^1)\times \cdots \times \Pi_\X(S^k) \\
        &\le (e n / d_V^{\star})^{kd_V^\star}.
    \end{align*}
    This implies $\Pi_\X(n) \le (e n / d_V^{\star})^{kd_V^\star}$, which is $o(2^n)$ as $n\to\infty$. Therefore, $G(\HH_1,\dots,\HH_k)$ has finite dual VC dimension. Denote it by $D_V^\star$, taking $n = D_V^\star$ gives $2^{D_V^\star} \le (e D_V^\star / d_V^{\star})^{kd_V^\star}$. Solving the inequality yields $D_V^\star = O(kd_V^\star\log k)$.

    We now bound the dual Littlestone dimension in a similar way. Let $\T$ be a $G(\HH_1,\dots,\HH_k)$-valued tree with depth $n \ge d^\star$. Then there exists $k$ trees $\T^1,\dots,\T^k$ such that $\T^i$ is an $\HH_i$-valued tree with depth $n$ for every $i\in [k]$, and 
    \begin{equation*}
        \T_t(y_1,\dots,y_{t-1}) = G(\T_t^1(y_1,\dots,y_{t-1}), \dots, \T_t^k(y_1,\dots, y_{t-1}))
    \end{equation*}
    for all $t\in[n]$ and $(y_1, \dots, y_{t-1})\in\{0, 1\}^{t-1}$. By Lemma~\ref{lem:cover}, we have 
    \begin{equation*}
        \N(0, \X, \T^i) \le \left(\frac{e n }{\ld^\star(\HH_i)}\right)^{\ld^\star(\HH_i)} \le \left(\frac{e n }{d^\star}\right)^{d^\star}.
    \end{equation*}
    For every $i\in[k]$, pick a $0$-cover $V^i = \{\V^{i, 1},\dots,\V^{i,\lvert V^i\rvert}\}$ of $\X$ on $\T^i$ with size $\lvert V^i\rvert = \N(0, \X, \T^i)$. Construct
    \begin{equation*}
        V = \{\V^{j_1,\dots, j_k}:j_1\in[\lvert V^1\rvert],\dots,j_k\in[\lvert V^k\rvert]\},
    \end{equation*}
    where $\V^{j_1,\dots, j_k}$ is a $\{0, 1\}$-valued tree such that
    \begin{equation*}
        \V^{j_1,\dots, j_k}_t(y_1,\dots, y_{t-1}) = G(\V_t^{1,j_1}(y_1,\dots, y_{t-1}), \dots,\V_t^{k, j_k}(y_1,\dots,y_{t-1}))
    \end{equation*}
    for all $t\in[n]$ and $(y_1,\dots,y_{t-1})\in\{0,1\}^{t - 1}$. Then we have $\lvert V\rvert \le (en /d^\star)^{kd^\star}$. For any $x\in \X$ and $(y_1,\dots, y_n)\in\{0, 1\}^n$, for every $i\in[k]$ there exists $\V^{i, j_i}\in V^i$ such that
    \begin{equation*}
        \forall t\in[n],~x(\T_t^i(y_1,\dots,y_{t-1})) = \V_t^{i, j_i}(y_1,\dots, y_{t-1})
    \end{equation*}
    since $V^i$ is a $0$-cover of $\X$ on $\T^i$. As a consequence, we have for all $t\in [n]$:
    \begin{align*}
        x(\T_t(y_1,\dots, y_{t-1})) &= G(x(\T_t^1(y_1,\dots,y_{t-1})), \dots,x(\T_t^k(y_1,\dots,y_{t-1}))) \\
        &= G(\V_t^{1,j_1}(y_1,\dots,y_{t-1}), \dots,\V_t^{k, j_k}(y_1,\dots,y_{t-1})) \\
        &= \V^{j_1,\dots, j_k}_t(y_1,\dots,y_{t-1}).
    \end{align*}
    This means $V$ is a $0$-cover of $\X$ on $\T$. The desired upper bound is then implied by the same calculation as for the dual VC dimension.
\end{proof}

\subsection{Other Tools for Privacy}

One of the basic mechanisms for ensuring differential privacy is the Laplace mechanism.

\begin{definition}[Laplace Distribution]
    A random variable has probability distribution $\lap(b)$ if its probability density function is $f(x) = \frac{1}{2b}\exp(-\lvert x\rvert / b)$.
\end{definition}

\begin{definition}[Sensitivity]
    Let $f:\Z^n\to \R$ be a function. We say $f$ has sensitivity $\Delta$ if for any neighboring datasets $S_1$ and $S_2$, we have $\lvert f(S_1) - f(S_2)\rvert \le \Delta$.
\end{definition}

\begin{lemma}[Laplace Mechanism~\citep{dwork2006calibrating}]
    Let $f$ be a function with sensitivity $\Delta$. The mechanism that takes as input a dataset $S\in\Z^n$ and outputs $f(S) + X$ with $X\sim\lap(\Delta / \varepsilon)$ is $(\varepsilon, 0)$-differentially private. Moreover, we have 
    \begin{equation*}
        \Pr_{X\sim\lap(\Delta/\varepsilon)}\left[\lvert X\rvert \le \frac{\ln(1/\beta)\Delta}{ \varepsilon}\right] \ge 1- \beta.
    \end{equation*}
\end{lemma}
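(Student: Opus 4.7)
The lemma has two separate claims, so I will prove them in turn, both via direct calculation using the density of $\lap(b)$ with $b = \Delta/\varepsilon$.

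\textbf{Privacy.} Let $S, S' \in \Z^n$ be two neighboring datasets, and let $p_S, p_{S'}$ denote the densities of the random variables $f(S) + X$ and $f(S') + X$ respectively, where $X \sim \lap(\Delta/\varepsilon)$. By definition,
\begin{equation*}
p_S(y) = \frac{\varepsilon}{2\Delta}\exp\!\left(-\frac{\varepsilon\,|y - f(S)|}{\Delta}\right),
\end{equation*}
and similarly for $p_{S'}(y)$. I would take the pointwise ratio of densities, use the reverse triangle inequality $\bigl||y - f(S')| - |y - f(S)|\bigr| \le |f(S) - f(S')|$, and combine with the sensitivity bound $|f(S) - f(S')| \le \Delta$ to conclude $p_S(y)/p_{S'}(y) \le \exp(\varepsilon)$ for every $y \in \R$. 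Integrating this pointwise bound over any measurable set $O \subseteq \R$ yields $\Pr[f(S)+X \in O] \le e^{\varepsilon}\Pr[f(S')+X \in O]$, which is the $(\varepsilon,0)$-DP condition.

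\textbf{Tail bound.} For the concentration inequality, I would directly compute the tail probability. Letting $t = \ln(1/\beta)\,\Delta/\varepsilon = b\ln(1/\beta)$, symmetry of the Laplace density gives
\begin{equation*}
\Pr_{X \sim \lap(b)}[|X| > t] = 2\int_t^{\infty} \frac{1}{2b}\exp(-x/b)\,dx = \exp(-t/b) = \beta.
\end{equation*}
Taking complements gives the stated $1-\beta$ lower bound.

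\textbf{Anticipated obstacles.} Neither step has a real obstacle: the privacy argument is a pointwise density comparison driven entirely by the triangle inequality and the sensitivity hypothesis, and the tail bound is a one-line integral. The only issue worth stating carefully is that the density-ratio argument needs to be valid for \emph{every} $y \in \R$ (not just almost everywhere) so that pointwise domination transfers to arbitrary Borel sets $O$; this is automatic because both densities are everywhere positive and continuous. Since this lemma is cited as a standard tool rather than a new result, the proposal is essentially a reproduction of the classical argument from~\citet{dwork2006calibrating}.
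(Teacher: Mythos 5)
Your proof is correct: the pointwise density-ratio bound via the triangle inequality and sensitivity gives $(\varepsilon,0)$-DP, and the exact Laplace tail computation gives the concentration bound. The paper states this lemma as a standard tool cited from \citet{dwork2006calibrating} without proof, and your argument is precisely the classical one that citation refers to, so there is nothing to flag.
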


Given a finite set $R$ and a score function $q:\Z^n \times R \to \R$. We say $q$ has sensitivity $\Delta$ if $q(\cdot, h)$ has sensitivity $\Delta$ for all $h\in R$. The exponential mechanism takes as input a dataset $S\in\Z^n$ and selects an element $h\in R$ with probability
\begin{equation*}
    \frac{\exp(-\varepsilon\cdot q(S, h) / 2\Delta)}{\sum_{f\in R}\exp(-\varepsilon\cdot q(S, f) / 2\Delta)}.
\end{equation*}

\begin{lemma}[\citep{mcsherry2007mechanism}]
\label{lem:exp}
    The exponential mechanism is $(\varepsilon, 0)$-differentially private. Moreover, with probability $1-\beta$, it outputs an $h\in R$ such that
    \begin{equation*}
        q(S, h) \le \min_{f\in R} q(S, f) + \frac{2\Delta}{\varepsilon}\ln(\lvert R\rvert / \beta).
    \end{equation*}
\end{lemma}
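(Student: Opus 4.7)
The plan splits naturally into a privacy half and a utility half, each of which reduces to a short calculation once the sampling probability is written out explicitly.

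For privacy, I fix a pair of neighboring datasets $S, S'$ and an arbitrary output $h \in R$, and bound the ratio of the sampling probabilities pointwise by $e^{\varepsilon}$ (a pointwise bound on every singleton $\{h\}$ immediately extends to every subset of $R$). Writing the ratio using the definition of the exponential mechanism, it factors into two pieces: a numerator ratio $\exp(-\varepsilon q(S,h)/2\Delta)/\exp(-\varepsilon q(S',h)/2\Delta)$, and a normalization ratio $\sum_{f} \exp(-\varepsilon q(S', f)/2\Delta)\,/\,\sum_{f} \exp(-\varepsilon q(S, f)/2\Delta)$. The numerator piece is $\exp(\varepsilon(q(S',h)-q(S,h))/2\Delta)$, bounded by $e^{\varepsilon/2}$ by the sensitivity hypothesis. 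For the normalization piece, I bound each summand in the top sum termwise: $\exp(-\varepsilon q(S',f)/2\Delta) \le \exp(\varepsilon/2)\exp(-\varepsilon q(S,f)/2\Delta)$, again via $|q(S,f)-q(S',f)|\le \Delta$, so the whole ratio is at most $e^{\varepsilon/2}$. Multiplying the two factors gives $e^{\varepsilon}$, which is exactly what is needed.

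For utility, let $q^{\star} = \min_{f\in R} q(S,f)$, let $f^{\star}$ denote any minimizer (which exists because $R$ is finite), and set $t = \frac{2\Delta}{\varepsilon}\ln(\lvert R\rvert/\beta)$. I bound the probability that the mechanism returns some $h$ with $q(S,h) > q^{\star} + t$ by writing it as the ratio of a truncated sum (over high-score elements) to the full normalization. The numerator is at most $\lvert R\rvert \cdot \exp(-\varepsilon(q^{\star}+t)/2\Delta)$, since there are at most $\lvert R\rvert$ such elements and each has weight no larger than $\exp(-\varepsilon(q^{\star}+t)/2\Delta)$. The denominator is at least $\exp(-\varepsilon q^{\star}/2\Delta)$, since the single element $f^{\star}$ already contributes this much. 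Taking the ratio cancels the $\exp(-\varepsilon q^{\star}/2\Delta)$ factors, leaving $\lvert R\rvert \exp(-\varepsilon t/2\Delta) = \beta$ by the choice of $t$.

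There is no real obstacle beyond keeping the algebra tidy; the only ``trick'' is that the factor of $2$ in the exponent $\varepsilon/(2\Delta)$ in the sampling distribution is precisely tuned so that the sensitivity contributions from the numerator and the normalization each contribute $e^{\varepsilon/2}$, combining to give $e^{\varepsilon}$ overall. Once this decomposition is laid out, both halves of the lemma are elementary.
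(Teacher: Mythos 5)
Your proof is correct: both the privacy calculation (splitting the probability ratio into the numerator factor and the normalization factor, each bounded by $e^{\varepsilon/2}$ via the sensitivity of $q$) and the utility calculation (bounding the mass of the set $\{h : q(S,h) > \min_f q(S,f) + \tfrac{2\Delta}{\varepsilon}\ln(\lvert R\rvert/\beta)\}$ against the single minimizer's contribution to the normalizer) are exactly the standard argument for the exponential mechanism. The paper does not prove this lemma itself but cites \citet{mcsherry2007mechanism}, and your argument coincides with that standard proof, so there is nothing to add.
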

\section{Proof of Theorem~\ref{thm:realizable2}}
\label{sec:appendix_realizable}

We first prove the following important property of the subroutine $\mathsf{Update}$.

\begin{lemma}
\label{lem:update}
    Let $\HH$ be a concept class with Littlestone dimension $d$ and $\F$ be a subset of $\HH$. Let $S_1^s, \dots, S_{N_s}^s$ be the input of $\mathsf{Update}$ (Algorithm~\ref{alg:upd}) such that $\lvert S_i^s\rvert \le 2s$ for all $i\in[N_s]$.
    Define 
    \begin{equation*}
        I_f^s = \{i\in[N_{s} / 2]: S_{2i - 1}^{s}~and~S_{2i}^{s}~are~consistent~with~f\}
    \end{equation*}
    and $I_f^{s+1}$ similarly for the output $S_1^{s+1}, \dots, S_{N_{s+1}}^{s+1}$.

    Suppose for every $f\in \F$, we have
    \begin{equation*}
        \left\lvert\{i\in I_f^s:\soa(S_{2i-1}^s)\neq\soa(S_{2i}^s)\}\right\rvert \ge M.
    \end{equation*}
    Then for any $0<r_1\le\frac{M}{2} - 6$ and $r_2>0$, with probability at least
    \begin{equation*}
        1 - \left(\frac{e(4d + 1)N_s}{2d}\right)^d\cdot\left(\exp\left(-\frac{2r_1^2}{M}\right) + \exp\left(-\frac{2r_2^2}{9N_{s+1}}\right)\right),
    \end{equation*}
    it holds that for each $f\in\F$, either
    \begin{equation*}
        \lvert \{i\in I_f^{s+1}: \soa(S_{2i-1}^{s+1}) \neq \soa(S_{2i}^{s+1})\}\rvert > \frac{(M/2 - r_1)^2}{6N_{s+1}} - r_2,
    \end{equation*}
    or there exists some $h_0$ (depends on $f$) such that
    \begin{equation*}
        \lvert \{i\in I_f^{s+1}: \soa(S_{2i-1}^{s+1}) = \soa(S_{2i}^{s+1}) = h_0\}\rvert > \frac{(M/2 - r_1)^2}{6N_{s+1}} - r_2.
    \end{equation*}
\end{lemma}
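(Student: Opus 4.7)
The plan is to first handle a fixed $f \in \F$ and then lift the bound to all of $\F$ by a uniform convergence argument over the finite point set on which the relevant events depend. Fix $f \in \F$ and let $C_f = \{i \in I_f^s : \soa(S_{2i-1}^s) \neq \soa(S_{2i}^s)\}$, so $|C_f| \geq M$. For each $i \in C_f$, the tournament label $\bar{y}_i$ is an independent fair coin flip, and the new sequence placed at $S_{\pi(i)}^{s+1}$ is consistent with $f$ iff $f(\bar{x}_i) = \bar{y}_i$. Setting $I'_f = \{i \in C_f : f(\bar{x}_i) = \bar{y}_i\}$, Hoeffding's inequality (Lemma~\ref{lem:hoeffding}) gives $\Pr[|I'_f| < M/2 - r_1] \leq \exp(-2r_1^2/M)$. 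Crucially, the $\soa$ value $\eta_i := \soa(S_{\pi(i)}^{s+1})$ is a function of $S_{2i-1}^s, S_{2i}^s, \bar{x}_i, \bar{y}_i$ alone, hence independent of both $f$ and $\pi$.

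Next, condition on the coin flips $(\bar{y}_i)$ (which fixes $I'_f$ and the $\eta_i$) and analyze the random permutation $\pi$ on $[N_{s+1}]$. Let $N_{h_0}$ and $\mathrm{Col}$ denote, respectively, the counts appearing on the right-hand side of cases (b) and (a) in the lemma. Writing $K = |I'_f|$ and $n_{h_0} = |\{i \in I'_f : \eta_i = h_0\}|$, direct computation yields $\E_\pi[N_{h_0}] \approx n_{h_0}^2/(2 N_{s+1})$ and $\E_\pi[\mathrm{Col}] \approx (K^2 - \sum_{h_0} n_{h_0}^2)/(2 N_{s+1})$; the condition $r_1 \leq M/2 - 6$ ensures $K \geq 6$ so that the discrete corrections from $(n_{h_0}-1)/(N_{s+1}-1)$ absorb into the constants. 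Since swapping two positions in $\pi$ changes each count by $O(1)$, Lemma~\ref{lem:mcdiarmid} (McDiarmid for permutations) gives concentration with deviation $r_2$ failing with probability at most $\exp(-2r_2^2/(9N_{s+1}))$. The combinatorial heart is then a case split on $\max_{h_0} n_{h_0}$: if $\max_{h_0} n_{h_0} \geq K/\sqrt{3}$, taking $h_0^\star = \arg\max$ gives $\E[N_{h_0^\star}] \geq K^2/(6 N_{s+1})$, establishing case (b) after concentration; otherwise $\sum_{h_0} n_{h_0}^2 \leq K \max_{h_0} n_{h_0} < K^2/\sqrt{3}$, so $\E[\mathrm{Col}] > K^2(1-1/\sqrt{3})/(2 N_{s+1}) > K^2/(6N_{s+1})$, establishing case (a). Either way the threshold $(M/2-r_1)^2/(6N_{s+1}) - r_2$ is met, with per-$f$ failure probability at most $\exp(-2r_1^2/M) + \exp(-2r_2^2/(9N_{s+1}))$.

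Finally, to promote this guarantee uniformly over the (possibly infinite) class $\F$, I observe that all events above depend on $f$ only through its values on a fixed point set $X$ comprising (i) all data points in $S_1^s, \dots, S_{N_s}^s$ (at most $2sN_s \leq 2dN_s$ points, using the assumption $|S_i^s| \leq 2s$ and $s \leq d$) and (ii) the tournament points $\{\bar{x}_i : i \in [N_s/2]\}$, so $|X| \leq (4d+1)N_s/2$. Two hypotheses in $\HH$ that agree on $X$ induce identical $C_f$, $I'_f$, profile $(n_{h_0})$, and hence the same $h_0^\star$. Since $\vc(\HH) \leq \ld(\HH) = d$, Sauer's lemma bounds the number of distinct projections of $\HH$ onto $X$ by $(e(4d+1)N_s/(2d))^d$, and a union bound across projections yields the stated probability. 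The hardest part of the argument is the combinatorial case split with the $\sqrt{3}$ threshold together with verifying that $h_0^\star$ is measurable with respect to the projection of $f$ onto $X$; once this is confirmed, the probabilistic pieces assemble via standard union-bound accounting.
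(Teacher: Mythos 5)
Your proposal follows exactly the same architecture as the paper's proof---Hoeffding over the coin flips $\bar{y}_i$, a conditional McDiarmid-for-permutations bound over $\pi$, a case split on the most common hypothesis, and a uniform bound via Sauer's lemma on the projection onto the finite point set $P\cup Q$. The uniform-convergence step, the point-set size $(4d+1)N_s/2$, and the resulting $\bigl(e(4d+1)N_s/(2d)\bigr)^d$ factor all match the paper. However, there is a concrete error in the case-split threshold.

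You split on whether $\max_{h_0}n_{h_0}\ge K/\sqrt{3}$ and assert that the resulting $\E[N_{h_0^\star}]\ge K^2/(6N_{s+1})$ because ``the discrete corrections \dots absorb into the constants.'' That step does not hold. The exact expectation is
\begin{equation*}
\E[N_{h_0^\star}]=\frac{N_{s+1}}{2}\cdot\frac{n(n-1)}{N_{s+1}(N_{s+1}-1)}=\frac{n(n-1)}{2(N_{s+1}-1)},\qquad n:=n_{h_0^\star},
\end{equation*}
and your approximation $n^2/(2N_{s+1})$ is an \emph{overestimate} (the exact value is smaller by a factor of $\bigl(1-\tfrac{1}{n}\bigr)\cdot\tfrac{N_{s+1}}{N_{s+1}-1}<1$ whenever $n<N_{s+1}$), so it cannot be used for a lower bound. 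With $n=K/\sqrt{3}$ the threshold is exactly tight for the approximation, so after the $(1-1/n)$ correction the inequality $\E[N_{h_0^\star}]\ge K^2/(6N_{s+1})$ \emph{fails}: unwinding it requires $K\ge\sqrt{3}\,N_{s+1}$, which is impossible since $K\le N_{s+1}$. The shortfall $n(n-1)=K^2/3-K/\sqrt{3}$ scales linearly with $K$ and is not a constant absorbed by the condition $K>6$; nor is it covered by the slack between $K=|U_g|$ and $M/2-r_1$, which can be arbitrarily small relative to $K$.

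The paper avoids this by using the asymmetric threshold $2K/3$ rather than $K/\sqrt{3}$. In the majority case this gives $c_{h_0}(c_{h_0}-1)\ge(2K/3)(2K/3-1)=(4K^2-6K)/9>K^2/3$ precisely when $K>6$ (which is what the hypothesis $r_1\le M/2-6$ buys you), yielding a genuine multiplicative margin of $4/3$ that dominates the $(1-1/n)$ factor; in the collision case the discrete correction is actually favorable (the $-1$ appears only in the denominator), so no extra margin is needed. Your collision-case argument with $1-1/\sqrt{3}>1/3$ is fine, but the majority case needs the stronger $2K/3$ threshold and the explicit $K>6$ verification. Replacing $K/\sqrt{3}$ by $2K/3$ and carrying through the exact arithmetic closes the gap and recovers the paper's proof.
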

\begin{proof}
    Let $P$ be the set of unlabelled data points occurred in any $S_1^s,\dots,S_{N_s}^s$, i.e., 
    \begin{equation*}
        P = \bigcup_{i=1}^{N_s}\{x:(x,0)\in S_i^s\lor(x,1)\in S_i^s\}.
    \end{equation*}
    Let $Q = \{\bar{x}_i: i\in[N_{s+1}]\}$. Then we have $\lvert P \rvert \le 2sN_s$ and $\lvert Q\rvert\le N_s / 2$. By Sauer's lemma, we can identify a subset $\G\subseteq\F$ with $\lvert \G\rvert \le \big(\frac{e m}{d}\big)^d$, where $m = (2d + 1/2)N_s \ge (2s+1/2)N_s\ge \lvert P \cup Q\rvert $, such that for every $f\in \F$, there exists some $g\in \G$ such that $f$ and $g$ are consistent on both $P$ and $Q$. Hence, it suffices to first prove the conclusion for every $g\in \G$, then apply a union bound over $\G$.
    
    Fix some $g\in \G$ and define the following multiset
    \begin{equation*}
        U_g = \{i\in[N_{s+1}]: S_{\pi(i)}^{s+1}~is~consistent~with~g\}.
    \end{equation*}
    By Hoeffding's inequality, we have
    \begin{equation*}
        \Pr\left[\left\lvert U_g\right\rvert \le M/2 - r_1\right] \le \exp\left(-\frac{2r_1^2}{M}\right).
    \end{equation*}
    Note that according to our algorithm, $U_g$ only depends on the randomness of $\bar{y}_i$'s and is independent of $\pi$. Consequently, the above probability is only taken over the randomness of $\bar{y}_i$'s.
    
    We then condition on a fixed multiset $U_g$ with $\lvert U_g\rvert > M / 2 - r_1$. Let $c_h = \lvert\{\soa(S_{\pi(i)}^{s + 1}) = h: i \in U_g\}\rvert$ denote the number of occurrence of $h$ when running the $\soa$ on $S_{\pi(i)}^{s + 1}$ for all $i\in U_g$. As a consequence, $\sum_h c_h = \lvert U_g\rvert$. If $\max_{h}c_h < \frac{2\lvert U_g\rvert}{3}$, it follows that $\sum_h c_h^2 \le \max_{h}c_h\sum_h c_h < \frac{2}{3}\lvert U_g\rvert^2$. Hence for any $i\in[N_{s + 1} / 2]$, we have
    \begin{align*}
        {}&\Pr[i\in I_g^{s+1}\land \soa(S_{2i-1}^{s+1}) \neq \soa(S_{2i}^{s+1})\mid U_g] \\
        =&\Pr[S_{2i-1}^{s+1}~and~S_{2i}^{s+1}~are~consistent~with~g\land\soa(S_{2i-1}^{s+1})\neq\soa(S_{2i}^{s+1})\mid U_g] \\
        =&\sum_h\frac{c_h}{N_{s+1}}\cdot\frac{\sum_{h'\neq h}c_{h'}}{N_{s+1} - 1} \\
        =&\frac{1}{N_{s+1}(N_{s+1}-1)}\cdot\sum_h c_h (\lvert U_g\rvert - c_h) \\
        =&\frac{\lvert U_g\rvert^2 - \sum_h c_h^2}{N_{s+1}(N_{s+1} - 1)} \\
        >&\frac{\lvert U_g\rvert^2}{3N_{s+1}^2}.
    \end{align*}
    Therefore, we can leverage Mcdiarmid's inequality for permutations (by setting $f$ to be the function that counts $i\in[N_{s+1} / 2]$ such that $i\in I_g^{s + 1}$ and $\soa(S_{2i-1}^{s+1}) \neq \soa(S_{2i}^{s+1})$) to show
    \begin{equation*}
        \Pr[\lvert \{i\in I_g^{s+1}: \soa(S_{2i-1}^{s+1}) \neq \soa(S_{2i}^{s+1})\}\rvert \le R - r_2\mid U_g] \le \exp\left(-\frac{2r_2^2}{9N_{s+1}}\right),
    \end{equation*}
    where $R = \frac{(M/2 - r_1)^2}{6N_{s+1}} < N_{s+1} / 2\cdot \frac{\lvert U_g\rvert^2}{3N_{s+1}^2} < \E[f]$.

    Now consider the case that $\max_{h}c_h \ge \frac{2\lvert U_g\rvert}{3}$. Let $h_0$ be the hypothesis such that $c_{h_0} = \max_{h}c_h$. Then for any $i\in[N_{s+1} / 2]$, we have
    \begin{align*}
        {}&\Pr[S_{2i-1}^{s+1}~and~S_{2i}^{s+1}~are~consistent~with~g\land\soa(S_{2i-1}^{s+1})=\soa(S_{2i}^{s+1}) = h_0\mid U_g] \\
        =&\frac{c_{h_0}}{N_{s+1}}\cdot\frac{c_{h_0} - 1}{N_{s + 1} - 1} \\
        \ge&\frac{4\lvert U_g\rvert^2 - 6\lvert U_g\rvert}{9N_{s + 1}^2} \\
        >&\frac{\lvert U_g\rvert^2}{3N_{s+1}^2},
    \end{align*}
    where the last inequality is because $\lvert U_g\rvert > M/2 - r_1 \ge 6$. Similarly, we have
    \begin{equation*}
        \Pr[\lvert \{i\in I_g^{s+1}: \soa(S_{2i-1}^{s+1}) = \soa(S_{2i}^{s+1}) = h_0\}\rvert \le R - r_2\mid U_g] \le \exp\left(-\frac{2r_2^2}{9N_{s+1}}\right).
    \end{equation*}

    Putting what we have proved so far together, for every $g\in \G$, it holds with probability at least
    \begin{equation*}
        1 - \exp\left(-\frac{2r_1^2}{M}\right) - \exp\left(-\frac{2r_2^2}{9N_{s+1}}\right)
    \end{equation*}
    that either
    \begin{equation*}
        \lvert \{i\in I_g^{s+1}: \soa(S_{2i-1}^{s+1}) \neq \soa(S_{2i}^{s+1})\}\rvert > R - r_2
    \end{equation*}
    or there exists some $h_0$ such that
    \begin{equation*}
        \lvert \{i\in I_g^{s+1}: \soa(S_{2i-1}^{s+1}) = \soa(S_{2i}^{s+1}) = h_0\}\rvert > R - r_2.
    \end{equation*}
    Applying a union bound over $\G$ yields the desired result.
\end{proof}

We then analyze the privacy of Algorithm~\ref{alg:realizable}.

\begin{lemma}
\label{lem:realizable_privacy}
    Algorithm~\ref{alg:realizable} is $(\varepsilon,\delta)$-differentially private.
\end{lemma}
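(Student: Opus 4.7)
The plan is to view the algorithm's output $(h_1,\ldots,h_T)$ as a post-processing of the outputs produced by the two privacy-preserving components, $\mathsf{PrivateHistogram}$ and $\mathsf{AboveThreshold}$, and then argue about each via composition. Fix any two neighboring input streams that differ only at some round $t^\star$ (as in the adaptive DP game), and consider a coupling that identifies all internal randomness across the two executions.

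For the $\mathsf{PrivateHistogram}$ calls, I would trace how the differing data point propagates through the state. The insertion on line~9 places it into at most one sequence $S_i^s$ at the current layer, and by inspection of $\mathsf{Update}$ this perturbation lifts to at most one sequence at every subsequent layer, using the tournament-example construction and the fact that the random permutation $\pi$ is coupled. Hence every multiset $V_s$ passed to $\mathsf{PrivateHistogram}$ differs by at most one element between the two executions. Since each of the at most $d$ PH calls is $(\varepsilon_0/d,\delta/d)$-DP on one-element neighboring inputs, basic composition yields an $(\varepsilon_0,\delta)$-DP guarantee for the entire PH transcript.

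For the $\mathsf{AboveThreshold}$ instances, the goal is to show that their aggregate contribution is only $\varepsilon_0$, even though the algorithm may spawn many of them. The key observation under the coupling is that for every $t\neq t^\star$ the pair $(x_t,y_t)$ agrees in both executions, and the hypothesis $h_t$ is a deterministic function of the previously released PH and AT outputs (which are accounted for as the privacy-relevant interface). Consequently the query $\mathbb{I}[h_t(x_t)\neq y_t]$ fed to the active AT instance at round $t$ coincides across both executions whenever $t\neq t^\star$. Only the single AT instance that is active at round $t^\star$ receives an input stream that differs, and it differs in exactly one query by at most one. The standard privacy guarantee of $\mathsf{AboveThreshold}$ (Theorem~\ref{thm:abovethreshold}) then contributes $\varepsilon_0$; every other AT instance sees an identical query stream and adds nothing. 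Composing with the PH contribution gives the claimed $(\varepsilon_0+\varepsilon_0,\delta)=(\varepsilon,\delta)$ bound.

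I expect the main obstacle to be making the AT step rigorous under an adaptive adversary: because the PH outputs on the two neighboring inputs are only close in distribution rather than identical, the internal state (in particular which AT instance is active at a given round and what $L_s$ it is iterating over) can in principle drift later. The cleanest resolution I foresee is to work in the game-based formulation and invoke adaptive composition directly on the PH/AT transcript, so that PH contributes $(\varepsilon_0,\delta)$, the single ``active'' AT instance at the differing round $t^\star$ contributes $\varepsilon_0$, and any downstream state divergence is subsumed by post-processing of an already bounded-privacy object.
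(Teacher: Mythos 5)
Your proposal is correct and follows the same two-part decomposition as the paper: (i) each of the $d$ $\mathsf{PrivateHistogram}$ calls has sensitivity~1 because the differing input point propagates to at most one sequence $S_i^s$ per layer (and hence changes $V_s$ by at most one element), so basic composition gives $(\varepsilon_0,\delta)$; (ii) the $\mathsf{AboveThreshold}$ instances operate on disjoint round-intervals, so only the instance active at $t^\star$ sees a differing query stream, contributing another $\varepsilon_0$ by parallel composition. The paper's own proof is considerably terser and simply invokes these two facts; you spell out the coupling and worry explicitly about state drift caused by diverging $\mathsf{PrivateHistogram}$/$\mathsf{AboveThreshold}$ outputs, then resolve it by appealing to adaptive composition on the full mechanism transcript—this is exactly the formalization the paper is implicitly relying on (Theorem~\ref{thm:abovethreshold} already covers adaptively chosen inputs, and the interleaving is handled by treating the released $\mathsf{PrivateHistogram}$ and $\mathsf{AboveThreshold}$ outputs as the privacy interface). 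So the extra care you take is warranted but does not change the argument.
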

\begin{proof}
    During the entire procedure, we run many instances of $\mathsf{AboveThreshold}$ on disjoint sequences. Therefore by Theorem~\ref{thm:abovethreshold}, putting them all together is still $\varepsilon /2$-differentially private.

    Now consider the multiple executions of $\mathsf{PrivateHistogram}$. Note that changing a single input example $(x_t,y_t)$ only changes at most one $S_i^s$ for every $s$. Then by Theorem~\ref{thm:histogram}, each $\mathsf{PrivateHistogram}$ is $(\varepsilon/2d,\delta/d)$-differentially private. Since we run $\mathsf{PrivateHistogram}$ only once for every $s\in[d]$, basic composition ensures the overall algorithm is $(\varepsilon,\delta)$-differentially private.\footnote{The original statement of basic composition~\citep{dwork2006our} is for static dataset only. In this work we actually use stronger versions of (basic and advanced) composition that work for adaptive inputs~\citep{lyu2022composition,vadhan2021concurrent,henzinger2025concurrentcompositiondifferentiallyprivate}.}
\end{proof}

We now show the following utility guarantee using Lemma~\ref{lem:update}. Combining Lemma~\ref{lem:realizable_privacy} and~\ref{lem:realizable_utility} yields Theorem~\ref{thm:realizable2}.

\begin{lemma}
\label{lem:realizable_utility}
    Let $\HH$ be a concept class with Littlestone dimension $d$ and $N_0=N_0(\varepsilon,\delta,\beta,d, T)$ be appropriately chosen. For any adaptive adversary generating the sequence $(x_1,y_1),\dots,(x_T,y_T)$ in the realizable setting, Algorithm~\ref{alg:realizable} makes at most
    \begin{equation*}
        O\left(\frac{2^{O\left(2^d\right)}(\log T + \log(1/\beta) + \log(1/\delta))}{\varepsilon}\right)
    \end{equation*}
    mistakes with probability $1-\beta$.
\end{lemma}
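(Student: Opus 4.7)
The plan is to condition on a collection of high-probability events, maintain an inductive invariant across the $d+1$ layers, and then sum the per-layer mistake contributions.

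\emph{Good events.} I first fix three types of concentration events. (i) For each of the $O(T)$ invocations of $\mathsf{AboveThreshold}$, Theorem~\ref{thm:abovethreshold} certifies that when the mechanism signals $\top$, the cumulative mistake count on the currently displayed hypothesis lies within an $O((\log T+\log(T/\beta))/\varepsilon)$ window of the threshold $N_s + O((\log T+\log(T/\beta))/\varepsilon)$. (ii) For the $d$ invocations of $\mathsf{PrivateHistogram}$, Theorem~\ref{thm:histogram} bounds the additive error on every bin by $E := O(d\log(d/\delta)/\varepsilon)$. (iii) For the $d$ invocations of $\mathsf{Update}$, I fix the event of Lemma~\ref{lem:update} with $r_1 = M_s/4$ and $r_2 = M_{s+1}/4$. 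A union bound absorbs all failures into $\beta$ once $N_0 \geq 2^{\Theta(2^d)}(\log(T/\beta) + \log(1/\delta)/\varepsilon)$, which also ensures the slack $M_s/2 \geq E$ used below.

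\emph{Inductive invariant.} Let $\F_s \subseteq \HH$ denote the set of concepts consistent with all input examples observed up to the start of layer $s$. I would prove by induction on $s$ that, conditioned on the good events, for every $h \in \F_s$ there are at least $M_s$ indices $i \in [N_s/2]$ such that the sibling pair $(S_{2i-1}^s, S_{2i}^s)$ is consistent with $h$ and either (a) forms a collision under $\soa$, or (b) has $\soa(S_{2i-1}^s) = \soa(S_{2i}^s) = h$. The base case $s=0$ is immediate because every $S_i^0 = \emptyset$ so every pair falls under (b) with $h_0 = \soa(\emptyset)$. For the inductive step, the histogram threshold $3M_s/4$ and accuracy $E$ imply that every $h$ satisfying (b) for at least $3M_s/4$ pairs lies in $L_s$. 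While $h$ is displayed, $\mathsf{AboveThreshold}$ releases it for $\Theta(N_s)$ mistakes, each of which inserts $(x_t,y_t)$ into a uniformly random pair; a mistake on a pair of type (b) for $h$ converts it into a type (a) pair (still consistent with $h$, because $h \in \F_s$ guarantees $h(x_t)=y_t$). Lemma~\ref{lem:coupon} shows that after $\Theta(N_s)$ uniform insertions at least $M_s/4$ of these pairs are hit, so by the time $L_s$ is exhausted every $h \in \F_s$ has $\geq M_s$ pairs of type (a), which is precisely the precondition of Lemma~\ref{lem:update}.

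\emph{Propagation and termination.} Applying Lemma~\ref{lem:update} to the family $\F_{s+1} \subseteq \F_s$ produces, for each $h \in \F_{s+1}$, at least $M_{s+1} = \Theta(M_s^2/N_{s+1})$ pairs in the new layer that are either collisions consistent with $h$ or give a common $\soa$-output $h_0$; in the latter case the subsequent $\mathsf{PrivateHistogram}$ call places $h_0$ into $L_{s+1}$, so the invariant is restored. The recurrence $M_{s+1}=\Theta(M_s^2/N_{s+1})$ with $N_s = N_0/2^s$ yields $M_s/N_s = \Theta(2^{-\Theta(2^s)})$, matching the chosen $M_s = 128 \cdot 2^{-6\cdot 2^s}N_s$, and forcing $M_d \geq 1$ gives $N_0 = 2^{\Theta(2^d)}$ as claimed. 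At the final layer $s=d$, the invariant together with Fact~\ref{fact:cover} forces $\F_d \subseteq L_d$, because any $h \in \F_d\setminus L_d$ with $M_d\geq 1$ pair giving $\soa = h$ would extend the layer-$d$ tournament construction into a shattered tree of depth $d+1$ in $\HH$, contradicting $\ld(\HH)=d$. In particular the true labeling function is displayed from $L_d$, so once $L_d$ is exhausted no further input mistakes occur. The total mistake count is $\sum_{s=0}^{d} |L_s| \cdot O(N_s + (\log T + \log(T/\beta))/\varepsilon)$; bounding $|L_s| \leq O(N_s/M_s)$ by the total histogram mass and substituting the recurrence gives the stated bound.

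\emph{Main obstacle.} The most delicate step is carrying the invariant for the infinite set $\F_s$ against an adaptive adversary. I address this exactly as in Lemma~\ref{lem:update}: the total number of distinct points appearing across the forest at layer $s$ is $O(sN_s)$, so a Sauer-style cover of size $(eN_s/d)^d$ suffices to apply the concentration arguments simultaneously for all $h \in \F_s$. A second subtlety is that the random permutation $\pi$ inside $\mathsf{Update}$ must remain independent of the displayed hypotheses and input examples; this is ensured by the lazy-update design, which draws $\pi$ freshly at the moment $L_s$ is exhausted and applies it only to sequences whose structure is frozen from that point onward.
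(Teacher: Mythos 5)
Your proposal follows the paper's proof closely: the same good events, the same inductive invariant propagated via Lemma~\ref{lem:update}, the same coupon-collector step, and the same mistake-count summation. The overall structure is correct and the recurrence bookkeeping is essentially the paper's. Two details are worth flagging. First, the termination step at layer $d$ is stated imprecisely: you invoke Fact~\ref{fact:cover} and speak of ``any $h\in\F_d\setminus L_d$ with $M_d\ge1$ pair giving $\soa=h$,'' but what is actually needed (and what the paper proves) is that \emph{every} pair $i\in I_f^d(t_d)$ satisfies $\soa(S_{2i-1}^d)=\soa(S_{2i}^d)=f$, because a sequence at layer $d$ that is consistent with $f\in\HH$ has already forced exactly $d$ $\soa$-mistakes, exhausting the $\soa$'s budget of $\ld(\HH)=d$; this (together with the $\mathsf{PrivateHistogram}$ accuracy event) places $f\in L_d$. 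Your shattered-tree phrasing is morally equivalent but, as written, the premise ``$M_d\ge1$ pair giving $\soa=h$'' is neither what the invariant gives nor what must be contradicted, so the logical chain as you wrote it does not close. Second, in the inductive step you write that after the coupon-collector step ``every $h\in\F_s$ has $\ge M_s$ pairs of type (a),'' but by your own account only $M_s/4$ (and in the paper $M_s/2$) of the type-(b) pairs are converted; one must apply Lemma~\ref{lem:update} with $M=M_s/2$, and the constants in the recurrence and in $M_s = 128\cdot 2^{-6\cdot 2^s}N_s$ are chosen precisely to absorb this halving. These are presentation slips rather than missing ideas, but the termination argument should be rewritten to explicitly use the $\soa$ mistake bound on each $f$-consistent pair at layer $d$.
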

\begin{proof}
    First by Theorem~\ref{thm:abovethreshold} and the union bound, it holds with probability $1-\beta / 3$ that during the execution of each instance of $\mathsf{AboveThreshold}$, the number of mistakes made by the algorithm is within $[\tau - \alpha, \tau + \alpha + 1]$, where $\tau$ is the threshold assigned to the instance and $\alpha = \frac{8(\ln T +\ln(6 T/\beta))}{\varepsilon_0}$. In particular, if the instance was created with layer number $s$, the number of mistakes made during the execution is within
    \begin{equation*}
        \left[N_s, N_s  + \frac{16(\ln T +\ln(6 T/\beta))}{\varepsilon_0} + 1\right].
    \end{equation*}
    We use $E_1$ to denote the above event.

    Then by Theorem~\ref{thm:histogram} and the union bound, it holds with probability $1$ that
    \begin{equation*}
        \sup_{h\in\HH} \lvert\overline{\cnt}_{V_s}(h) - \cnt_{V_s}(h)\rvert \le \frac{8d\ln(8 d/\delta)}{\varepsilon_0}.
    \end{equation*}
    for every $s\in[d]$. We use $E_2$ to denote this event.

    Moreover, consider an execution of $\mathsf{AboveThreshold}$ that eventually halts by returning $a_t = \top$. We know that the algorithm keeps outputting $h_t = h$ during the execution, where $h$ is the first element of $L_s$. Let $I\subseteq [N_s / 2]$ be a index set with size $k\ge 4\ln(3T/\beta)$ and $I'$ be the collection of all $i_t$ (during the execution of this $\mathsf{AboveThreshold}$) such that $h(x_t)\neq y_t$. By Lemma~\ref{lem:coupon}, it holds with probability $1 - \beta / 3T$ conditioned on $E_1$ that $\lvert I\cap I'\rvert \ge k / 2$. Let $E_3$ be the event that this holds for all instances of $\mathsf{AboveThreshold}$ that terminates by returning $\top$. By the union bound, we have $\Pr[E_3\mid E_1]\ge 1 - \beta / 3$.
    
    Let $\F_t$ be the set consisting of all $h\in\HH$ that is consistent with the data points received up to round $t$. That is,
    \begin{equation*}
        \F_t = \{h\in\HH:h~is~consistent~with~(x_1,y_1),\dots,(x_t,y_t)\}.
    \end{equation*}
    Let $t_s$ denote the round at which $S_1^s,\dots,S_{N_s}^s$ were created. Define $I_f^s(t)$ to be the set
    \begin{equation*}
        \{i\in[N_{s} / 2]: S_{2i - 1}^{s}~and~S_{2i}^{s}~are~consistent~with~f\}
    \end{equation*}
    at the end of round $t$. For every $s\in\{0,1,\dots, d\}$, let $E_{4, s}$ denote the following event: for every $f\in\F_{t_s}$ either
    \begin{equation*}
        \lvert \{i\in I_f^s(t_s):\soa(S_{2i - 1}^s)\neq \soa(S_{2i}^s)\}\rvert \ge M_s
    \end{equation*}
    or there exists some $h_0$ such that
    \begin{equation*}
        \lvert\{i\in I_f^s(t_s):\soa(S_{2i - 1}^s) = \soa(S_{2i}^s) = h_0\}\rvert\ge M_s,
    \end{equation*}
    where $M_s= 128 \cdot 2^{-6\cdot 2^s} N_s = 128\cdot 2^{-6\cdot 2^s}\cdot N_0\cdot 2^{-s}$.

    Since $S_1^0,\dots,S_{N_0}^0$ are initialized as $\emptyset$, it follows that $I_f^s(t_0) = [N_0 / 2]$ and $\soa(S_{2i-1}^0) = \soa(S_{2i}^0) = \soa(\emptyset)$ for all $i\in[N_0 / 2]$. Therefore, $E_{4,0}$ happens with probability $1$. We next bound the probability of $E_{4,s+1}$ conditioned on $E_1\cup E_2\cup E_3$ and $E_{4,s}$. If we set $N_0\ge \frac{d\cdot 2^{6\cdot 2^d + d}\ln(8d / \delta)}{\varepsilon_0}$, then
    \begin{equation*}
        \frac{8d\ln(8d/\delta)}{\varepsilon_0} \le 32\cdot 2^{-6\cdot 2^{s}}\cdot N_0\cdot 2^{-s} = M_s / 4.
    \end{equation*}
    By $E_2$, for every $f\in \F_{t_s}$ that satisfies the second property of $E_{4, s}$, we have $\overline{\cnt}_{V_s}(h_0)\ge 3M_s/4$, which implies $h_0\in L_s$. Observe that from round $t = t_s + 1$ to $t = t_{s + 1}$ we only insert data points that are realizable by $\F_{t_{s + 1}}$, it follows that $I_f^s(t_{s+1}) = I_f^s(t_s)$ for all $f\in\F_{t_{s+1}}$. Thus by $E_3$, we have for all $f\in\F_{t_{s+1}}$ that
    \begin{equation*}
        \lvert \{i\in I_f^s(t_{s+1}):\soa(S_{2i - 1}^s)\neq \soa(S_{2i}^s)\}\rvert \ge M_s / 2
    \end{equation*}
    if we set $k = M_s = 128\cdot 2^{-6\cdot 2^s}\cdot N_0\cdot 2^{-s}\ge 4\ln(3T/\beta)$. Also, it is easy to see from our algorithm that $\lvert S_i^s \rvert\le 2s$ for all $i\in [N_s]$ since we will at most insert one data point from the input and one from $\mathsf{Update}$ for every $s$. Now we have fulfilled the conditions of Lemma~\ref{lem:update}. Setting $M = M_s / 2$, $r_1 = M / 4$ (this requires $r_1=M_s / 8 \le M_s / 4 - 6$, which can be satisfied by letting $N_0\ge 2^{6\cdot2^{d} + d}$), and $r_2 = M^2 / (384N_{s + 1})$ gives
    \begin{align*}
        \frac{(M/ 2 - r_1)^2}{6N_{s + 1}} - r_2 &= \frac{M^2}{128N_{s + 1}}\\ 
        &= \frac{M_s^2}{512N_{s + 1}} \\
        &=\frac{16384\cdot 2^{-12\cdot 2^s}N_0^2\cdot 2^{-2s}}{512 N_0\cdot 2^{-(s+1)}} \\ &= 128 \cdot 2^{-6\cdot 2^{s+1}}\cdot N_0 \cdot 2^{-(s + 1)}\\ &= M_{s + 1}.
    \end{align*}
    As a result, the $E_{4,s+1}$ holds for $s + 1$ with probability
    \begin{align*}
        {}&1 - \left(\frac{e(4d + 1)N_s}{2d}\right)^d\cdot\left(\exp\left(-\frac{2r_1^2}{M}\right) + \exp\left(-\frac{2r_2^2}{9N_{s+1}}\right)\right) \\
        \ge& 1 - (7N_0)^d\left(\exp\left(- \frac{M_s}{16}\right) + \exp\left(\frac{M_s^4 }{72\cdot 384^2N_{s+1}^3}\right)\right) \\
        =& 1 - (7N_0)^d\left(\exp\left(-\frac{8N_0}{2^{6\cdot 2^s + s}}\right) + \exp\left( - \frac{2048 N_0}{81\cdot 2^{24\cdot 2^s + s - 3}}\right)\right) \\
        \ge& 1 - 2\exp\left(-\frac{N_0}{2^{24\cdot 2^d + d}} + d\ln (7N_0)\right) \\
        \ge& 1 - \frac{\beta}{3 d}
    \end{align*}
    conditioned on $E_1\cup E_2\cup E_3$ and $E_{4, s}$ as long as
    \begin{equation*}
        2\exp\left(-\frac{N_0}{2^{24\cdot 2^d + d}} + d\ln (7N_0)\right) \le \frac{\beta}{3 d} \Leftrightarrow N_0\ge 2^{24\cdot 2^d + d}\left(d\ln 7 + d\ln N_0 + \ln(6d/\beta)\right),
    \end{equation*}
    which, by Lemma~\ref{lem:xgelogx}, can be established by requiring
    \begin{equation*}
        N_0 \ge 4\cdot 2^{24\cdot 2^d + d}\cdot d\ln\left(2^{24\cdot 2^d + d}\cdot 2d\right) + 2\cdot 2^{24\cdot 2^d + d}(d\ln 7 + \ln(6/\beta)).
    \end{equation*}
    Let $E_4 = E_{4,0}\cup\dots\cup E_{4, d}$, we have $\Pr[E_4\mid E_1\cup E_2\cup E_3]\ge 1 - \beta / 3$.

    Summarizing what we have proved so far gives $\Pr[E_1\cup E_2\cup E_3\cup E_4]\ge 1-\beta$ for some \begin{equation*}
        N_0 = O\left(2^{O\left(2^d\right)}\left(\log(1/\beta) + \frac{\log(1/\delta)}{\varepsilon}\right)\right).
    \end{equation*}
    We now condition on $E_1\cup E_2\cup E_3\cup E_4$ and bound the number of mistakes. Note that the size of $L_s$ at round $t = t_s$ can be bounded by
    \begin{equation*}
        \frac{N_{s} / 2}{3M_s / 4 - M_s / 4} \le \frac{2^{6\cdot 2^s}}{128}
    \end{equation*}
    for $s\in[d]$ and by $1\le 2^{6\cdot 2^s} / 128$ for $s = 0$. Hence, the number of mistakes before $s$ reaches $d$ is at most
    \begin{align*}
        {}&\sum_{s=0}^{d-1}\left(N_s  + \frac{16(\ln T +\ln(6 T/\beta))}{\varepsilon_0} + 1\right)\cdot \frac{2^{6\cdot 2^s} }{ 128} \\
        =&O\left(\frac{2^{O\left(2^d\right)}(\log T + \log(1/\beta) + \log(1/\delta))}{\varepsilon}\right).
    \end{align*}
    Once the value of $s$ reaches $d$, event $E_4$ indicates that $I_f^d(t_d) \ge M_d$ for all $f\in\F_{t_d}$. Notice that for every $i\in I_f^d(t_d)$, the $\soa$ makes at least $d$ mistakes on $S_{2i-1}^d$ and $S_{2i}^d$. Therefore, the property of the $\soa$ implies that $\soa(S_{2i-1}^d) = \soa(S_{2i}^d) = f$. Then by $E_2$, we have $f\in L_d$. This means we can make at most
    \begin{equation*}
        \left(N_d  + \frac{16(\ln T +\ln(6 T/\beta))}{\varepsilon_0} + 1\right)\cdot \frac{2^{6\cdot 2^d} }{ 128} = O\left(\frac{2^{O\left(2^d\right)}(\log T + \log(1/\beta) + \log(1/\delta))}{\varepsilon}\right)
    \end{equation*}
    mistakes after $t = t_d$. Combining the two bounds yields the desired result.
\end{proof}

\section{Proofs for Section~\ref{sec:simple}}

\subsection{Proof of Theorem~\ref{thm:simple}}
\begin{proof}
    The privacy guarantee directly follows from the post-processing property of DP and the fact that we run $\B$ on disjoint batches. Let $E$ denote the event that all executions of $B$ succeed, we have $\Pr[E] \ge 1 - T / B \cdot \beta$. In the rest of the proof we condition on $E$. Note that under event $E$, the input sequence is always a valid synthetic sequence. Thus, the algorithm won't fail.

    Assume without loss of generality $T \equiv 0 \pmod B$. Consider the $b$-th batch and fix $S_1^{b},\dots,S_B^{b}$. Since $\A$ is proper, the utility guarantee of $\B$ gives (note that the error rate is $2\alpha$ since we require $\B$ to output a sanitized dataset, see our discussion after Definition~\ref{def:sanitizer})
    \begin{align*}
        \E\left[\sum_{t = (b-1)B + 1}^{bB}\I[h_t(x_t)\neq y_t]\right] &= \sum_{t = (b-1)B + 1}^{bB}\frac{1}{B}\sum_{i=1}^B\E_{f\sim\A_i(S_i^{b})}\left[\I[f(x_t)\neq y_t]\right] \\
        &= \frac{1}{B}\sum_{i=1}^B\E_{f\sim \A_i(S_i^{b})}\left[\sum_{t = (b-1)B + 1}^{bB}\I[f(x_t)\neq y_t]\right] \\
        &\le \frac{1}{B}\sum_{i=1}^B\E_{f\sim \A_i(S_i^{b })}\left[\sum_{t = (b-1)B + 1}^{bB}\I[f(x_t')\neq y_t']\right] + 2\alpha B.
    \end{align*}
    Let $p_i^b$ be the probability that $\A(S_i^b)$ makes a mistake on the last element of $S_i^{b+1}$ (note that $p_i^b$ itself is a random variable). Since we perform a random permutation over the synthetic data sequence $((x_{(b-1)B+1}',y_{(b-1)B+1}'),\dots,(x_{bB}',y_{bB}'))$, we have 
    \begin{equation*}
        \E[p_i^b] = \frac{1}{B}\E_{f\sim \A_i(S_i^{b })}\left[\sum_{t = (b-1)B + 1}^{bB}\I[f(x_t')\neq y_t']\right].
    \end{equation*}
    Summing over all batches yields
    \begin{equation*}
        \E\left[\sum_{t = 1}^{T}\I[h_t(x_t)\neq y_t]\right] \le \E\left[\sum_{i=1}^B\sum_{b=1}^{T/B}p_i^b\right] + 2\alpha T.
    \end{equation*}
    Let $m_i(h)$ be the number of mistakes made by $h\in\HH$ on $S_i^{T / B + 1}$. Note that $S_1^{T/B+1},\dots, S_B^{T/B + 1}$ are disjoint subsequences of $((x_1',y_1'),\dots,(x_T',y_T'))$. We thus have
    \begin{align*}
        \E\left[\sum_{i=1}^B\sum_{b=1}^{T/B}p_i^b - \min_{h^\star\in\HH}\sum_{t = 1}^T[h^\star(x_t')\neq y_t']\right] &\le \E\left[\sum_{i=1}^B\sum_{b=1}^{T/B}p_i^b - \sum_{i=1}^B\min_{h^\star\in\HH}m_i(h^\star)\right] \\
        &= \sum_{i=1}^B\E\left[\sum_{b=1}^{T/B}p_i^b - \min_{h^\star\in\HH}m_i(h^\star)\right] \\
        &\le B\cdot R(T / B),
    \end{align*}
    where the last line is due to Lemma 4.1 of~\citep{cesa2006prediction} (see also Lemma 11 of~\citep{gonen2019private}) and the regret bound of $\A$. Again by the utility guarantee of $\B$ we have
    \begin{equation*}
        \E\left[\min_{h^\star\in\HH}\sum_{t = 1}^T\I[h^\star(x_t)\neq y_t]\right] \ge \E\left[\min_{h^\star\in\HH}\sum_{t = 1}^T\I[h^\star(x_t')\neq y_t']\right] - 2\alpha T.
    \end{equation*}
    Hence, the overall expected regret can be bounded by
    \begin{align*}
        {}&\E\left[\sum_{t=1}^T\I[h_t(x_t)\neq y_t] - \min_{h^\star\in \HH}\sum_{t=1}^T\I[h^\star(x_t)\neq y_t]\right] \\
        \le&\E\left[\sum_{i=1}^B\sum_{b=1}^{T/B}p_i^b - \min_{h^\star\in \HH}\sum_{t=1}^T\I[h^\star(x_t')\neq y_t']\right] + 4\alpha T \\
        \le& B\cdot R(T / B) + 4\alpha T.
    \end{align*}

    Moreover, since every $h_t$ is produced by $\A$, Algorithm~\ref{alg:simple} is also proper.
\end{proof}

\subsection{Proof of Corollary~\ref{cor:proper}}

\begin{proof}
    By Theorem~\ref{thm:sanitizer} and Lemma~\ref{lem:convert_label}, there exists an $(\varepsilon,\delta)$-differentially private $(\alpha,\beta)$-sanitizer for $\HH^{\mathrm{label}}$ with sample complexity $\tilde{O}(d^6\sqrt{d^\star}/\varepsilon\alpha^2)$. For any $B\le T$, this translates to a sanitizer with $\alpha = \tilde{O}(d^3\sqrt[4]{d^\star} /\sqrt{ \varepsilon B})$ and $\beta = 1 / T^2$. Then by Theorem~\ref{thm:simple} and the regret bound of proper online learner~\citep{hanneke2021online,alon2021adversarial}, we obtain a private online learner with expected regret $\tilde{O}(\sqrt{dT B} + Td^3\sqrt[4]{d^\star} / \sqrt{\varepsilon B})$. Choosing $B = \tilde{\Theta}((Td^5\sqrt{d^\star} / \varepsilon)^{1/2})$ gives the desired result.
\end{proof}
\section{Sanitization with Better Sample Complexity}
\label{sec:proper}

In this section, we discuss how to reduce the sample complexity of the sanitizer in~\citep{ghazi2021sample} by a factor of $1/\alpha$. We achieve this improvement in two steps: first refine the sample complexity of the private proper PAC learner of~\citet{ghazi2021sample}, then make it applicable in the framework of~\citet{bousquet2020synthetic}.

\subsection{Refined Sample Complexity for Proper PAC Learning}

The proof in~\citep{ghazi2021sample} utilizes the uniform convergence result (aka agnostic generalization) to ensure that the empirical error of every $\tilde{f}\in\tilde{\F}$ ($\tilde{\F}$ is some hypothesis class whose VC dimension is bounded by the Littlestone dimension of the given concept class $\HH$) is close to its generalization error. This is indeed an overkill --- their proof only requires this to hold for hypotheses with low error. Therefore, one could replace the uniform convergence bound by the following relative uniform convergence results.

\begin{lemma}[Relative Uniform Convergence~\citep{anthony1999neural,anthony1993result}]
\label{lem:relative}
    Suppose $\HH$ is a concept class over $\X$ with VC dimension $d_V$ and $P$ is a distribution over $\X\times\{0, 1\}$. Let $S$ be a dataset of size $n$ where every data point in $S$ is drawn i.i.d. from $P^n$. For any $0<\lambda,\mu < 1$, we have
    \begin{equation*}
        \Pr[\exists h\in\HH, \err_P(h) > (1 + \lambda)\err_{S}(h)+\mu] \le 4\left(\frac{2en}{d_V}\right)^{d_V}\exp\left(\frac{-\lambda\mu n }{4(\lambda + 1)}\right)
    \end{equation*}
    and
    \begin{equation*}
        \Pr[\exists h\in\HH, \err_{S}(h) > (1 + \lambda)\err_{P}(h)+\mu] \le 4\left(\frac{2en}{d_V}\right)^{d_V}\exp\left(\frac{-\lambda\mu n }{4(\lambda + 1)}\right).
    \end{equation*}
\end{lemma}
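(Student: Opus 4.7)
The plan is to establish the first inequality; the second follows by an entirely analogous argument after interchanging the roles of $\err_P$ and $\err_S$ throughout. I would follow the classical symmetrization-plus-permutation template of Vapnik and Chervonenkis, adapted as in Anthony and Shawe-Taylor's work to produce a multiplicative rather than additive deviation bound.

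First I would symmetrize by introducing a ghost sample $S' \sim P^n$ independent of $S$, and show
\begin{equation*}
\Pr_{S}[\exists h\in\HH: \err_P(h) > (1+\lambda)\err_S(h) + \mu] \le 2\,\Pr_{S,S'}[\exists h\in\HH: \err_{S'}(h) > (1+\lambda)\err_S(h) + \mu/2].
\end{equation*}
On the left-hand bad event we must have $\err_P(h) \ge \mu$, so Chebyshev's inequality applied to $\err_{S'}(h)$ (whose variance with respect to $P^n$ is at most $\err_P(h)/n$) yields probability at least $1/2$ that $\err_{S'}(h) \ge \err_P(h) - \mu/2$; in the small-$n$ regime where this argument is too weak, the target bound is already trivial.

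Next I would condition on the multiset $T = S \cup S'$ of size $2n$. Conditionally, $(S,S')$ is a uniformly random balanced partition of $T$, and by Sauer's lemma the number of distinct labelings that $\HH$ induces on the unlabeled points of $T$ is at most $\Pi_\HH(2n) \le (2en/d_V)^{d_V}$, so a union bound over these patterns is affordable. For each fixed pattern, let $k$ be the total number of errors on $T$; then the number of errors on $S$ is hypergeometric with mean $k/2$, and the bad event $\err_{S'}(h) > (1+\lambda)\err_S(h) + \mu/2$ translates to this hypergeometric count lying below a threshold of the form $(k - n\mu/2)/(2+\lambda)$. I would control this tail using the Chernoff bound for sampling without replacement (Lemma~\ref{lem:chernoff}), which after simplification yields a per-pattern failure probability of at most $\exp(-\lambda\mu n / (4(\lambda+1)))$.

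Combining the per-pattern tail with the union bound over the $(2en/d_V)^{d_V}$ patterns and folding in the factor of $2$ from symmetrization delivers the stated inequality. The main obstacle will be the Chernoff calculation in the third step: the whole point of a relative deviation bound is to extract an exponential rate of order $\lambda\mu/(\lambda+1)$ rather than the much weaker $(\lambda\mu)^2$ rate that a naive additive Hoeffding step would give. Achieving this requires both the multiplicative slack $\mu/2$ chosen at the symmetrization stage and a careful case split according to whether $k/n$ is large or small compared to $\mu$, so that the Chernoff relative-error parameter $t$ picks up a contribution from whichever of $\lambda$ or $n\mu/k$ dominates.
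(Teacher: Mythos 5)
The paper does not prove Lemma~\ref{lem:relative}; it is quoted from the cited references of Anthony--Bartlett and Anthony--Shawe-Taylor, so there is no in-paper argument to compare against. Your outline is the right classical template (ghost-sample symmetrization, condition on the combined multiset, Sauer plus a hypergeometric tail), and the reduction of the two-sample bad event to the hypergeometric threshold $(k - n\mu/2)/(2+\lambda)$ is correct. The genuine hole is in the symmetrization step together with the fallback you attach to it. Chebyshev with additive slack $\mu/2$ gives $\Pr_{S'}[\err_{S'}(h) < \err_P(h) - \mu/2]\le \err_P(h)(1-\err_P(h))/(n(\mu/2)^2)$, and since $\err_P(h)$ is only forced to exceed $\mu$ (and can sit near $1/2$) the numerator can be as large as $1/4$, so the factor-of-two step needs $n\mu^2\ge 2$. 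You claim the target inequality is vacuous whenever this fails, but it is not: take $\mu = n^{-1/2}$, so $n\mu^2 = 1$, while $\lambda\mu n/(4(\lambda+1))=\Theta(\sqrt{n})$ dwarfs $\ln\bigl(4(2en/d_V)^{d_V}\bigr)=O(d_V\log n)$, making the right-hand side exponentially small rather than $\ge 1$. The Chebyshev route therefore leaves a real intermediate regime, roughly $2/\mu\le n<2/\mu^2$, uncovered.

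The standard repair is to replace Chebyshev by the binomial median fact $\Pr_{S'}[\err_{S'}(h)\ge \err_P(h)-1/n]\ge 1/2$: this needs only $1/n\le\mu/2$, i.e.\ $n\mu\ge 2$, and when $n\mu<2$ the exponent $\lambda\mu n/(4(\lambda+1))$ is strictly below $1/2$ (using $\lambda<1$), so $4(2en/d_V)^{d_V}\exp(-\lambda\mu n/(4(\lambda+1)))>4e^{-1/2}>1$ and the bound really is vacuous in the excluded regime. As a secondary point, after AM--GM on $(\lambda k+n\mu)^2$ your Chernoff step yields an exponent of order $\lambda\mu n/(2+\lambda)^2$, and $(2+\lambda)^2>4(\lambda+1)$ for all $\lambda>0$, so you land on a constant that is slightly weaker than the stated $\lambda\mu n/(4(\lambda+1))$; you would need a tighter Chernoff form, a sharper case split on $k$, or simply to state explicitly which published constant you are reproducing. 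Finally, the second inequality is not obtained by literally interchanging $\err_P$ and $\err_S$ (the two play asymmetric roles in symmetrization); the one-sided argument must be rerun with the roles of $S$ and $S'$ swapped inside the permutation step.
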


Such a modification leads to the following result, which saves a factor of $1/\alpha$.

\begin{theorem}[\citep{ghazi2021sample}, Slightly Strengthened]
\label{thm:proper_PAC}
    Let $\HH$ be a concept class with Littlestone dimension $d$. Then there exists an $(\varepsilon,\delta)$-differentially private proper $(\alpha,\beta)$-PAC learner for $\HH$ with sample complexity $\tilde{O}\left(d^6/\varepsilon\alpha\right)$.
\end{theorem}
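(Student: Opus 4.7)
The plan is to follow the overall structure of Ghazi et al.'s construction of a private proper PAC learner and replace their invocation of standard (agnostic) uniform convergence with the tighter relative uniform convergence bound (Lemma~\ref{lem:relative}). The intuition is that the final selection step only needs to certify generalization for hypotheses with small empirical error, and in this low-error regime the relative bound requires only $\tilde{O}(1/\alpha)$ samples rather than the $\tilde{O}(1/\alpha^2)$ demanded by the agnostic bound.

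First, I would recap the relevant architecture of their algorithm: it privately produces a finite hypothesis class $\tilde{\F}\subseteq\HH$ (via an iterated SOA/tournament construction whose description contributes the $d^6$ factor) with $\log\lvert\tilde{\F}\rvert = \tilde{O}(d^6)$ and $\vc(\tilde{\F})=\tilde{O}(d)$, and then runs the exponential mechanism with empirical error as score to privately select some $\hat{h}\in\tilde{\F}$. Using Lemma~\ref{lem:exp} together with the realizability of the sample (so some element of $\tilde{\F}$ has zero empirical error), the selected $\hat{h}$ has empirical error at most $\alpha/10$ once $n = \tilde{\Omega}(\log\lvert\tilde{\F}\rvert/(\varepsilon\alpha)) = \tilde{O}(d^6/(\varepsilon\alpha))$. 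This private selection step is what determines the $d^6/(\varepsilon\alpha)$ rate in the final bound.

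Second, I would replace the generalization step. In the original proof, the passage from low empirical error to low generalization error goes through the standard agnostic bound over $\tilde{\F}$, which costs an extra $1/\alpha$ factor. I would instead apply Lemma~\ref{lem:relative} to $\tilde{\F}$ with $\lambda=1$ and $\mu=\alpha/2$: with probability $1-\beta/2$ every $h\in\tilde{\F}$ satisfies $\err_P(h)\le 2\err_S(h)+\alpha/2$, so $\err_S(\hat{h})\le \alpha/10$ implies $\err_P(\hat{h})\le\alpha$. The required sample size for this bound is only $\tilde{O}(\vc(\tilde{\F})/\alpha) = \tilde{O}(d/\alpha)$, which is dominated by the $\tilde{O}(d^6/(\varepsilon\alpha))$ from the private selection step, so the final sample complexity becomes $\tilde{O}(d^6/(\varepsilon\alpha))$.

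The main obstacle is bookkeeping: I need to verify that none of the internal subroutines used to construct $\tilde{\F}$ in~\citep{ghazi2021sample} silently invokes an agnostic uniform convergence bound at the $\tilde{O}(1/\alpha^2)$ rate. Because the intermediate learners in their construction operate under a realizability-style guarantee, each of them should only require either Lemma~\ref{lem:realizable_generalization} or another application of Lemma~\ref{lem:relative} at the $\tilde{O}(1/\alpha)$ rate; tracing the parameters through the full pipeline to confirm this is the key technical content. Once that is checked, combining the improved generalization step with the unchanged exponential-mechanism step yields the claimed $\tilde{O}(d^6/(\varepsilon\alpha))$ sample complexity, and the privacy guarantee of $(\varepsilon,\delta)$-DP is inherited verbatim from the original algorithm.
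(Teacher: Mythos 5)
Your proposal matches the paper's own (essentially one-sentence) argument: replace the agnostic uniform-convergence step in the Ghazi et al.\ proof with the relative bound of Lemma~\ref{lem:relative}, which suffices because the exponential-mechanism step already outputs a hypothesis with low empirical error, so only the low-error regime matters. The paper gives no more detail than this sketch, and your fleshed-out version---identifying the $\tilde{O}(d^6/\varepsilon\alpha)$ cost from the private selection step, the $\tilde{O}(d/\alpha)$ cost from relative uniform convergence, and the bookkeeping needed to confirm no subroutine of~\citep{ghazi2021sample} secretly incurs $\tilde{O}(1/\alpha^2)$---is consistent with and more explicit than the paper's treatment.
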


\subsection{Sanitization via Proper PAC Learning}

We now demonstrate how to construct a sanitizer using a proper PAC learner based on the sequential-fooling framework of~\citep{bousquet2020synthetic}. The framework can be described as a sequential game played between a generator and a discriminator, where the discriminator holds a dataset $S$ and the generator wants to obtain an accurate sanitization of $S$. At each round $t$, the generator proposes a distribution $P_t$. The generator wins the game if $P_t$ and $\hat{P}_S$ are within error $\alpha$ with respect to $\HH$. Otherwise, the discriminator returns some $h\in\HH$ such that $\lvert P(h) - \hat{P}_S(h)\rvert > \alpha$.~\citet{bousquet2020synthetic} proved that the generator can always win the game within $\tilde{O}(d^\star / \alpha^2)$ rounds. They also showed how to simulate the discriminator using a private proper agnostic learner. Putting the two pieces together yields an algorithm for sanitization. In our construction, we will use the same generator and modify the discriminator so that a proper PAC learner can be directly employed.

We first leverage a technique from~\citep{li2025private} to construct a private agnostic empirical learner directly using a private PAC learner without incurring a generalization cost of $\tilde{O}(\vc(\HH) / \alpha^2)$.

\begin{lemma}
\label{lem:pac_to_agnostic}
    Suppose there is an $(\varepsilon,\delta)$-differentially private proper $(\alpha, \beta)$-PAC learner for $\HH$ with sample complexity $m$. Then there exists an $(O(\varepsilon), O(\delta))$-differentially private proper $(O(\alpha), O(\beta))$-agnostic empirical learner for $\HH$ with sample complexity
    \begin{equation*}
        n = O\left(m + \frac{d_V\log(1/\alpha) + \log(1/\beta)}{\varepsilon\alpha}\right),
    \end{equation*}
    where $d_V$ is the VC dimension of $\HH$.
\end{lemma}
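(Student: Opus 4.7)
The strategy is a reduction that uses the private PAC learner to generate a candidate hypothesis and then applies a private selection step to certify small empirical error against the best hypothesis in $\HH$. The extra $O((d_V\log(1/\alpha)+\log(1/\beta))/\varepsilon\alpha)$ samples match the characteristic cost of one exponential-mechanism selection with accuracy $\alpha$ over an effective candidate pool of size $(en/d_V)^{d_V}$, where the $(en/d_V)^{d_V}$ bound comes from Sauer's lemma applied to the projection of $\HH$ onto the samples.

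More concretely, I plan to split the input $S$ of size $n$ into two disjoint parts: $S_{\mathrm{PAC}}$ of size $m$ and $S_{\mathrm{sel}}$ of size $n_{\mathrm{sel}}=O((d_V\log(1/\alpha)+\log(1/\beta))/\varepsilon\alpha)$. First I apply Lemma~\ref{lem:pac_empirical} to upgrade $\A$ into a proper private PAC empirical learner, so that the candidate we generate has low empirical error on its input rather than merely low generalization error. Let $h^\star\in\HH$ be an agnostic empirical risk minimizer on $S$ with $\mathrm{OPT}=\err_S(h^\star)$. Conceptually, relabeling $S_{\mathrm{PAC}}$ by $h^\star$ would yield a realizable dataset on which the PAC empirical learner returns some $\hat h$ with small empirical disagreement with $h^\star$, hence with empirical error on $S$ at most $\mathrm{OPT}+O(\alpha)$. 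In parallel, I run the exponential mechanism (Lemma~\ref{lem:exp}) on $S_{\mathrm{sel}}$ with score $-\err_{S_{\mathrm{sel}}}(h)$, taking one representative from $\HH$ for each of the at most $(en_{\mathrm{sel}}/d_V)^{d_V}$ distinct $\HH$-induced labelings of $S_{\mathrm{sel}}$ (keeping the output proper). With $n_{\mathrm{sel}}$ set as above and invoking Lemma~\ref{lem:xgelogx}, the exponential-mechanism error is $O(\alpha)$. Finally, Hoeffding's inequality (Lemma~\ref{lem:hoeffding}) together with a union bound over the at most $(en/d_V)^{d_V}$ $\HH$-induced labelings of $S$ transfers the near-optimality from $S_{\mathrm{sel}}$-empirical to $S$-empirical with an extra $O(\alpha)$ additive loss. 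Privacy follows from basic composition of the two stages.

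The main obstacle is reconciling the behavior of $\A$ on the potentially non-realizable $S_{\mathrm{PAC}}$ with the agnostic ERM guarantee we want. I expect a case analysis on $\mathrm{OPT}$: when $\mathrm{OPT}\cdot m = O(1/\varepsilon)$, the group privacy of $\A$ implies that the distribution of $\A(S_{\mathrm{PAC}})$ is close to that of $\A$ applied to the virtual $h^\star$-relabeling of $S_{\mathrm{PAC}}$, so the PAC output is already near-optimal on $S$; otherwise, the exponential-mechanism step on $S_{\mathrm{sel}}$ alone, combined with the Hoeffding/Sauer union bound, already certifies a near-optimal hypothesis without relying on the PAC stage at all. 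Unifying these two regimes cleanly — ideally via a second, inexpensive private selection that picks the better of the PAC candidate and the exponential-mechanism candidate, so as to avoid an explicit case split in the algorithm — is where I expect most of the technical care to lie, and it is also what forces the final privacy and failure-probability parameters to incur only constant factor blowup.
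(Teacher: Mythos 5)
Your plan diverges from the paper's proof in a way that, as written, creates a genuine gap in the utility analysis and fails to hit the claimed $1/\alpha$ dependence. The paper does use Lemma~\ref{lem:pac_empirical} and the exponential mechanism over an $\HH$-projection onto a subsample, but it does so in a different order and with a different score: it first runs the exponential mechanism to pick $h_0$, then \emph{relabels} the subsample $S^I$ by $h_0$ to get a realizable dataset $D$, and only then invokes the PAC empirical learner on $D$ (Algorithm~\ref{alg:empirical_agnostic}). You instead propose to run $\A$ on the raw agnostic piece $S_{\mathrm{PAC}}$ and to run an independent exponential mechanism with score $-\err_{S_{\mathrm{sel}}}(h)$ on a disjoint piece $S_{\mathrm{sel}}$, then transfer near-optimality from $S_{\mathrm{sel}}$ to $S$ via Hoeffding plus a Sauer union bound.

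The transfer step is where your argument breaks. Getting $\lvert\err_{S_{\mathrm{sel}}}(h)-\err_S(h)\rvert\le\alpha$ simultaneously over the $(en/d_V)^{d_V}$ patterns of $\HH$ restricted to $S$ via Hoeffding requires $n_{\mathrm{sel}}=\tilde{\Omega}(d_V/\alpha^2)$, not $\tilde{O}(d_V/\alpha)$; additive-$\alpha$ agnostic uniform convergence is inherently a $1/\alpha^2$ phenomenon, which is precisely what the lemma is designed to avoid. A multiplicative bound (Lemma~\ref{lem:relative}) does not rescue this, because it yields an additive slack proportional to $\err_S(h)$, which is $\Theta(1)$ in general, not $O(\alpha)$. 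The paper sidesteps the issue entirely: the score $q(S,h)=\min_{f\in\HH}\{\dis_{S^I}(h,f)+\err_S(f)\}$ involves the \emph{full} $S$ through $\err_S(f)$, so no cross-sample transfer of the main error term is needed; the only quantity that gets transferred from $S^I$ to $S$ is the small disagreement $\dis_{S^I}(h_0,f_0)\le\alpha$ (and $\dis_{S^I}(g,h_0)\le\alpha$ from the PAC learner on the relabeled $D$), and transferring small disagreements via the realizable generalization bound (Lemma~\ref{lem:realizable_generalization}) costs only $\tilde{O}(d_V/\alpha)$ samples. This change of what gets transferred — disagreements, not errors — is the idea your proposal is missing.

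Two secondary issues. First, because $q(S,h)$ depends on all of $S$ while the candidate set $R$ is built from $S^I$, the privacy analysis is not a plain basic composition of two stages on disjoint data; the paper needs the subsampling amplification argument of Claim~\ref{cla:empirical_privacy} (including the careful bijection between $1\in I$ and $1\notin J$ index sets) to establish $(O(\varepsilon),O(\delta))$-DP with the desired sample-complexity tradeoff. Second, your group-privacy case analysis on $\mathrm{OPT}$ only covers $\mathrm{OPT}\cdot m=O(1/\varepsilon)$, which is a very small regime, and outside it you fall back to the flawed Hoeffding argument; the paper avoids any case split by always running the PAC empirical learner on the relabeled (hence realizable) $D$, so the PAC guarantee applies unconditionally.
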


Applying the above lemma to the learner in Theorem~\ref{thm:proper_PAC} leads to the following private proper agnostic empirical learner.

\begin{corollary}
\label{cor:proper_agnostic}
    Let $\HH$ be a concept class with Littlestone dimension $d$. Then there exist an $(\varepsilon,\delta)$-differentially private proper $(\alpha, \beta)$-agnostic empirical learner for $\HH$ with sample complexity $\tilde{O}(d^6/\varepsilon\alpha)$.
\end{corollary}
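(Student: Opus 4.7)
The plan is to derive Corollary~\ref{cor:proper_agnostic} as an immediate consequence of the two ingredients developed immediately above it in the same subsection. First, I would invoke Theorem~\ref{thm:proper_PAC} (with the privacy, accuracy, and failure parameters each rescaled by a suitable constant factor) to obtain an $(\varepsilon,\delta)$-differentially private proper $(\alpha,\beta)$-PAC learner for $\HH$ with sample complexity
\[
m \;=\; \tilde{O}\!\left(\frac{d^6}{\varepsilon\alpha}\right).
\]
This learner is proper because the statement of Theorem~\ref{thm:proper_PAC} guarantees it is so.

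Next, I would feed this learner into Lemma~\ref{lem:pac_to_agnostic}, which by construction preserves properness. This yields an $(O(\varepsilon), O(\delta))$-differentially private proper $(O(\alpha), O(\beta))$-agnostic empirical learner for $\HH$ with sample complexity
\[
n \;=\; O\!\left(m + \frac{d_V\log(1/\alpha) + \log(1/\beta)}{\varepsilon\alpha}\right),
\]
where $d_V = \vc(\HH)$. After replacing the constants in front of $\varepsilon$, $\delta$, $\alpha$, $\beta$ by redefining them up to constant factors at the start, we recover the corollary's exact parameter regime.

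It remains to check that the sample complexity bound collapses to $\tilde{O}(d^6/\varepsilon\alpha)$. Here I would use the standard fact that $\vc(\HH) \le \ld(\HH) = d$, so that $d_V\log(1/\alpha)$ is $\tilde{O}(d)$, and $\log(1/\beta)$ is absorbed into the $\tilde{O}$ notation. The additive term $\tilde{O}(d/(\varepsilon\alpha))$ is therefore dominated by $m = \tilde{O}(d^6/(\varepsilon\alpha))$, giving the claimed bound.

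There is essentially no substantive obstacle, since both building blocks are already established: Theorem~\ref{thm:proper_PAC} supplies the private proper PAC learner with the improved $1/\alpha$ dependence, and Lemma~\ref{lem:pac_to_agnostic} supplies the PAC-to-agnostic-empirical transformation without incurring a generalization cost of $\tilde{O}(\vc(\HH)/\alpha^2)$. The only point requiring any care is the preservation of properness through Lemma~\ref{lem:pac_to_agnostic}, but this is asserted as part of its conclusion, so the composition goes through directly.
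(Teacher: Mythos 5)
Your proposal is correct and coincides with the paper's own argument: the paper simply states ``Applying the above lemma [Lemma~\ref{lem:pac_to_agnostic}] to the learner in Theorem~\ref{thm:proper_PAC} leads to the following private proper agnostic empirical learner,'' which is exactly the composition you carry out, including the observation that $\vc(\HH)\le\ld(\HH)=d$ so the extra additive term is dominated by $\tilde{O}(d^6/\varepsilon\alpha)$.
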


We now prove Lemma~\ref{lem:pac_to_agnostic}. Given a dataset $S$ of size $n$ and an index set $I\subseteq [n]$, we write $S^I$ to denote the collection containing elements from $S$ with indices in $I$. For simplicity, we may abuse notation and write $\dis_S(h_1, h_2)$ for labeled dataset $S$. This means we ignore the labels and only calculate the disagreement on the feature portion of $S$. We illustrate the conversion in Algorithm~\ref{alg:empirical_agnostic}.

\begin{algorithm}[!ht]
\label{alg:empirical_agnostic}
\DontPrintSemicolon
    \KwGlobal{concept class $\HH$, parameter $\varepsilon$}
    \KwInput{private empirical PAC learner $\A$ for $\HH$, private dataset $S=((x_1,y_1),\dots,(x_n,y_n))$}
    Sample $I\subseteq[n]$ of size $\lvert I\rvert = \lceil\varepsilon n\rceil$ uniformly at random. \;
    Initialize $R = \emptyset$. \;
    For every possible labeling in $\Pi_{\HH}(S^I)$, add to $R$ an arbitrary $h\in \HH$ that is consistent with the labeling. \;
    Define $q(S, h) = \min_{f\in\HH}\{\dis_{S^I}(h, f) + \err_{S}(f)\}$. \;
    Choose $h_0\in R$ using the exponential mechanism with privacy parameter $\varepsilon$, score function $q$, and sensitivity parameter $\Delta = 1 / n$. \;
    Let $D$ be the dataset constructed by relabeling $S^I$ with $h_0$. \;
    Output $\A(D)$. \;
\caption{Agnostic empirical learner}
\end{algorithm}

The following claim states the privacy guarantee of Algorithm~\ref{alg:empirical_agnostic}. The proof is nearly identical to the proof of Lemma 15 in~\citep{li2025private}.
\begin{claim}
\label{cla:empirical_privacy}
    Suppose $\A$ is $(1,\delta)$-differentially private and $1/n \le \varepsilon \le 0.1$. Then Algorithm~\ref{alg:empirical_agnostic} is $(O(\varepsilon),O(\varepsilon\delta))$-differentially private.
\end{claim}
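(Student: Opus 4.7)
The plan is to establish privacy via case analysis on whether the differing element falls in the random subsample, followed by amplification by subsampling. Fix any two neighboring datasets $S, S'$ that differ at a single index $i^*$, and condition on the random subsample $I$ of size $\lceil \varepsilon n\rceil$. Let $p = \Pr[i^* \in I] = \Theta(\varepsilon)$, which will supply the amplification factor.

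I would first analyze the algorithm conditional on $i^* \notin I$. Here $S^I = (S')^I$, so the set $R$ is identical on the two datasets, and for any fixed $h_0$ the relabeled set $D$ is also identical. The only remaining $S$-dependence is through $\err_S(f)$ inside the score $q(S,h) = \min_f\{\dis_{S^I}(h,f) + \err_S(f)\}$, and this gives sensitivity $1/n$ with respect to the transition from $S$ to $S'$. Hence the exponential mechanism, declared with sensitivity $\Delta = 1/n$, is exactly $\varepsilon$-DP on this transition, and the subsequent relabeling plus call to $\A$ is post-processing on the shared $S^I$. Conditional on $i^* \notin I$, the output distributions on $S$ and $S'$ are therefore $(\varepsilon, 0)$-indistinguishable.

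The case $i^* \in I$ is more delicate. Both $\dis_{S^I}(h,f)$ and $\err_S(f)$ can now shift, so $q$ has sensitivity $1/\lvert I\rvert + 1/n = O(1/(\varepsilon n))$, inflating the exponential mechanism to $O(1)$-DP. At the same time, $D$ and the corresponding $D'$ differ in at most one entry (the relabeling of the changed point), and $\A$ is $(1,\delta)$-DP by hypothesis. Composing the two steps, the conditional output distributions are $(O(1), \delta)$-indistinguishable.

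Finally I would combine the two cases via a mixture-based subsampling amplification argument (in the spirit of Balle--Barthe--Gaboardi). Writing the full output distribution as a convex combination with weight $1-p$ on the first case and $p = \Theta(\varepsilon)$ on the second, the rarity of the second case amplifies its $(O(1), \delta)$ contribution into an $(O(\varepsilon), O(\varepsilon\delta))$ contribution, which composes with the $\varepsilon$-DP of the first case to yield the claimed $(O(\varepsilon), O(\varepsilon\delta))$-DP. The main obstacle is that the exponential mechanism uses all of $S$ through the $\err_S(f)$ term, so the standard black-box subsampling amplification lemma does not apply directly; the explicit case-by-case sensitivity analysis above is exactly what lets us plug the algorithm into a mixture-style amplification argument and recover the same subsampling rate as if the entire computation factored cleanly through $S^I$.
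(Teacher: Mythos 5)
Your high-level plan is right---condition on whether the differing index $i^\star$ lands in the subsample $I$, and argue that the ``bad'' case $i^\star\in I$ is rare enough to be amortized down to $O(\varepsilon)$---but the execution of the bad case has a genuine gap. When $i^\star\in I$, you compare the two runs on the \emph{same} index set $I$ and claim the exponential mechanism is $O(1)$-DP because the score $q$ changes by at most $1/\lvert I\rvert + 1/n$. That sensitivity argument only applies when the candidate set $R$ is fixed. But $R$ is built from $\Pi_\HH(S^I)$, and when $i^\star\in I$ one of the feature points in $S^I$ changes, so $R$ and $R'$ can be entirely different collections of hypotheses (different cardinalities, disjoint supports). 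The exponential mechanism over two different candidate sets is not $O(1)$-close in the sense your composition step requires, and the subsequent observation that ``$D$ and $D'$ differ in at most one entry'' is moot if the chosen $h_0$ need not live in both $R$ and $R'$. Your own last paragraph flags that the algorithm looks at all of $S$ so black-box subsampling amplification fails; the case analysis you give does not actually repair this, because the mixture bound you invoke needs the bad-case conditional distributions to be close \emph{after} the rare event has already been ``charged'' to the amplification factor, and you have not shown closeness.

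What the paper does differently, and what makes the argument close: instead of comparing $(S,I)$ to $(S',I)$ when $i^\star\in I$, it pairs $(S_1,I)$ with $(S_2,J)$ where $J=(I\setminus\{i^\star\})\cup\{i\}$ for some $i\notin I$. Now $S_1^I$ and $S_2^J$ share $m-1$ of their $m$ points, so the candidate sets can be matched up: for each labeling $h$ of the common part $S_1^K=S_2^K$ there are at most two hypotheses in $R_1^I$ and at most two in $R_2^J$ consistent with it, the scores of any matched pair differ by at most $1/n+1/m$, and the relabeled datasets $D_1^I(h_1)$ and $D_2^J(h_2)$ are genuinely neighboring so the $(1,\delta)$-DP of $\A$ applies. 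The $O(\varepsilon)$ factor then falls out of an explicit double-counting argument over the choices of $I\ni i^\star$ and $i\notin I$, where the ratio $m/(n-m)\le 6\varepsilon$ plays the role of your subsampling rate. In short, the $I\to J$ coupling is the missing ingredient that simultaneously stabilizes the candidate set, makes the exponential-mechanism comparison a real sensitivity bound, and produces the amplification factor; without it, the bad-case analysis does not go through.
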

\begin{proof}
    Let $S_1$ and $S_2$ be two neighboring datasets and $O$ be any subset of outputs. Without loss of generality, we assume $S_1$ and $S_2$ differ in their first element, i.e., $S_1 = ((x_1,y_1), (x_2,y_2), \dots, (x_n,y_n))$ and $S_2 = ((x_1',y_1'), (x_2,y_2), \dots, (x_n,y_n))$. Let $\B$ denote Algorithm~\ref{alg:empirical_agnostic}. For any $I\subseteq [n]$ of size $m = \lceil \varepsilon n\rceil$ and $k\in\{1,2\}$, define
    \begin{equation*}
        p_k(I) = \Pr[\B(S_k)\in O\mid \text{the sampled indexed set is }I].
    \end{equation*}
    Since $I$ is sampled uniformly, we have
    \begin{equation*}
        \Pr[\B(S_k)\in O] = \frac{1}{\binom{n}{m}}\sum_{I\in Q}p_k(I),
    \end{equation*}
    where $Q = \{I\subseteq [n]: \lvert I\rvert = m\}$.

    Fix an index set $I$ and consider two cases: $1\in I$ and $1\notin I$. If $1\notin I$, then $\B(S_1)$ and $\B(S_2)$ will construct the same set $R$. For every $h\in R$, there is some $f_h\in\HH$ such that $\dis_{S_1^I}(h, f_h) + \err_{S_1}(f_h) =  q(S_1, h)$, then we have
    \begin{align*}
        q(S_2, h) &= \min_{f\in \HH}\{\dis_{S_2^I}(h, f) + \err_{S_2}(f)\} \\
        &\le \dis_{S_2^I}(h, f_h) + \err_{S_2}(f_h)\\
        &\le \dis_{S_1^I}(h, f_h) + \err_{S_1}(f_h) + 1 / n \\
        &\le q(S_1, h) + 1 / n.
    \end{align*}
    By symmetry, we also have $q(S_1, h) \le q(S_2, h) + 1/n$ for every $h\in\HH$. Let $E_k(h)$ denote event that the hypothesis $h_0$ chosen by the exponential mechanism on $S_k$ is $h$. It then follows by Lemma~\ref{lem:exp} that 
    \begin{equation*}
        \Pr[E_1(h)] \le e^\varepsilon\Pr[E_2(h)]
    \end{equation*}
    for any $h$. The post-processing property of DP immediately implies 
    \begin{equation*}
        p_1(I) \le e^\varepsilon p_2(I).
    \end{equation*}

    Now suppose $1\in I$. Fix some $i\notin I$ and let $J = (I \setminus\{1\})\cup \{i\}$ and $K = I\cap J$. Since $1\in I$, we have 
    \begin{equation*}
        S_1^I \cap S_2^J = S_1^K = S_2^K,
    \end{equation*}
    whose size is exactly $\lvert K \rvert = m -1$. Let $R_1^I$ and $R_2^J$ be the set $R$ constructed from $S_1^I$ and $S_2^J$. Pick a finite set $U\subseteq\HH$ such that for every labeling of $S_1^K = S_2^K$, there is exactly one $h\in U$ consistent with this labeling. For each $h\in U$, define 
    \begin{equation*}
        P_1^I(h) = \{h'\in R_1^I:\dis_{S_1^K}(h, h') = 0\}
    \end{equation*}
    and
    \begin{equation*}
        P_2^J(h) = \{h'\in R_2^J:\dis_{S_2^K}(h, h') = 0\}.
    \end{equation*}
    Since the label set is $\{0, 1\}$, we have $1\le \lvert P_1^I(h)\rvert, \lvert P_2^J(h)\rvert \le 2$. For any $h_1\in P_1^I(h)$ and $h_2\in P_2^J(h)$, let $q^I(S_1, h_1)$ be the score function calculated on $S_1$ with sampled index set $I$ and $q^J(S_2, h_2)$ that on $S_2$ with sampled index set $J$. Since there is some $f_{h_1}\in\HH$ such that $\dis_{S_1^I}(h_1, f_{h_1}) + \err_{S_1}(f_{h_1}) = q^I(S_1, h_1)$, we have
    \begin{align*}
        q^J(S_2, h_2) &= \min_{f\in \HH}\{\dis_{S_2^J}(h_2, f) + \err_{S_2}(f)\} \\
        &\le \dis_{S_2^J}(h_2, f_{h_1}) + \err_{S_2}(f_{h_1}) \\
        &\le \dis_{S_1^I}(h_1, f_{h_1}) + 1/ m + \err_{S_1}(f_{h_1}) + 1/n \\
        &\le q^I(S_1, h_1) + 1 / n + 1 / m,
    \end{align*}
    where the third line is because $h_1$ and $h_2$ agree on $S_1^I \cap S_2^J $, which has size $m -1$. Since $(1/n) / 2\Delta = 1/2$ and $(1/m) / 2\Delta = n/ 2m \le 1/2\varepsilon$, we have
    \begin{align*}
        \exp(-\varepsilon q^J(S_2, h_2) / 2\Delta) &\ge \exp(-\varepsilon (q^I(S_1, h_1) + 1/n + 1/m) / 2\Delta) \\
        &\ge \exp(-\varepsilon q^I(S_1, h_1)/2\Delta)\cdot \exp(-\varepsilon(1/2 + 1/2\varepsilon)) \\
        &\ge \exp(-\varepsilon q^I(S_1, h_1)/2\Delta) \cdot \exp(-1),
    \end{align*}
    where in the last inequality is because $\varepsilon\le 1$. By symmetry, we also have
    \begin{equation*}
        \exp(-\varepsilon q^I(S_1, h_1)/2\Delta)\ge \exp(-\varepsilon q^J(S_2, h_2) / 2\Delta)\cdot \exp(-1).
    \end{equation*}
    Then the fact that $1\le \lvert P_1^I\rvert, \lvert P_2^J\rvert\le 2$ gives
    \begin{equation*}
        \sum_{h_1\in P_1^I(h)}\exp(-\varepsilon q^I(S_1, h_1)/2\Delta) \ge \frac{1}{2}\sum_{h_2\in P_2^J(h)}\exp(-\varepsilon q^J(S_2, h_2)/2\Delta)\cdot\exp(-1).
    \end{equation*}
    Summing over all $h_1\in R_1^I$ yields
    \begin{align*}
        \sum_{f\in R_1^I} \exp(-\varepsilon q^I(S_1, f)/2\Delta) &= \sum_{h\in U}\sum_{h_1\in P_1^I(h)}\exp(-\varepsilon q^I(S_1, h_1)/2\Delta) \\
        &\ge \sum_{h\in U}\frac{1}{2}\sum_{h_2\in P_2^J(h)}\exp(-\varepsilon q^J(S_2, h_2)/2\Delta)\cdot\exp(-1) \\
        &= \frac{1}{2e}\sum_{f\in R_2^J}\exp(-\varepsilon q^J(S_2, f)/2\Delta).
    \end{align*}
    Let $D_1^I(h_1)$ be the dataset obtained by relabeling $S_1^I$ with $h_1$ and $D_2^J(h_2)$ be the one obtained by relabeling $S_2^J$ with $h_2$. Recall that $h_1$ and $h_2$ agree on $S_1^I\cap S_2^J$, which has size $m - 1$. We know that $D_1^I(h_1)$ and $D_2^J(h_2)$ are neighboring datasets. Let $E_1^I(h_1)$ be the event of choosing $h_1$ when running on $S_1$ with sampled index set $I$ and $E_2^J(h_2)$ be the event of choosing $h_2$ when running on $S_2$ with sampled index set $J$. Since $\A$ is $(1,\delta)$-differentially private, we have
    \begin{align*}
        {}&\Pr[E_1^I(h_1)]\cdot \Pr[\A(D_1^I(h_1))\in O] \\
        =&\frac{\exp(-\varepsilon q^I(S_1,h_1) / 2\Delta)}{\sum_{f\in R_1^I} \exp(-\varepsilon q^I(S_1, f)/2\Delta)}\cdot \Pr[\A(D_1^I(h_1))\in O] \\
        \le&2e^2\cdot \frac{\exp(-\varepsilon q^J(S_2,h_2) / 2\Delta)}{\sum_{f\in R_2^J} \exp(-\varepsilon q^J(S_2, f)/2\Delta)}\cdot (e\Pr[\A(D_2^J(h_2))\in O] + \delta) \\
        =&2e^2\Pr[E_2^J(h_2)]\cdot (e\Pr[\A(D_2^J(h_2))\in O] + \delta).
    \end{align*}
    Then we have the following relation between $p_1(I)$ and $p_2(J)$:
    \begin{align*}
        p_1(I) &= \sum_{h\in U}\sum_{h_1\in P_1^I(h)}\Pr[E_1^I(h_1)]\cdot\Pr[\A(D_1^I(h_1))\in O] \\
        &\le \sum_{h\in U}2\sum_{h_2\in P_2^J(h)}2e^2\Pr[E_2^J(h_2)]\cdot (e\Pr[\A(D_2^J(h_2))\in O] + \delta) \\
        &= 4e^3p_2(J) + 4e^2\delta.
    \end{align*}
    Note that $\sum_{I\in Q:1\in I}\sum_{i\in[n]\setminus I}p_2((I\setminus\{1\})\cup \{i\})$ counts every $p_2(J)$ ($J\in Q$ and $1\notin J$) exactly $\lvert I\rvert = m$ times. Therefore, we have
    \begin{align*}
        \sum_{I\in Q:1\in I}p_1(I) &= \frac{1}{n - m}\sum_{I\in Q:1\in I}\sum_{i\in[n]\setminus I}p_1(I) \\
        &\le \frac{1}{n - m}\sum_{I\in Q:1\in I}\sum_{i\in[n]\setminus I}\left(4e^3p_2((I\setminus\{1\})\cup \{i\}) +4e^2\delta\right)\\
        &= \frac{m}{n - m}\sum_{J\in Q:1\notin J}\left(4e^3p_2(J) +4e^2\delta\right) \\
        &\le 24e^3\varepsilon\sum_{J\in Q:1\notin J}p_2(J) + 4e^2\delta\cdot \binom{n - 1}{m -1},
    \end{align*}
    where the last line is due to
    \begin{equation*}
        \frac{m}{n-m}\cdot \lvert \{J\in Q:1\notin J\}\rvert = \frac{m}{n - m}\cdot \binom{n - 1}{m} = \binom{n - 1}{m - 1}
    \end{equation*}
    and
    \begin{equation*}
        \frac{m}{n - m} = \frac{\lceil\varepsilon n\rceil}{n - \lceil\varepsilon n \rceil}\le \frac{2\varepsilon n}{n - 2\varepsilon n} \le 6\varepsilon
    \end{equation*}
    as long as $\varepsilon n \ge 1$ and $\varepsilon\le 1/3$. Finally, we have
    \begin{align*}
        \Pr[\B(S_1)\in O] &= \frac{1}{\lvert Q\rvert}\left(\sum_{I\in Q:1\notin I}p_1(I) + \sum_{I\in Q:1\in I}p_1(I)\right) \\
        &\le \frac{1}{\binom{n}{m}}\left(e^\varepsilon\sum_{I\in Q:1\notin I}p_2(I) + 24e^3\varepsilon\sum_{J\in Q:1\notin J}p_2(J) + 4e^2\delta\cdot \binom{n - 1}{m -1}\right) \\
        &\le (e^\varepsilon + 24e^3\varepsilon)\frac{1}{\binom{n}{m}} \sum_{I\in Q}p_2(I) + 4e^2\delta \cdot \frac{m}{n}\\
        &= e^{O(\varepsilon)} \Pr[\B(S_2)\in O] + O(\varepsilon \delta).
    \end{align*}
\end{proof}

The utility guarantee of Algorithm~\ref{alg:empirical_agnostic} is shown in the following claim.
\begin{claim}
\label{cla:empirical_utility}
    Suppose $\A$ is a proper $(\alpha, \beta)$-PAC empirical learner for $\HH$ with sample complexity $m = \lceil\varepsilon n\rceil$. Then Algorithm~\ref{alg:empirical_agnostic} is a proper $(O(\alpha), O(\beta))$-agnostic empirical learner with sample complexity $n$ as long as $m \ge C(d_V\log(1/\alpha) + \log(1/\beta))/\alpha$ for some constant $C$ and $n\ge 1/\varepsilon$, where $d_V$ is the VC dimension of $\HH$. In other words, 
    \begin{equation*}
        n = O\left(\frac{m}{\varepsilon}+ \frac{d_V\log(1/\alpha) + \log(1/\beta)}{\varepsilon\alpha}\right).
    \end{equation*}
\end{claim}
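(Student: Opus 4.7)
The plan is to show that $\hat{h} = \A(D)$ satisfies $\err_S(\hat{h}) \le \min_{h^\star\in\HH}\err_S(h^\star) + O(\alpha)$ with probability $1 - O(\beta)$, via a chain of triangle inequalities anchored at the hypothesis $h_0$ chosen by the exponential mechanism. Let $h^\star \in \arg\min_{h\in\HH}\err_S(h)$ and set $\tau = \err_S(h^\star)$. Since $R$ contains one hypothesis for every labeling of $S^I$ achievable by $\HH$, there exists some $\tilde{h}^\star \in R$ that agrees with $h^\star$ on $S^I$, giving $q(S, \tilde{h}^\star) \le \dis_{S^I}(\tilde{h}^\star, h^\star) + \err_S(h^\star) = \tau$. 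Sauer's lemma gives $|R|\le (em/d_V)^{d_V}$, so the utility of the exponential mechanism (Lemma~\ref{lem:exp}, with sensitivity $\Delta = 1/n$ and using $m = \lceil\varepsilon n\rceil$) yields $q(S, h_0)\le \tau + O((d_V\log m + \log(1/\beta))/m) = \tau + O(\alpha)$ with probability $1-\beta$; the stated lower bound on $m$ suffices after absorbing the $\log m$ factor via Lemma~\ref{lem:xgelogx}.

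Next, unpack $q(S, h_0)$: pick $f_0\in\HH$ realizing the minimum in the definition, so $\dis_{S^I}(h_0, f_0) + \err_S(f_0)\le \tau + O(\alpha)$. Crucially, $h^\star$ is the empirical minimizer over $\HH$, so $\err_S(f_0)\ge \tau$, and all the slack must be absorbed into the disagreement term: $\dis_{S^I}(h_0, f_0)\le O(\alpha)$. By Lemma~\ref{lem:bool}, the class $\HH\oplus\HH$ has VC dimension $O(d_V)$, so the realizable generalization bound (Lemma~\ref{lem:realizable_generalization}) applied to $\HH\oplus\HH$---viewing $S^I$ as a size-$m$ sample drawn (without replacement) from $\hat{P}_S$---guarantees with probability $1-\beta$ that every pair $h_1, h_2 \in \HH$ with $\dis_{S^I}(h_1, h_2)\le O(\alpha)$ also satisfies $\dis_S(h_1, h_2)\le O(\alpha)$. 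In particular $\dis_S(h_0, f_0)\le O(\alpha)$.

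Finally, because $D$ is $S^I$ relabeled by $h_0\in\HH$, it is realizable by $\HH$ with $\err_D(h_0) = 0$, and the PAC empirical guarantee of $\A$ (with input size $m$) gives $\dis_{S^I}(\hat{h}, h_0) = \err_D(\hat{h})\le \alpha$ with probability $1-\beta$. The same realizable-generalization event above then upgrades this to $\dis_S(\hat{h}, h_0)\le O(\alpha)$, and two triangle inequalities close the argument:
\begin{equation*}
    \err_S(\hat{h}) \le \dis_S(\hat{h}, h_0) + \dis_S(h_0, f_0) + \err_S(f_0) \le O(\alpha) + O(\alpha) + (\tau + O(\alpha)) = \tau + O(\alpha).
\end{equation*}
A union bound over the three failure events (the exponential mechanism, the realizable generalization event for $\HH\oplus\HH$, and the PAC empirical learner $\A$) gives total failure probability $O(\beta)$. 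Properness of $\A$ ensures $\hat{h}\in\HH$, so the overall algorithm is proper.

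I expect the main subtlety to be step two: transferring the small empirical disagreement $\dis_{S^I}(h_0, f_0)$ to a small $\dis_S(h_0, f_0)$ without paying the $\tilde{O}(d_V/\alpha^2)$ sample cost of generic (agnostic) uniform convergence. The observation that unlocks this is that the empirical-minimizer property $\err_S(f_0)\ge \tau$ forces all of the exponential mechanism's $O(\alpha)$ slack into the disagreement term, placing us squarely in the realizable regime where Lemma~\ref{lem:realizable_generalization} applied to $\HH\oplus\HH$ delivers the desired $O(d_V/\alpha)$ rate. Without this trick the sample complexity would inflate by a factor of $1/\alpha$, defeating the purpose of Algorithm~\ref{alg:empirical_agnostic}.
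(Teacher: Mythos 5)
Your proof is correct and follows essentially the same path as the paper's: you anchor the exponential mechanism's $O(\alpha)$ slack at $q(S,h_0)\le \err_S(h^\star)+O(\alpha)$, and the crucial observation that the empirical minimizer property forces $\err_S(f_0)\ge \err_S(h^\star)$---hence all the slack lands in $\dis_{S^I}(h_0,f_0)\le O(\alpha)$, placing you in the realizable regime---is exactly the trick the paper uses to avoid the $1/\alpha^2$ cost. The only cosmetic difference is that you route the generalization step through $\HH\oplus\HH$ and Lemma~\ref{lem:bool} explicitly, whereas the paper's Lemma~\ref{lem:realizable_generalization} already packages the pairwise-disagreement bound (and separately remarks that sampling without replacement only improves the concentration); both framings are equivalent up to constants.
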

\begin{proof}
    By Sauer's Lemma, we have $\lvert R \rvert \le (em / d_V)^{d_V}$. Then by Lemma~\ref{lem:exp}, with probability at least $1-\beta$ the exponential mechanism chooses some $h_0$ such that
    \begin{equation*}
        q(S, h_0) \le \min_{h\in R} q(S, h) + \frac{2}{n\varepsilon}\ln(\lvert R\rvert / \beta) \le \min_{h\in R}q(S, h) +\alpha
    \end{equation*}
    as long as
    \begin{equation*}
        n \ge \frac{2}{\varepsilon \alpha}\left(\ln(1/\beta) + d_V\ln(em / d_V)\right).
    \end{equation*}
    Since $m = \lceil\varepsilon n \rceil \le 2\varepsilon n $, by Lemma~\ref{lem:xgelogx}, the above holds if
    \begin{equation*}
        m \ge C_1\frac{d_V\log(1/\alpha) + \log(1/\beta)}{\alpha}
    \end{equation*}
    for some constant $C_1$. Let $h^\star = \mathrm{argmin}_{f \in\HH} \err_{S}(f)$. There exists some $h\in R$ such that $\dis_{S^I}(h^\star, h) = 0$. For such $h$, we have $q(S, h) = \err_{S}(h^\star)$. This implies $q(S, h_0) \le \err_{S}(h^\star) + \alpha$, or equivalently, $\dis_{S^I}(h_0, f_0) + \err_{S}(f_0) \le \err_{S}(h^\star) + \alpha$ for some $f_0\in\HH$.

    Since $\dis_{S^I}(h_0, f_0)$ is non-negative, we have $\err_{S}(f_0) \le \err_{S}(h^\star) + \alpha$. Also, we have $\dis_{S^I} (h_0, f_0) \le \alpha$ because $\err_{S}(f_0) \ge \err_{S}(h^\star)$. Then by Lemma~\ref{lem:realizable_generalization} and the fact that the Chernoff bound is more concentrated for sampling without replacement~\citep{hoeffding1963probability}, with probability at least $1-\beta$ we have
    \begin{equation*}
        \dis_{S^I}(h_1, h_2) \le \alpha \Rightarrow \dis_{S}(h_1, h_2) \le 2\alpha
    \end{equation*}
    simultaneously holds for all $h_1,h_2\in \HH$ as long as
    \begin{equation*}
        m \ge C_2\frac{d_V\log(1/\alpha) + \log(1/\beta)}{\alpha}
    \end{equation*}
    for some constant $C_2$. As a consequence, we have $\dis_{S}(h_0, f_0) \le 2\alpha$.

    Since $D$ is labeled by $h_0$, with probability $1-\beta$ the output $g = \A(D)$ satisfies that $\dis_{S^I}(g, h_0) \le \alpha$. Moreover, we have $g\in\HH$ since $\A$ is proper. Hence, Algorithm~\ref{alg:empirical_agnostic} is proper and $\dis_S(g, h_0) \le 2\alpha$. By the triangle inequality and the union bound, we have
    \begin{align*}
        \err_{S}(g) &\le \err_{S}(f_0) + \dis_S(f_0, g) \\
        &\le \err_{S}(h^\star) + \alpha + \dis_S(f_0, h_0) + \dis_S(h_0, g) \\
        &\le \err_{S}(h^\star) + 5\alpha
    \end{align*}
    with probability at least $1-3\beta$.
\end{proof}

\begin{proof}[Proof of Lemma~\ref{lem:pac_to_agnostic}]
    By Lemma~\ref{lem:pac_empirical}, there is a $(1,O(\delta / \varepsilon))$-differentially private proper $(\alpha, \beta)$-PAC empirical learner $\A$ for $\HH$ with sample complexity $O(\varepsilon m)$. Then by Claim~\ref{cla:empirical_privacy}, Algorithm~\ref{alg:empirical_agnostic} is $(O(\varepsilon),O(\delta))$-differentially private. Moreover, by Claim~\ref{cla:empirical_utility}, Algorithm~\ref{alg:empirical_agnostic} is a proper $(O(\alpha), O(\beta))$-agnostic empirical learner with sample complexity
    \begin{equation*}
        n = O\left(m + \frac{d_V\log(1/\alpha) + \log(1/\beta)}{\varepsilon \alpha}\right).
    \end{equation*}
\end{proof}

We now show how to construct the discriminator using proper agnostic empirical learner in the following lemma.

\begin{lemma}
\label{lem:dis_strengthened}
    Given a dataset $S\in\X^n$ and a public distribution $P_t$. Suppose for any $\F\subseteq \HH\cup(1- \HH)$, there is an $(\varepsilon / 3,\delta)$-differentially private proper $(\alpha / 10, \beta / 3)$-agnostic empirical learner for $\F$ with sample complexity $n$ and
    \begin{equation*}
        n\ge C\frac{\ln(1/\alpha\beta)}{\varepsilon\alpha}
    \end{equation*}
    for some constant $C$. Then there exists an $(\varepsilon,\delta)$-differentially private algorithm such that with probability $1-\beta$:
    \begin{itemize}
        \item If it outputs some $h\in \HH\cup(1-\HH)$ then $\hat{P}_S(h) - P_t(h) \ge \alpha / 2$.
        \item If it outputs ``WIN'' then $\lvert\hat{P}_S(h) - P_t(h)\rvert \le \alpha$ for all $h\in\HH$.
    \end{itemize}
\end{lemma}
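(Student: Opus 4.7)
The plan is to reduce the discriminator task to an instance of agnostic empirical learning over the doubled class $\F = \HH \cup (1-\HH)$, which captures both signs of discrepancy since $\sup_{h \in \F}(\hat{P}_S(h) - P_t(h)) = \sup_{h \in \HH}\lvert\hat{P}_S(h) - P_t(h)\rvert$. First, draw a test set $T = (x_1', \ldots, x_n')$ i.i.d.\ from $P_t$ using public randomness (free with respect to privacy); then invoke the assumed $(\varepsilon/2, \delta)$-DP agnostic empirical learner on the balanced labeled dataset $D = \{(x, 1) : x \in S\} \cup \{(x', 0) : x' \in T\}$, so that for any $h \in \F$ we have $2\,\err_D(h) = 1 - (\hat{P}_S(h) - \hat{P}_T(h))$ and minimizing empirical error on $D$ is equivalent to maximizing the discrepancy between $\hat{P}_S$ and $\hat{P}_T$. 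This yields a witness $h^* \in \F$. Finally, privately release $\tilde{v} = \hat{P}_S(h^*) - P_t(h^*) + Z$ with $Z \sim \lap(2/(n\varepsilon))$; note that once $h^*$ is fixed, $S \mapsto \hat{P}_S(h^*)$ has sensitivity $1/n$ and $P_t(h^*)$ is a post-processing of $h^*$.

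The decision rule would be: output $h^*$ if $\tilde{v} \ge 11\alpha/20$ and output WIN otherwise. Privacy then follows by basic composition: the agnostic learner contributes $(\varepsilon/2, \delta)$ and the Laplace release contributes $(\varepsilon/2, 0)$, for a total of $(\varepsilon, \delta)$. For utility, I would condition on three favorable events, each holding with probability at least $1 - \beta/3$: (a) $\lvert Z\rvert \le \alpha/20$, which uses the Laplace tail together with $n \ge C\ln(1/\alpha\beta)/(\varepsilon\alpha)$; (b) the agnostic empirical learner succeeds, giving $\hat{P}_S(h^*) - \hat{P}_T(h^*) \ge \sup_{h \in \F}(\hat{P}_S(h) - \hat{P}_T(h)) - \alpha/5$; and (c) uniform convergence on the public test set, $\sup_{h \in \F}\lvert\hat{P}_T(h) - P_t(h)\rvert \le \alpha/20$. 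Under (a)--(c) a short case analysis closes the proof: if $h^*$ is output then $\hat{P}_S(h^*) - P_t(h^*) \ge \tilde{v} - \alpha/20 \ge \alpha/2$; and if $v^* := \sup_{h \in \HH}\lvert\hat{P}_S(h) - P_t(h)\rvert > \alpha$, then chaining (c), (b), (c), and (a) in that order gives $\tilde{v} > 13\alpha/20$, so $h^*$ must be output, which is the desired contrapositive for the WIN branch.

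The main obstacle I anticipate is event (c): uniform convergence of $\hat{P}_T$ to $P_t$ over the doubled class $\F$ from a single public sample of size $n$. Classical Vapnik--Chervonenkis arguments require roughly $n = \Omega(\vc(\F)\log(1/\alpha)/\alpha^2)$, which is not explicit in the lemma's hypothesis $n \ge C\ln(1/\alpha\beta)/(\varepsilon\alpha)$. I would handle this by folding the requirement into the implicit lower bound on $n$ that already comes from the sample complexity of the agnostic empirical learner itself: for Littlestone classes of dimension $d$, Corollary~\ref{cor:proper_agnostic} gives sample complexity $\tilde{O}(d^6/\varepsilon\alpha)$, which dominates the uniform convergence bound in the parameter regime relevant to Theorem~\ref{thm:sanitizer}. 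Everything else reduces to routine bookkeeping of the error budget across the three events and standard Laplace tail estimates.
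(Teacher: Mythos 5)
Your approach is genuinely different from the paper's and, as you anticipate, it has a real gap. You reduce the discriminator to an agnostic empirical learning problem over a balanced dataset $D$ in which $S$ is labeled $1$ and a fresh public i.i.d.\ sample $T \sim P_t^n$ is labeled $0$; minimizing $\err_D$ over $\F = \HH\cup(1-\HH)$ maximizes $\hat{P}_S(h)-\hat{P}_T(h)$. The difficulty, which you flag under event (c), is that you then need $\sup_{h\in\F}\lvert\hat{P}_T(h)-P_t(h)\rvert$ small, and the standard agnostic VC bound requires $n = \Omega(\vc(\F)\log(1/\alpha)/\alpha^2)$. The lemma's stated hypothesis $n \ge C\ln(1/\alpha\beta)/(\varepsilon\alpha)$ simply does not imply this. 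You propose to fold the $\Omega(d/\alpha^2)$ requirement into the learner's own sample complexity, arguing it is dominated in the regime of Theorem~\ref{thm:sanitizer}. But that patch changes the lemma's statement, and more importantly it re-introduces exactly the $\tilde{O}_d(1/\alpha^2)$ generalization cost that the paper explicitly sets out to avoid in this step (``The discriminator of~\citep{bousquet2020synthetic} can be implemented using a private agnostic empirical learner, which does not incur the $\tilde{O}_d(1/\alpha^2)$ generalization cost''). Absorbing the cost downstream is not a free move: the cleanliness of Lemma~\ref{lem:dis_strengthened} is that it isolates a discriminator whose sample requirement scales as $1/\alpha$, so that the $1/\alpha^2$ only enters once through the number of generator rounds, rather than twice.

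The paper avoids sampling from $P_t$ entirely. It discretizes the range $[0,1]$ of the \emph{public} function $h\mapsto P_t(h)$ into $k=\lceil 10/\alpha\rceil$ buckets $\F_i=\{h\in\F : P_t(h)\in[(i-1)/k,i/k]\}$, runs the exponential mechanism (sensitivity $1/n$ in $S$) over $i\in[k]$ with score $q(S,i) = -(\max_{h\in\F_i}\hat{P}_S(h)-i/k)$ to privately pick the best bucket $j$, then runs the proper agnostic empirical learner for $\F_j$ on $S$ labeled all-$1$ to find $h_0$ nearly maximizing $\hat{P}_S$ within the bucket, and finally thresholds $\hat{P}_S(h_0)-j/k$ with Laplace noise to decide whether to output $h_0$ or ``WIN''. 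Because $P_t(h)$ is deterministic and public, and varies by at most $1/k\approx\alpha/10$ within a bucket, this replaces your test-set uniform convergence with an exact combinatorial reduction; the only $n$-dependence comes from the exponential mechanism's $\ln(k/\beta)/(\varepsilon n)\le\alpha/10$ and the Laplace tail, both of which are satisfied by $n\gtrsim\ln(1/\alpha\beta)/(\varepsilon\alpha)$, matching the hypothesis. If you want to preserve your sampling-based construction, you would need to explicitly add an $\Omega(\vc(\F)\log(1/\alpha)/\alpha^2)$ lower bound on $n$ to the lemma and then trace through Theorem~\ref{thm:sanitizer} to confirm that you have not worsened the final rate -- which, as you note, happens to work out given the $\varepsilon' = \Theta(\varepsilon\alpha/\sqrt{d^\star})$ from advanced composition, but this makes the argument more fragile and less modular than the paper's bucketing trick.
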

\begin{proof}
    Let $k = \lceil10 / \alpha\rceil$ and define
    \begin{equation*}
        \F_i = \{h\in\HH\cup(1-\HH):P_t(h)\in [(i - 1)/k, i/k]\}
    \end{equation*} for all $i\in[k]$. Construct score function $q(S, i) = -(\max_{h\in\F_i}\hat{P}_S(h) - i / k$). It is easy to verify that the sensitivity of $q$ is $1 / n$. By Lemma~\ref{lem:exp}, running the exponential mechanism with privacy parameter $\varepsilon / 3$ returns some $j\in[k]$ such that $q(S, j) \le \min_{i\in[k]}q(S, i) + 2\ln(3k / \beta) / \varepsilon n$ with probability $1 - \beta / 3$. This implies
    \begin{equation*}
        \max_{h\in \F_j}\hat{P}_S(h) - j/k \ge \max_{i\in [k]}\left(\max_{h\in \F_i}\hat{P}_S(h) - i/k\right) - \alpha / 10
    \end{equation*}
    as long as
    \begin{equation*}
        n \ge \frac{20\ln(60/\alpha\beta)}{\varepsilon\alpha}.
    \end{equation*}
    We then construct a dataset $S'$ by labeling all points in $S$ with $1$ and run the proper agnostic empirical learner for $\F_j$ on $S'$ to find some $h_0\in F_j$ such that
    \begin{equation*}
        \err_{S'}(h_0) \le \min_{h\in \F_j}\err_{S'}(h) + \alpha / 10
    \end{equation*}
    with probability $1 - \beta / 3$.
    This is equivalent to $\hat{P}_{S}(h_0)\ge \max_{h\in \F_j}\hat{P}_{S}(h) - \alpha / 10$. We output $h_0$ if $\hat{P}_{S}(h_0) + X - j / k\ge 3\alpha/5$ and output ``WIN'' otherwise, where $X\sim\lap(3/\varepsilon)$. Note that this step is $(\varepsilon/3,0)$-differentially private, and we have 
    \begin{equation*}
        \Pr[\lvert X\rvert\le \alpha / 10]\ge \Pr[\lvert X\rvert \le 3\ln(3/\beta) / \varepsilon n]\ge 1 - \beta / 3
    \end{equation*}
    given that
    \begin{equation*}
        n \ge \frac{30\ln(3/\beta)}{\varepsilon\alpha}.
    \end{equation*}

    The privacy guarantee directly follows from basic composition. By the union bound, with probability $1-\beta$, if we output $h_0$ then
    \begin{equation*}
        \hat{P}_S(h_0) \ge j / k  - X + 3\alpha / 5\ge P_t(h_0) - \alpha / 10 + 3\alpha / 5 \ge P_t(h_0) + \alpha / 2.
    \end{equation*}
    Otherwise, for any $i\in[k]$ and $h\in \F_i$ we have
    \begin{align*}
        \hat{P}_S(h) - P_t(h) &\le \hat{P}_S(h) - (i - 1) / k \\
        &\le \left(\max_{f\in\F_i}\hat{P}_S(f) - i / k \right)+ 1/ k \\
        &\le \left(\max_{f\in\F_j}\hat{P}_S(f) - j / k \right) + \alpha / 10 + 1/ k \\
        &\le \hat{P}_S(h_0) + \alpha / 10 - j / k + \alpha / 10 + 1/ k \\
        &< 3\alpha /5 -X + \alpha / 10 + \alpha / 10 + 1/ k \\
        &\le 3\alpha / 5 + \alpha / 10 + \alpha / 10 + \alpha / 10 + \alpha / 10\\
        &\le \alpha.
    \end{align*}
    The desired conclusion holds since $\HH\cup(1-\HH)$ is symmetric.
\end{proof}

The property of the generator used in~\citep{bousquet2020synthetic} is described in the following lemma.

\begin{lemma}[\citep{bousquet2020synthetic}]
\label{lem:gen}
    Let $\HH$ be a concept class with dual Littlestone dimension $d^\star$. Suppose there is a discriminator such that:
    \begin{itemize}
        \item If it outputs some $h\in\HH\cup(1-\HH)$ then $\hat{P}_S(h) - P_t(h) \ge \alpha / 2$. 
        \item If it outputs ``WIN'' then $\lvert\hat{P}_S(h) - P_t(h)\rvert \le \alpha$ for all $h\in\HH$.
    \end{itemize}
    Then there exists a generator that makes the discriminator respond ``WIN'' within $O\left(\frac{d^\star}{\alpha^2}\log\left(\frac{d^\star}{\alpha}\right)\right)$ rounds.
\end{lemma}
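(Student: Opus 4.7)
\emph{Plan.} The plan is to have the generator run a multiplicative weights (MWU) update over a carefully chosen finite set of ``experts'' that encodes the possible behaviors of elements $x \in \X$ on the sequence of discriminator responses $h_1, h_2, \ldots$; the dual Littlestone dimension controls the number of such experts, which in turn controls the MWU regret rate. Concretely, at each round $t$ the generator would maintain weights $w_t$ over the expert set; the induced normalized distribution $p_t$ is converted into a distribution $P_t$ over $\X$ by assigning each expert to a canonical representative of $\X$. When the discriminator returns $h_t \in \HH \cup (1-\HH)$ with $\hat{P}_S(h_t) - P_t(h_t) \ge \alpha/2$, the generator interprets $h_t$ as a gain vector (each expert $i$ has gain $h_t(i) \in \{0,1\}$ equal to $h_t$ evaluated on its representative) and updates $w_{t+1}(i) \propto w_t(i)\exp(\eta h_t(i))$ for an appropriately tuned learning rate $\eta$.

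\emph{Counting experts and applying MWU regret.} This is where the dual Littlestone dimension enters. For any sequence $h_1,\ldots,h_T$, running the dual SOA (with the $h_t$'s playing the role of input points) makes at most $d^\star$ mistakes on the labels produced by any actual $x \in \X$. Hence every possible behavior of $x$ on $(h_1,\ldots,h_T)$ is fully described by a subset $I \subseteq [T]$ with $|I| \le d^\star$ (the dual-SOA error rounds) together with the correct labels on $I$, giving at most $N \le \binom{T}{\le d^\star}\cdot 2^{d^\star} \le (2eT/d^\star)^{d^\star}$ distinct behaviors. The standard MWU regret guarantee then yields, for any reference distribution $q$ on experts,
\[
\sum_{t=1}^T \langle q, h_t\rangle \;-\; \sum_{t=1}^T \langle p_t, h_t\rangle \;\le\; O\bigl(\sqrt{T \log N}\bigr).
\]
Plugging in $q$ as the empirical distribution over the behaviors realized by the $n$ actual sample points $x_1,\ldots,x_n \in S$ makes $\langle q, h_t\rangle = \hat{P}_S(h_t)$ and $\langle p_t, h_t\rangle = P_t(h_t)$. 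Since each non-WIN round contributes at least $\alpha/2$ to the left-hand side, we would obtain $T\alpha/2 \le O(\sqrt{T\log N})$, i.e.\ $T = O(\log N/\alpha^2) = O(d^\star \log(T/d^\star)/\alpha^2)$; Lemma~\ref{lem:xgelogx} then yields the claimed $T = O((d^\star/\alpha^2)\log(d^\star/\alpha))$.

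\emph{Main obstacle.} The delicate point is that the ``behavior classes'' described above depend on the entire adaptive history $(h_1,\ldots,h_T)$ selected by the discriminator, so MWU cannot literally be instantiated over a fixed finite support upfront. I would handle this by reparametrizing experts as dual-SOA \emph{trajectories}: each expert is specified by a fixed choice of at most $d^\star$ round indices together with $d^\star$ binary labels (its declared error pattern), and the dual SOA acts as a deterministic decoder that at every round $t$ converts this specification into a prediction of $h_t$. The resulting expert support is the fixed set of such trajectories, of size $\binom{T}{\le d^\star}\cdot 2^{d^\star}$, on which MWU runs without modification (after fixing an a priori upper bound on $T$, or via a standard doubling trick for unknown horizon). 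The main things to verify carefully are (i) that this trajectory-based expert class still admits a reference distribution $q$ realizing $\hat{P}_S$, which it does because every $x_j$ induces some trajectory when the dual SOA is executed on it, and (ii) that the resulting sampler is internally consistent across rounds so that the bound $\sum_t(\hat{P}_S(h_t) - P_t(h_t)) \le O(\sqrt{T\log N})$ indeed holds for the $P_t$'s actually produced by the generator.
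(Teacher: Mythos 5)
The paper does not prove Lemma~\ref{lem:gen} itself; it imports the statement from \citet{bousquet2020synthetic}, so there is no in-paper proof to compare against. Your high-level strategy---run multiplicative weights over experts that encode dual-$\soa$ trajectories, bound the number of experts by $\binom{T}{\le d^\star}2^{d^\star}$ using the dual Littlestone mistake bound, and then close the recursion $T\alpha/2 \le O(\sqrt{T\log N})$ via Lemma~\ref{lem:xgelogx}---is indeed the right shape and is in the same spirit as the cited argument (which also reduces the fooling game to agnostic online learnability of the dual class).

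However, there is a genuine gap in the step you flag as ``delicate,'' and your proposed fix (assigning each trajectory-expert a ``canonical representative'' $x\in\X$ consistent with its partial history) does not close it. The problem is a mismatch between the gain that makes $\langle q, g_t\rangle = \hat{P}_S(h_t)$ and the gain that makes $\langle p_t, g_t\rangle = P_t(h_t)$. For the reference identity you need $g_t(i)=b_t^i$, the trajectory's predicted bit at round $t$, because the trajectory $i(j)$ induced by $x_j\in S$ satisfies $b_t^{i(j)}=h_t(x_j)$ for every $t$. But $b_t^i$ depends on $h_t$ (the dual $\soa$ observes $h_t$ before predicting at round $t$), whereas a representative $x_i$ chosen before $h_t$ is revealed only agrees with the trajectory on $h_1,\dots,h_{t-1}$; there is no reason that $h_t(x_i)=b_t^i$. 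Thus if you set $P_t$ to be the distribution over $\{x_i\}$ with weights $p_t(i)$, you get $P_t(h_t)=\sum_i p_t(i)\,h_t(x_i)\neq\sum_i p_t(i)\,b_t^i$, and the MWU regret guarantee does not translate into the inequality $\sum_t\bigl(\hat{P}_S(h_t)-P_t(h_t)\bigr)\le O(\sqrt{T\log N})$ that you need.

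The clean repair is to abandon the ``distribution over $\X$'' view entirely: define $P_t$ directly as the function $P_t(h)=\sum_i p_t(i)\,\tilde g^{h}_t(i)$, where $\tilde g^{h}_t(i)\in\{0,1\}$ is expert $i$'s dual-$\soa$ prediction at round $t$ if the discriminator were to return $h$. Then $P_t(h_t)=\langle p_t, g_t\rangle$ holds by construction, and the rest of your calculation goes through with $q$ the empirical distribution over the trajectories induced by $x_1,\dots,x_n\in S$ (a legitimate after-the-fact comparator for MWU). The resulting $P_t$ need not be the marginal of any distribution over $\X$, but that is acceptable here: a sanitizer (Definition~\ref{def:sanitizer}) and the discriminator of Lemma~\ref{lem:dis_strengthened} only require a function $\HH\to[0,1]$, and the conversion to a synthetic dataset happens afterwards as discussed following Definition~\ref{def:sanitizer}. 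With that change your counting and the application of Lemma~\ref{lem:xgelogx} correctly yield $T=O\bigl((d^\star/\alpha^2)\log(d^\star/\alpha)\bigr)$.
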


Following the proof strategy of~\citep{ghazi2021sample}, which strengthens the proof of~\citep{bousquet2020synthetic} by applying the advanced composition theorem, we are able to show Theorem~\ref{thm:sanitizer}.

\begin{proof}[Proof of Theorem~\ref{thm:sanitizer}]
    We have $\ld(\HH\cup(1 - \HH)) = O(d)$ and $\ld^\star(\HH\cup(1 - \HH)) = d^\star$ (see, e.g.,~\citep{alon2020closure} and~\citep{bousquet2020synthetic}). By Corollary~\ref{cor:proper_agnostic}, for any $\F\subseteq \HH\cup(1-\HH)$ there is an $(\varepsilon'/3,\delta')$-differentially private proper $(\alpha/10, \beta'/3)$-agnostic empirical learner for $\F$ with sample complexity $\tilde{O}(d^6 / \varepsilon'\alpha)$. Then we can use Lemma~\ref{lem:dis_strengthened} to construct an $(\varepsilon',\delta')$-differentially private discriminator, which with probability $1-\beta'$ either outputs ``WIN'' (hence, $\lvert\hat{P}_S(h) - P_t(h) \rvert\le \alpha$ for all $h\in\HH$) or some $h\in\HH\cup(1-\HH)$ such that $\hat{P}_S(h) - P_t(h) \ge \alpha / 2$. Now run the generator in Lemma~\ref{lem:gen}. By setting $\beta' = \beta / T$, we know that with probability $1-\beta$ the generator produces some $P_t$ such that $\lvert\hat{P}_S(h) - P_t(h) \rvert\le \alpha$ for all $h\in\HH$. To ensure the entire process is $(\varepsilon,\delta)$-differentially private, by advanced composition~\citep{dwork2010boosting} it suffices to set
    \begin{equation*}
        \delta' = \delta / 2T\quad\text{and}\quad\varepsilon' = \frac{\varepsilon}{2\sqrt{2T\ln(2/\delta)}}.
    \end{equation*}
    Hence, the overall sample complexity is $\tilde{O}(d^6\sqrt{T}/\varepsilon\alpha) = \tilde{O}(d^6\sqrt{d^\star}/\varepsilon\alpha^2)$
    
\end{proof}
\section{Online Learning via Privately Constructing Experts}
\label{sec:expert}

\subsection{Realizable Sanitization}

We first define the notion of realizable sanitization.

\begin{definition}[Realizable Sanitization]
    We say an algorithm is an $(\alpha,\beta)$-realizable sanitizer for $\HH$ with input size $n$ if it takes as input a dataset $S\in\X^n$ and outputs a function $\est:\HH \to \{0, 1\}$ such that with probability $1-\beta$, for any $h\in\HH$:
    \begin{itemize}
        \item If $\hat{P}_S(h) \ge \alpha$ then $\est(h) = 1$.
        \item If $\hat{P}_S(h) = 0$ then $\est(h) = 0$.
    \end{itemize}
\end{definition}

It turns out that we can again leverage private proper agnostic empirical learners to construct a private realizable sanitizer.

\begin{lemma}
\label{lem:dis}
    Given a private dataset $S\in \X^n$ and a public function $Q_t:\HH\to \{0, 1\}$. Suppose for any $\F\subseteq\HH$ there is an $(\varepsilon / 4,\delta / 2)$-differentially private proper $(\alpha / 9, \beta / 4)$-agnostic empirical learner for $\F$ with sample complexity $n$ and 
    \begin{equation*}
        n \ge \frac{36\ln(4 / \beta)}{\varepsilon\alpha}.
    \end{equation*}
    Then there exists an $(\varepsilon,\delta)$-differentially private algorithm such that with probability $1-\beta$:
    \begin{itemize}
        \item If it outputs $(h, 0)$ for some $h\in\HH$ then
        \begin{equation*}
            \hat{P}_S(h)\le 4\alpha/ 9\quad\text{and}\quad Q_t(h) = 1.
        \end{equation*}
        \item If it outputs $(h, 1)$ for some $h\in\HH$ then
        \begin{equation*}
            \hat{P}_S(h) \ge 5\alpha / 9\quad\text{and}\quad Q_t(h) = 0.
        \end{equation*}
        \item If it outputs ``WIN'' then for all $h\in\HH$:
        \begin{equation*}
            \hat{P}_S(h)\ge \alpha\Rightarrow Q_t(h) = 1\quad\textrm{and}\quad \hat{P}_S(h)=0 \Rightarrow Q_t(h) = 0.
        \end{equation*}
    \end{itemize}
\end{lemma}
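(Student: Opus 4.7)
The plan is to adapt the two-sided discriminator strategy of Lemma~\ref{lem:dis_strengthened} to the realizable regime, where the threshold structure is already binary ($Q_t(h)\in\{0,1\}$) and therefore no exponential-mechanism bucketing is needed. The algorithm simply splits $\HH$ into $\F_1=\{h\in\HH:Q_t(h)=1\}$ and $\F_0=\{h\in\HH:Q_t(h)=0\}$, attempts to produce a witness hypothesis inside each class using the given private agnostic empirical learner, then decides between the three possible outputs after privately estimating the empirical masses of the two candidate witnesses.

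To extract the witnesses, I exploit the standard relabeling trick. Form $S^{(0)}=\{(x,0):x\in S\}$, so that $\err_{S^{(0)}}(h)=\hat{P}_S(h)$ for every $h$; invoking the $(\varepsilon/4,\delta/2)$-DP proper $(\alpha/9,\beta/4)$-agnostic empirical learner for $\F_1$ on $S^{(0)}$ returns some $h_1\in\F_1$ with
\[
\hat{P}_S(h_1)\le \min_{h\in\F_1}\hat{P}_S(h)+\alpha/9.
\]
Symmetrically, form $S^{(1)}=\{(x,1):x\in S\}$, so that $\err_{S^{(1)}}(h)=1-\hat{P}_S(h)$, and run the analogous learner on $\F_0$ to obtain some $h_0\in\F_0$ with
\[
\hat{P}_S(h_0)\ge \max_{h\in\F_0}\hat{P}_S(h)-\alpha/9.
\]
Next, compute noisy estimates $\tilde p_i=\hat{P}_S(h_i)+X_i$ with independent $X_0,X_1\sim\lap(4/\varepsilon n)$, which is $(\varepsilon/4,0)$-DP each since $\hat{P}_S(\cdot)$ has sensitivity $1/n$. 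The decision rule is: output $(h_0,1)$ if $\tilde p_0\ge 2\alpha/3$; else output $(h_1,0)$ if $\tilde p_1\le \alpha/3$; else output ``WIN''. Privacy then follows from basic composition over the two agnostic learners and the two Laplace releases, giving $(\varepsilon,\delta)$ in total.

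For utility, the assumed lower bound on $n$ together with the Laplace tail guarantees $|X_0|,|X_1|\le \alpha/9$ with probability $1-\beta/2$, and a union bound with the two agnostic-learner failure probabilities lets me condition on a good event of probability $1-\beta$. On this event, a decision to emit $(h_0,1)$ gives $\hat{P}_S(h_0)\ge 2\alpha/3-\alpha/9=5\alpha/9$ and a decision to emit $(h_1,0)$ gives $\hat{P}_S(h_1)\le \alpha/3+\alpha/9=4\alpha/9$, matching the first two output specifications (membership of $h_0$ in $\F_0$ and $h_1$ in $\F_1$ is by construction). In the ``WIN'' branch, $\tilde p_0<2\alpha/3$ yields $\max_{h\in\F_0}\hat{P}_S(h)\le \hat{P}_S(h_0)+\alpha/9<8\alpha/9<\alpha$, so every $h$ with $\hat{P}_S(h)\ge\alpha$ lies in $\F_1$ and satisfies $Q_t(h)=1$; and $\tilde p_1>\alpha/3$ yields $\min_{h\in\F_1}\hat{P}_S(h)\ge \hat{P}_S(h_1)-\alpha/9>\alpha/9>0$, so every $h$ with $\hat{P}_S(h)=0$ lies in $\F_0$ and satisfies $Q_t(h)=0$.

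The only real subtlety is calibrating the three $\alpha/9$ slacks (agnostic-learner error, Laplace noise, and the margin separating the emit/WIN thresholds) so that the witness guarantees $5\alpha/9$ and $4\alpha/9$ coexist with strict inequalities $<\alpha$ and $>0$ in the WIN branch; the choices above are essentially forced. Everything else is routine composition and a union bound.
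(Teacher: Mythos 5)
Your proposal is correct and follows essentially the same approach as the paper: split $\HH$ by the value of $Q_t$, run the private proper agnostic empirical learner on the all-$0$ and all-$1$ relabelings of $S$ to extract a near-minimizer in $\F_1$ and a near-maximizer in $\F_0$, then threshold Laplace-perturbed empirical masses to choose among $(h_0,1)$, $(h_1,0)$, and ``WIN,'' with basic composition giving $(\varepsilon,\delta)$-DP and a union bound over the two learner failures and the two Laplace tails giving success probability $1-\beta$. The only cosmetic difference is that the paper runs the two stages sequentially and exits early after emitting $(h_0,1)$, while you compute both noisy estimates before deciding; the privacy and utility accounting are identical.
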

\begin{proof}
    Let $\HH_0 = \{h\in\HH:Q_t(h) = 0\}$ and $S_0$ be the dataset obtained by labeling all data in $S$ with $1$. We run an $(\varepsilon / 4, \delta / 2)$-differentially private proper $(\alpha / 9, \beta / 4)$-agnostic empirical learner for $\HH_0$ on $S_0$ and obtain $h_0\in\HH$. With probability $1 - \beta / 4$ we have
    \begin{equation*}
        \err_{S_0}(h_0) \le \min_{h\in \HH_0} \err_{S_0}(h) + \alpha / 9.
    \end{equation*}
    This is equivalent to $\hat{P}_S(h_0) \ge \max_{h \in \HH_0} \hat{P}_S(h)- \alpha / 9$. We output $(h_0, 1)$ and exit if $\hat{P}_S(h_0) + X_0 \ge 2\alpha / 3$, where $X_0\sim\lap(4 / \varepsilon n)$. Note that this step is $(\varepsilon / 4, 0)$-differentially private, and we have
    \begin{equation*}
        \Pr[\lvert X_0\rvert \le \alpha / 9] \ge \Pr[\lvert X_0\rvert \le 4\ln(4 / \beta) / \varepsilon n] \ge 1 - \beta / 4.
    \end{equation*}

    If we do not exit, then similarly let $\HH_1 = \{h\in\HH:Q_t(h) = 1\}$ and $S_1$ be the dataset obtained by labeling all data in $S$ with $0$. 
    We run an $(\varepsilon / 4, \delta / 2)$-differentially private proper $(\alpha / 9, \beta / 4)$-agnostic empirical learner for $\HH_1$ on $S_1$ and obtain $h_1\in\HH$. With probability $1 - \beta / 4$ we have
    \begin{equation*}
        \err_{S_1}(h_1) \le \min_{h\in \HH_1} \err_{S_1}(h) + \alpha / 9.
    \end{equation*}
    This is equivalent to $\hat{P}_S(h_1) \le \min_{h \in \HH_1} \hat{P}_S(h) + \alpha / 9$. We output $(h_1, 0)$ if $\hat{P}_S(h_1) + X_1 \le \alpha / 3$, where $X_1\sim \lap(4 / \varepsilon n)$. Otherwise we output ``WIN''. Also, we have $\Pr[\lvert X_1\rvert\le \alpha/ 9 ] \ge 1- \beta / 4$.

    The privacy guarantee directly follows from basic composition. By the union bound, with probability $1-\beta$ we have
    \begin{equation*}
        \hat{P}_S(h_0) \ge 2\alpha /3  - X_0 \ge 2\alpha / 3 - \alpha / 9 = 5\alpha / 9
    \end{equation*}
    if we output $(h_0, 1)$. If we instead output $(h_1, 0)$, then
    \begin{align*}
        \hat{P}_S(h_1) \le \alpha / 3 - X_1 \le \alpha / 3 + \alpha / 9 = 4\alpha / 9.
    \end{align*}
    Otherwise if we output ``WIN'', we have
    \begin{align*}
        \hat{P}_S(h) &\le \hat{P}_S(h_0) + \alpha / 9\\
        &< 2\alpha / 3-  X_0 + \alpha / 9\\
        &\le 2\alpha / 3 + \alpha / 9 + \alpha / 9 \\
        &< \alpha
    \end{align*}
    for all $h\in\HH_0$ and
    \begin{align*}
        \hat{P}_S(h) &\ge \hat{P}_S(h_1) - \alpha / 9\\
        &> \alpha / 3 -  X_1 - \alpha / 9\\
        &\ge \alpha / 3 - \alpha / 9 - \alpha / 9 \\
        &> 0
    \end{align*}
    for all $h\in \HH_1$ as desired. 
\end{proof}

Given a concept class $\HH$ over $\X$, we define a hypothesis class
\begin{equation*}
    \X_{m,\alpha/2} = \{(x_1,\dots, x_m)\in \X^m\}
\end{equation*}
over $\HH$, where every predicate $(x_1,\dots, x_m)\in\X_{m,\alpha/2}$ is defined as
\begin{equation*}
    (x_1,\dots, x_m)(h) = \I\left[\frac{1}{m}\sum_{i=1}^m h(x_i) \ge \frac{\alpha}{2}\right].
\end{equation*}
The right-hand side can be seen as a boolean function of $h(x_1),\dots,h(x_m)$. Then Lemma~\ref{lem:bool} provides an upper bound on the Littlestone dimension of $X_{m,\alpha / 2}$.

\begin{claim}
\label{cla:lit_x}
    Let $\HH$ be a concept class over $\X$ with dual Littlestone dimension $d^\star$. Then the Littlestone dimension of $\X_{m, \alpha/2}$ is at most $O(md^\star\log m)$.
\end{claim}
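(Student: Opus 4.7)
The plan is to recognize the claim as an immediate application of the boolean-aggregation closure bound in Lemma~\ref{lem:bool}. First I would unpack the definition: each hypothesis $(x_1,\dots,x_m)\in\X_{m,\alpha/2}$, viewed as a predicate on $\HH$, is the composition of a single fixed boolean threshold function $G:\{0,1\}^m\to\{0,1\}$ defined by $G(b_1,\dots,b_m)=\I[\sum_i b_i \ge \alpha m/2]$, with the $m$ evaluations $h\mapsto x_i(h)=h(x_i)$. That is, $(x_1,\dots,x_m)(h) = G(x_1(h),\dots,x_m(h))$.

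Next I would identify the role of the dual class. Viewing $\X$ as a concept class over the domain $\HH$ (as already introduced in the preliminaries when defining $\ld^\star$), each coordinate function $h\mapsto x_i(h)$ is just a member of $\X$ acting on $\HH$, and by definition $\ld(\X)=\ld^\star(\HH)=d^\star$. Therefore
\begin{equation*}
    \X_{m,\alpha/2} \subseteq G(\underbrace{\X,\dots,\X}_{m \text{ times}}),
\end{equation*}
where the right-hand side is the boolean-aggregation class from Lemma~\ref{lem:bool} applied with $k=m$ copies of the single class $\X$ (regarded as concepts over $\HH$).

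Finally, Lemma~\ref{lem:bool} gives $\ld(G(\HH_1,\dots,\HH_k)) = O(kd\log k)$ with $d=\max_i\ld(\HH_i)$. Substituting $k=m$ and $d = \ld(\X)=d^\star$ yields $\ld(\X_{m,\alpha/2}) \le \ld(G(\X,\dots,\X)) = O(md^\star\log m)$, which is the claimed bound. Since no part of this requires anything beyond the definitional unpacking plus a black-box invocation of the closure lemma, there is no real obstacle; the only subtlety worth flagging is being careful about the direction of duality (concepts vs.\ domain) when applying the lemma, so that $d^\star$ rather than $d$ appears, and noting that $G$ does not depend on the hypothesis chosen from $\X_{m,\alpha/2}$, so a single boolean function suffices.
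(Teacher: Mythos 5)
Your proposal is correct and takes exactly the same route the paper does (the paper's justification is just a terse two-sentence remark noting that the indicator is a boolean function of $h(x_1),\dots,h(x_m)$ and invoking Lemma~\ref{lem:bool}). Your unpacking---writing down the threshold function $G$ explicitly, observing the coordinate maps $h\mapsto h(x_i)$ live in the dual class $\X$ over domain $\HH$ with $\ld(\X)=d^\star$, and then applying the closure bound with $k=m$---is precisely the argument the paper leaves implicit.
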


It can be shown that for any dataset $S$, the class $\X_{m,\alpha/2}$ contains a good realizable sanitization of $S$ as long as $m$ is sufficiently large.
\begin{lemma}
\label{lem:exist}
    Let $\HH$ be a concept class over $\X$ with VC dimension $d_V$. Set $m = Cd_V\ln(1/\alpha) / \alpha$, where $C$ is some universal constant. For any dataset $S$ over $\X$, there exists $(x_1,\dots,x_m)\in\X^m$ such that for all $h\in\HH$
    \begin{itemize}
        \item If $\hat{P}_S(h) \ge 5\alpha/9$ then $\frac{1}{m}\sum_{i=1}^mh(x_i) \ge \alpha / 2$.
        \item If $\hat{P}_S(h) \le 4\alpha/9$ then $\frac{1}{m}\sum_{i=1}^mh(x_i) < \alpha / 2$.
    \end{itemize}
\end{lemma}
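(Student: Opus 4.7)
The plan is to use the probabilistic method: draw $x_1,\ldots,x_m$ i.i.d.\ from the empirical distribution $\hat P_S$ and show that with positive probability the empirical average $\hat q(h) = \frac{1}{m}\sum_{i=1}^m h(x_i)$ uniformly two-sidedly approximates $\hat P_S(h) = \mathbb{E}[\hat q(h)]$ well enough to separate the regime $\hat P_S(h) \ge 5\alpha/9$ from $\hat P_S(h) \le 4\alpha/9$ by the threshold $\alpha/2$.

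The naive route via additive uniform convergence (e.g., the agnostic generalization bound) would cost $\tilde O(d_V/\alpha^2)$ samples, which is off by a factor of $1/\alpha$. To hit the target $m = C d_V\ln(1/\alpha)/\alpha$, I would instead apply the relative uniform convergence result (Lemma~\ref{lem:relative}) in both directions, viewing $\hat P_S$ as a distribution over $\X\times\{0,1\}$ that always assigns label $0$, so that $\err_{\hat P_S}(h) = \hat P_S(h)$ and $\err_{\text{sample}}(h) = \hat q(h)$. Concretely, pick small constants $\lambda = 1/10$ and $\mu = \alpha/180$, and consider the two good events
\[
\hat P_S(h) \le (1+\lambda)\hat q(h) + \mu \quad\text{and}\quad \hat q(h) \le (1+\lambda)\hat P_S(h) + \mu \quad\text{for all }h\in\HH.
\]
A short algebraic check shows these imply the conclusion: if $\hat P_S(h)\ge 5\alpha/9$ then $\hat q(h) \ge (5\alpha/9 - \alpha/180)/(11/10) > \alpha/2$, and if $\hat P_S(h)\le 4\alpha/9$ then $\hat q(h) \le (11/10)(4\alpha/9) + \alpha/180 < \alpha/2$.

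Lemma~\ref{lem:relative} bounds the failure probability of each event by $4(2em/d_V)^{d_V}\exp(-\lambda\mu m/(4(\lambda+1))) = 4(2em/d_V)^{d_V}\exp(-\Omega(\alpha m))$. Taking a union bound over the two directions and requiring this to be strictly less than $1$ gives the inequality $\ln 8 + d_V\ln(2em/d_V) \le \Omega(\alpha m)$. Using Lemma~\ref{lem:xgelogx} (or the standard $x\ge a\ln x + b \Leftarrow x = \Theta(a\log a + b)$ trick), this is satisfied by any $m \ge C d_V \ln(1/\alpha)/\alpha$ for a sufficiently large universal constant $C$, since $\ln(m/d_V) = \ln(C\ln(1/\alpha)/\alpha) = O(\ln(1/\alpha))$ for small $\alpha$. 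Thus there exists a realization of $(x_1,\ldots,x_m)$ for which both events occur simultaneously, yielding the required tuple.

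There is no serious obstacle here: the entire content is the choice to use \emph{relative} rather than \emph{additive} uniform convergence, which is exactly the trick already exploited elsewhere in the paper (see the discussion preceding Theorem~\ref{thm:proper_PAC}). The remaining constants in $\lambda$ and $\mu$ are chosen to leave a little slack on both sides of the threshold $\alpha/2$ so that the $(1+\lambda)$ multiplicative error and the additive slack $\mu = \Theta(\alpha)$ do not push $\hat q(h)$ across the wrong side of $\alpha/2$.
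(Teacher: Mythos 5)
Your proof is correct and takes essentially the same approach as the paper's, which is the probabilistic method: draw $x_1,\dots,x_m$ i.i.d.\ from $\hat P_S$ and invoke a uniform convergence result to get the required separation with positive probability. The paper's own proof is a one-liner citing ``Lemma~\ref{lem:realizable_generalization} (or Lemma~\ref{lem:relative})''; you fill in the algebra that the paper leaves implicit, correctly identify that it is the \emph{relative} uniform convergence bound (Lemma~\ref{lem:relative}, with the zero-labelled reduction so that $\err_P(h)=\hat P_S(h)$) that delivers the $\tilde O(d_V/\alpha)$ sample complexity, and verify that $\lambda=1/10$, $\mu=\alpha/180$ leave just enough slack to separate $5\alpha/9$ and $4\alpha/9$ across the threshold $\alpha/2$. (One very minor point: your first computation gives $\hat q(h)\ge\alpha/2$ with equality rather than strict inequality, but the lemma only asks for $\ge\alpha/2$, so this is fine.) Note also that the paper's alternative citation of Lemma~\ref{lem:realizable_generalization} is actually a bit loose as stated there --- its factor-of-$2$ loss ($\dis_P\le\alpha'\Rightarrow\dis_S\le 2\alpha'$) is too coarse to separate $4\alpha/9$ from $5\alpha/9$ by $\alpha/2$ without a sharper constant --- so your deliberate choice of Lemma~\ref{lem:relative} is the cleaner route.
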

\begin{proof}
    Let $x_1,\dots, x_m$ be i.i.d. drawn from $\hat{P}_S$. By Lemma~\ref{lem:realizable_generalization} (or Lemma~\ref{lem:relative}), the desired property holds with probability $1/2$. Hence there exists a realization with the property.
\end{proof}

Given the above result, one can obtain a realizable sanitization by using any online learner (e.g., the $\soa$) to interact with the discriminator from Lemma~\ref{lem:dis}. This leads to the following theorem.

\begin{theorem}
\label{thm:realizable_sanitizer}
    Let $\HH$ be a concept class over $\X$ with Littlestone dimension $d$, dual Littlestone $d^\star$, and VC dimension $d_V$. Then there exists an $(\varepsilon, \delta)$-differentially private $(\alpha,\beta)$-realizable sanitizer for $\HH$ with sample complexity
    \begin{equation*}
        \tilde{O}\left(\frac{d^6\sqrt{d^\star d_V}}{\varepsilon\alpha^{1.5}}\right).
    \end{equation*}
\end{theorem}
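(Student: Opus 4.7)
The plan is to adapt the generator/discriminator (sequential fooling) template from~\citep{bousquet2020synthetic}, but tailored to the realizable sanitization task. The discriminator is already built for us in Lemma~\ref{lem:dis}: given a candidate $Q_t:\HH\to\{0,1\}$, it privately either declares ``WIN'' (in which case $Q_t$ is a valid $(\alpha,\beta)$-realizable sanitization of $S$) or returns a concrete counterexample $(h,b)\in\HH\times\{0,1\}$ with $b\ne Q_t(h)$. What remains is to design a generator that drives the discriminator to ``WIN'' in only $\tilde O_d(1/\alpha)$ rounds, as opposed to $\tilde O_{d^\star}(1/\alpha^2)$ rounds needed in Lemma~\ref{lem:gen} for ordinary sanitization.

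The key trick is to have the generator propose only hypotheses from the auxiliary class $\X_{m,\alpha/2}$ with $m=\Theta(d_V\log(1/\alpha)/\alpha)$. By Lemma~\ref{lem:exist}, there always exists some $Q^\star\in\X_{m,\alpha/2}$ that is \emph{consistent} with any counterexample the discriminator could return: if the discriminator outputs $(h,0)$ then $\hat P_S(h)\le 4\alpha/9$, which forces $Q^\star(h)=0$, and symmetrically for $(h,1)$. Thus the stream of counterexamples produced during the interaction is realizable by $Q^\star\in\X_{m,\alpha/2}$. The generator will therefore be the $\soa$ run on the concept class $\X_{m,\alpha/2}$: at round $t$ it outputs the current hypothesis $Q_t$; we invoke the discriminator from Lemma~\ref{lem:dis} with $Q_t$; on ``WIN'' we halt and return $Q_t$; otherwise we feed the counterexample $(h_t,b_t)$ back to the $\soa$. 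Each counterexample is a genuine $\soa$-mistake on a realizable sequence, so by Claim~\ref{cla:lit_x} the number of rounds before ``WIN'' is at most $\ld(\X_{m,\alpha/2})+1=O(md^\star\log m)=\tilde O(d_V d^\star/\alpha)$.

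For privacy, the only access to $S$ is through the $T=\tilde O(d_V d^\star/\alpha)$ invocations of the discriminator, so advanced composition~\citep{dwork2010boosting} lets us run each invocation at privacy level $\varepsilon'=\Theta(\varepsilon/\sqrt{T\log(1/\delta)})$ and $\delta'=\Theta(\delta/T)$ to obtain overall $(\varepsilon,\delta)$-DP. Inside each invocation, Lemma~\ref{lem:dis} is instantiated with the private proper agnostic empirical learner from Corollary~\ref{cor:proper_agnostic}, whose sample complexity on any subclass of $\HH$ is $\tilde O(d^6/\varepsilon'\alpha)$. Plugging in $\varepsilon'$ yields per-round (hence total) sample complexity
\[
\tilde O\!\left(\frac{d^6\sqrt{T\log(1/\delta)}}{\varepsilon\alpha}\right)=\tilde O\!\left(\frac{d^6\sqrt{d_V d^\star/\alpha}}{\varepsilon\alpha}\right)=\tilde O\!\left(\frac{d^6\sqrt{d_V d^\star}}{\varepsilon\alpha^{1.5}}\right),
\]
as claimed. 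A union bound over the $O(T)$ discriminator calls (setting each failure probability to $\beta/T$) controls the utility.

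The main obstacle is the bookkeeping that ensures the $\soa$-counterexample sequence is genuinely realizable by some fixed $Q^\star\in\X_{m,\alpha/2}$, since the generator, the sampled $x_i$'s that would witness $Q^\star$, and the adaptively produced counterexamples are intertwined. This is resolved by noting that the guarantees of Lemmas~\ref{lem:dis} and~\ref{lem:exist} are \emph{pointwise in $h$}: as long as every returned counterexample respects the $5\alpha/9$ versus $4\alpha/9$ thresholds, \emph{any} $Q^\star$ provided by Lemma~\ref{lem:exist} labels all those points correctly, so the realizability argument for $\soa$ goes through without having to commit to $Q^\star$ in advance. The other quantitative point to verify is that the sample size in Lemma~\ref{lem:dis} (the lower bound $n\gtrsim \ln(1/\alpha\beta)/\varepsilon\alpha$) is dominated by the $\tilde O(d^6/\varepsilon'\alpha)$ requirement of Corollary~\ref{cor:proper_agnostic} after the advanced-composition rescaling, which is immediate.
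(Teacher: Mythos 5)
Your proposal is correct and follows essentially the same route as the paper's proof: running the $\soa$ on $\X_{m,\alpha/2}$ as the generator against the discriminator of Lemma~\ref{lem:dis}, using Lemma~\ref{lem:exist} to show the counterexample sequence is realizable so that at most $\ld(\X_{m,\alpha/2})=\tilde O(d_V d^\star/\alpha)$ rounds occur, then applying advanced composition and plugging in Corollary~\ref{cor:proper_agnostic}. The only cosmetic difference is that you argue the round bound directly while the paper phrases it as a proof by contradiction; the quantitative bookkeeping, including the $\sqrt{T}=\tilde O(\sqrt{d_V d^\star/\alpha})$ factor that yields the $\alpha^{-1.5}$ dependence, matches the paper exactly.
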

\begin{proof}
    Set $m$ as in Lemma~\ref{lem:exist}. Let $D$ be the Littlestone dimension of $\X_{m,\alpha/2}$ and $T = D + 1$. By Claim~\ref{cla:lit_x}, we have $T = O(m d^\star \log m)$. Run a sequential game using the $\soa$ for $\X_{m,\alpha/2}$ as the generator and the algorithm from Lemma~\ref{lem:dis} with privacy parameter $(\varepsilon',\delta')$ and success probability $1 - \beta / T$ as the discriminator. By the union bound, with probability $1 - \beta$ the discriminator succeeds for all rounds. Condition on this event, if the discriminator outputs ``WIN'' at some round $t$ then we can simply output $\est = Q_t$ and exit.

    Thus, it suffices to prove that the discriminator always outputs ``WIN'' at some round under the above event. Suppose, for the sake of contradiction, that it produces a sequence $((h_1,y_1),\dots, (h_T, y_T))$. Then we have $Q_t(h_t) \neq y_t$ for all $t\in[T]$. By Lemma~\ref{lem:exist}, there exists some $Q = (x_1,\dots, x_m)\in\X_{m,\alpha/2}$ such that for all $h\in\HH$:
    \begin{itemize}
        \item If $\hat{P}_S(h) \ge5\alpha/9$ then $Q(h) = 1$.
        \item If $\hat{P}_S(h) \le 4\alpha / 9$ then $Q(h) = 0$.
    \end{itemize}
    Consequently, we have $Q(h_t) = y_t$ for all $t\in[T]$. This means the entire sequence is realizable by $\X_{m, \alpha / 2}$. However, the $\soa$ makes $T = D + 1$ mistakes, a contradiction.

    In order to ensure the entire process is $(\varepsilon, \delta)$-differentially private, by advanced composition~\citep{dwork2010boosting}, it suffices to set
    \begin{equation*}
        \delta' = \delta / 2T\quad\text{and}\quad\varepsilon' = \frac{\varepsilon}{2\sqrt{2T\ln(2/\delta)}}.
    \end{equation*}

    We now analyze the sample complexity. It depends on the sample complexity of the learner used in Lemma~\ref{lem:dis}. Employing the one in Corollary~\ref{cor:proper_agnostic} results in a sample complexity of
    \begin{equation*}
        \tilde{O}\left(\frac{d^6}{\varepsilon'\alpha}\right) = \tilde{O}\left(\frac{d^6\sqrt{T}}{\varepsilon\alpha}\right) = \tilde{O}\left(\frac{d^6\sqrt{d^\star d_V}}{\varepsilon\alpha^{1.5}}\right)
    \end{equation*}
\end{proof}

\subsection{Constructing Experts}

We first extends our realizable sanitizer to the sequential setting by the binary mechanism~\citep{dwork2010differential,chan2011private}, which is based on the following fact.

\begin{fact}
\label{fact:binary}
    Let $T > 1$ be the time horizon. There exists a set $I\subseteq\{(l, r):l,r\in[T]~\text{ and }l\le r\}$ and a universal constant $C$ such that:
    \begin{itemize}
        \item $\lvert I\rvert \le C T$.
        \item For every $t\in[T]$, we have $\lvert\{(l,r)\in I:l\le t\le r\}\rvert \le C \ln T$.
        \item For any $L,R\in[T]$ and $L\le R$, there exists $L = t_1 < \cdots < t_u < t_{u+1}=R + 1$ for some $u\le C\ln T$ such that $(t_i,t_{i+1} - 1)\in I$ for every $i\in [u]$.
    \end{itemize}
\end{fact}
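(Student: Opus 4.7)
The natural candidate for $I$ is the set of aligned dyadic intervals contained in $[1,T]$, i.e., the node labels of a segment tree over $[1,T]$. This is the standard combinatorial structure underlying the binary mechanism of~\citep{dwork2010differential,chan2011private}, and the three bullets are exactly its classical counting, covering, and decomposition properties. Concretely, I would take
\[
I = \left\{\bigl[(j-1)2^i + 1,\ j\cdot 2^i\bigr] : 0\le i\le \lfloor\log_2 T\rfloor,\ 1\le j\le \lfloor T/2^i\rfloor\right\},
\]
so every member of $I$ is an aligned dyadic interval that lies inside $[1,T]$.

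The first two bullets reduce to standard counting on a binary tree. At level $i$ there are at most $T/2^i$ aligned intervals of length $2^i$, so $|I| \le \sum_{i\ge 0} T/2^i \le 2T$, giving the first bullet. For a fixed $t\in[T]$, at each level $i$ there is at most one aligned dyadic interval of length $2^i$ containing $t$ (namely the one with $j=\lceil t/2^i\rceil$), so $t$ lies in at most $\lfloor\log_2 T\rfloor + 1 = O(\log T)$ members of $I$.

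For the decomposition bullet, I would invoke the classical segment-tree range decomposition of $[L,R]$: starting at $t_1 = L$, repeatedly advance $t_i$ by the largest aligned dyadic block $[t_i, t_i + 2^{p_i} - 1]$ such that $2^{p_i}$ divides $t_i - 1$ and $t_i + 2^{p_i} - 1 \le R$, and set $t_{i+1} = t_i + 2^{p_i}$. Reading $L-1$ and $R$ in binary, this is the standard ``climb from $L$ to the lowest common ancestor with $R+1$, then descend to $R+1$'' traversal, which terminates in at most $2\lfloor\log_2 T\rfloor + 2 = O(\log T)$ steps; every interval it selects is aligned and lies inside $[L,R]\subseteq[1,T]$, hence belongs to $I$.

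\textbf{Main obstacle.} I do not anticipate a substantive obstacle beyond bookkeeping, since the fact is a clean combinatorial statement about a well-known structure. The only mild care required is in handling $T$ not being a power of two: one must ensure every interval in $I$ has right endpoint at most $T$ (enforced by the cap $j \le \lfloor T/2^i\rfloor$) and verify that the greedy decomposition never overshoots $T$, which is automatic because every chosen interval is a subset of $[L,R]\subseteq[1,T]$.
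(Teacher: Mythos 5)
The paper states this as a known fact without proof, deferring to the binary-mechanism literature~\citep{dwork2010differential,chan2011private}; your proposal supplies the standard proof via the segment tree of aligned dyadic intervals, which is exactly the structure those works use. Your verification of the three bullets is correct: the geometric-series count gives $|I|\le 2T$; the one-interval-per-level observation gives the $O(\log T)$ stabbing bound; and the greedy largest-aligned-block decomposition has the usual two-phase behavior (block exponents strictly increase while the full block at $t_i$ fits, then strictly decrease once it no longer does), giving at most $2\lfloor\log_2 T\rfloor+2$ pieces. You also correctly handle the only delicate point, namely that when $T$ is not a power of two the cap $j\le\lfloor T/2^i\rfloor$ keeps every member of $I$ inside $[1,T]$, and that every block $[t_i,t_i+2^{p_i}-1]\subseteq[L,R]\subseteq[1,T]$ chosen by the greedy satisfies this cap (since its right endpoint $j\cdot2^{p_i}\le T$ forces $j\le\lfloor T/2^{p_i}\rfloor$). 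This is as intended; no gap.
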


We describe the sequential realizable sanitizer in Algorithm~\ref{alg:binary}. Note that it actually sanitizes a larger hypothesis class $\HH_{m, 1/2}\oplus \HH$, where
\begin{equation*}
    \HH_{m,1/2} = \{(h_1,\dots, h_m)\in \HH^m\}
\end{equation*}
is a hypothesis class over $\X$ and the predicate $(h_1,\dots, h_m)$ is defined as
\begin{equation*}
    (h_1,\dots, h_m)(x) = \I\left[\frac{1}{m}\sum_{i=1}^m h_i(x) \ge \frac{1}{2}\right].
\end{equation*}
Since the right-hand side of the above is a boolean function and the operation $\oplus$ is also a boolean function, the following claim then follows from Lemma~\ref{lem:bool} and~\ref{lem:bool_dual}.
\begin{claim}
\label{cla:dim_h}
    Let $\HH$ be a concept class with Littlestone dimension $d$, dual Littlestone dimension $d^\star$, and VC dimension $d_V$. Then $\HH_{m, 1/2}\oplus \HH$ has Littlestone dimension $O(md\log m)$, dual Littlestone dimension $O(md^\star \log m)$, and VC dimension $O(md_V\log m)$.
\end{claim}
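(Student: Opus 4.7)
\textbf{Proof plan for Claim~\ref{cla:dim_h}.} The plan is to view $\HH_{m,1/2}\oplus \HH$ as the result of two successive applications of boolean aggregation to $\HH$, and then invoke the closure bounds of Lemma~\ref{lem:bool} and Lemma~\ref{lem:bool_dual} twice. Concretely, define the boolean function $G_{\mathrm{Maj}}:\{0,1\}^m\to\{0,1\}$ by $G_{\mathrm{Maj}}(b_1,\dots,b_m) = \I[\tfrac{1}{m}\sum_i b_i \ge \tfrac{1}{2}]$, and observe that $\HH_{m,1/2} = G_{\mathrm{Maj}}(\HH,\dots,\HH)$ with $k=m$ copies of $\HH$. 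Similarly, define $G_{\oplus}:\{0,1\}^2\to\{0,1\}$ by $G_{\oplus}(b_1,b_2)=\I[b_1\neq b_2]$, so that $\HH_{m,1/2}\oplus\HH = G_{\oplus}(\HH_{m,1/2},\HH)$.

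First, I would apply Lemma~\ref{lem:bool} to $\HH_{m,1/2} = G_{\mathrm{Maj}}(\HH,\dots,\HH)$ with $k=m$, which yields
\[
\ld(\HH_{m,1/2}) = O(md\log m), \qquad \vc(\HH_{m,1/2}) = O(md_V\log m).
\]
Analogously, Lemma~\ref{lem:bool_dual} gives $\ld^\star(\HH_{m,1/2}) = O(md^\star\log m)$ and $\vc^\star(\HH_{m,1/2}) = O(md_V^\star\log m)$; although the statement of the claim does not mention the dual VC dimension explicitly, the dual Littlestone bound is what we actually need, and the argument is identical.

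Next, I would apply the same two lemmas to $G_{\oplus}(\HH_{m,1/2},\HH)$ with $k=2$. Since $\log 2$ is a constant, the lemma only contributes a multiplicative constant on top of the maximum of the dimensions of the two input classes. Taking the maxima of the dimensions just computed for $\HH_{m,1/2}$ against those of $\HH$ itself, the $\HH_{m,1/2}$ side dominates (it already carries the $m\log m$ factor), so
\[
\ld(\HH_{m,1/2}\oplus\HH) = O(md\log m),\ \ \ld^\star(\HH_{m,1/2}\oplus\HH) = O(md^\star\log m),\ \ \vc(\HH_{m,1/2}\oplus\HH) = O(md_V\log m),
\]
as claimed.

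There is no real obstacle: the entire argument is a bookkeeping exercise that boils down to checking that each predicate in $\HH_{m,1/2}\oplus\HH$ can be written as a boolean combination of a bounded number of predicates in $\HH$, and then reading off the bounds from Lemmas~\ref{lem:bool} and~\ref{lem:bool_dual}. The only thing to be slightly careful about is that the two applications compose correctly (the second application's ``base classes'' are $\HH_{m,1/2}$ and $\HH$, not $m+1$ copies of $\HH$), but since the lemmas are stated for arbitrary input classes and take the maximum of their individual dimensions, this causes no issue.
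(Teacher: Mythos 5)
Your proof is correct and takes essentially the same approach as the paper, which leaves the argument implicit by simply noting that both the majority predicate and $\oplus$ are boolean, so Lemmas~\ref{lem:bool} and~\ref{lem:bool_dual} apply. Your two-step decomposition (first bound the three dimensions of $\HH_{m,1/2}$ via $G_{\mathrm{Maj}}$ with $k=m$, then apply $G_\oplus$ with $k=2$) is equivalent to the one-step application of the lemmas to the single boolean function $G:\{0,1\}^{m+1}\to\{0,1\}$, $G(b_1,\dots,b_{m+1})=\I[\tfrac{1}{m}\sum_{i\le m}b_i\ge\tfrac12]\oplus b_{m+1}$, acting on $m+1$ copies of $\HH$; both routes give the same asymptotics.
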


In Algorithm~\ref{alg:binary}, the algorithm $\B$ is indeed a series of sanitizers with varying error parameters $\alpha(n)$ for different input sizes $n$ because we have to sanitize sequences with different lengths. Furthermore, we assume for simplicity that an $(\alpha(n), \beta)$-realizable sanitizer directly outputs a synthetic sequence of length $n$ rather than an estimation. This is done by first computing $\est$, then finding a dataset $S'$ of size $n$ such that $\hat{P}_{S'}(f) > 0$ for all $f\in\HH_{m,1/2}\oplus \HH$ with $\est(f) = 1$. Note that with probability $1-\beta$, the private dataset $S$ satisfies this property and hence such $S'$ exists. The lemma below summarizes the privacy and utility properties of Algorithm~\ref{alg:binary}.

\begin{lemma}
\label{lem:sanitizer_interval}
    Let $\B$ be an $(\varepsilon, \delta)$-differentially private $(\alpha(n), \beta)$-sanitizer for $\HH_{m,1/2}\oplus \HH$ with input size $n$ and $C$ be the constant in Fact~\ref{fact:binary}. Then Algorithm~\ref{alg:binary} is $(C\ln T\cdot\varepsilon, C\ln T\cdot\delta)$-differentially private. Moreover, let $\Delta = \max_{n\in[T]}n\alpha(n)$. Then with probability $1 - C T \beta$, for all $1\le l\le r\le T$ we have
    \begin{equation*}
        (r - l + 1)\hat{P}_{S_{l, r}}(f)\ge C\ln T\Delta \Rightarrow \hat{P}_{S_{l, r}'}(f) > 0
    \end{equation*}
    for all $f\in \HH_{m,1/2}\oplus \HH$, where $S_{l, r} = (x_l, \dots, x_r)$.
\end{lemma}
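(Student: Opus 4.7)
The plan is to prove the privacy and utility guarantees separately, both following the standard binary-mechanism bookkeeping that the structure of Fact~\ref{fact:binary} was engineered to support. Algorithm~\ref{alg:binary} presumably runs one independent instance of $\B$ per interval in $I$, and the assembled output $S'_{l,r}$ for an arbitrary query interval $[l,r]$ is the concatenation of the synthetic sub-datasets produced by the instances whose intervals tile $[l,r]$ according to the decomposition in the third bullet of Fact~\ref{fact:binary}.

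For the privacy bound, I would fix two neighboring input sequences differing only at some index $t$ and note that every instance of $\B$ whose interval excludes $t$ sees identical inputs and therefore identical output distributions. By the second property of Fact~\ref{fact:binary}, at most $C\ln T$ instances have $t$ in their interval, and each of those is $(\varepsilon,\delta)$-differentially private. Basic composition across those $\le C\ln T$ affected instances immediately yields the claimed $(C\ln T\cdot\varepsilon,\,C\ln T\cdot\delta)$-DP guarantee.

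For the utility bound, I would first apply a union bound over the at most $|I|\le CT$ instances: with probability at least $1-CT\beta$, every instance simultaneously satisfies the realizable-sanitizer property of $\B$, and I condition on this event. Fix $1\le l\le r\le T$ and $f\in\HH_{m,1/2}\oplus\HH$ with $(r-l+1)\hat{P}_{S_{l,r}}(f)\ge C\ln T\cdot\Delta$. Using the third property of Fact~\ref{fact:binary}, decompose $[l,r]$ as $[t_1,t_2-1],\ldots,[t_u,t_{u+1}-1]$ with $u\le C\ln T$ and $(t_i,t_{i+1}-1)\in I$, and set $n_i=t_{i+1}-t_i$. Then
\[
\sum_{i=1}^{u} n_i\,\hat{P}_{S_{t_i,t_{i+1}-1}}(f) \;=\; (r-l+1)\hat{P}_{S_{l,r}}(f) \;\ge\; u\Delta,
\]
so pigeonhole produces some $i^\star$ with $n_{i^\star}\hat{P}_{S_{t_{i^\star},t_{i^\star+1}-1}}(f)\ge \Delta\ge n_{i^\star}\alpha(n_{i^\star})$, i.e.\ $\hat{P}_{S_{t_{i^\star},t_{i^\star+1}-1}}(f)\ge\alpha(n_{i^\star})$. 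The realizable-sanitizer guarantee on this sub-interval then forces the synthetic sub-dataset $S'_{t_{i^\star},t_{i^\star+1}-1}$ to contain at least one witness for $f$, and since $S'_{l,r}$ is the concatenation of the $u$ synthetic sub-datasets, $\hat{P}_{S'_{l,r}}(f)>0$.

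I do not anticipate a substantive obstacle: both parts are routine binary-mechanism accounting. The only delicate point is that the pigeonhole step in the utility analysis requires $\Delta\ge n_i\alpha(n_i)$ for every sub-interval length $n_i$ that may arise in a decomposition, which is precisely why $\Delta$ is defined as $\max_{n\in[T]}n\alpha(n)$ rather than a smaller aggregate; any weaker choice would fail to trigger the realizable-sanitizer property on the heaviest sub-interval. Once that quantifier is correctly set, the proof closes cleanly.
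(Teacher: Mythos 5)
Your proof is correct and takes essentially the same route as the paper: for privacy, basic composition over the $\le C\ln T$ intervals containing the modified index (second bullet of Fact~\ref{fact:binary}); for utility, a union bound over the $\le CT$ sanitizer instances followed by the pigeonhole over the $\le C\ln T$ tiling sub-intervals to find one whose weighted count reaches $\Delta \ge n_i\alpha(n_i)$. The paper states the same argument more tersely, so your write-up simply spells out the pigeonhole and privacy bookkeeping that the paper leaves implicit.
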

\begin{proof}
    The privacy guarantee directly follows from Fact~\ref{fact:binary} and basic composition. Since $\lvert I\rvert \le CT$, with probability at least $1 -CT\beta$ all executions of $\B$ succeed. For any $1\le l\le r\le T$, we have $S_{l,r}' = (S_{t_1,t_2 - 1}', \dots, S_{t_u, t_{u+1} - 1}')$ for some $l = t_1 < \cdots < t_u < t_{u+1} = r + 1$ and $u\le C\ln T$. Then for any $f\in \HH_{m,1/2}\oplus \HH$ such that $(r - l + 1)\hat{P}_{S_{l, r}}(f) \ge C\ln T\Delta$, there is some $i\in[u]$ such that $(t_{i + 1} - t_i)\hat{P}_{S_{t_i, t_{i+1}-1}}(f) \ge \Delta \ge (t_{i+1} - t_i)\alpha(t_{i+1} - t_i)$. This implies $\hat{P}_{S_{t_i, t_{i+1} - 1}'}(f) > 0$ (since running $\B$ on $S_{t_i, t_{i+1} - 1}$ gives $\est(f) = 1$) and hence $\hat{P}_{S_{l, r}'}(f) > 0$.
\end{proof}

\begin{algorithm}[!ht]
\label{alg:binary}
\DontPrintSemicolon
    \KwGlobal{time horizon $T$, hypothesis class $\HH_{m,1/2}\oplus \HH$}
    \KwInput{realizable sanitizer $\B$ for $\HH_{m,1/2}\oplus \HH$, data sequence $(x_1,\dots,x_T)$}
    Let $I$ be the set in Fact~\ref{fact:binary}. \;
    \For{$t=1,\dots, T$}{
        \For{$l = t,t-1,\dots, 1$}{
            \uIf{$(l, t)\in I$}{
                $S_{l, t}' \gets \B(x_l,\dots, x_t)$. \;
            }
            \Else{
                Find $l = t_1 < \dots < t_u < t_{u+1} = t + 1$ for some $u\le C\ln T$ such that $(t_i,t_{i+1} - 1)\in I$ for every $i\in [u]$ as in Fact~\ref{fact:binary}. \;
                $S_{l, t}' \gets (S_{t_1,t_2 - 1}',\dots, S_{t_{u-1}, t_u - 1}')$. \;
            }
        }
    }
\caption{Sanitization for intervals}
\end{algorithm}

We now present our construction of experts. As illustrated in Algorithm~\ref{alg:expert}, each expert is indexed by $i_1,\dots, i_M, j_1,\dots,j_M$ such that $1\le j_1 \le i_1 < j_2 \le i_2 < \dots < j_M \le i_M \le T$. The expert will keep the output unchanged from round $i_k + 1$ to round $i_{k+1}$. After round $i_{k+1}$, it changes the output by feeding a sanitized data point $x_{j_k}'$ to the online learner $\A$ and forcing $\A$ to make a mistake. Note that for the expert, it suffices to receive $(x_{i_k + 1}',\dots, x_{i_{k + 1}})$ at round $i_{k+1}$ rather than in a real-time manner.

\begin{algorithm}[!ht]
\label{alg:expert}
\DontPrintSemicolon
    \KwGlobal{time horizon $T$, concept class $\HH$}
    \KwInput{online learner $\A$ for $\HH$, indices $i_1,\dots, i_M, j_1,\dots, j_M$, data sequence $(x_1',\dots,x_T')$}
    $S\gets \emptyset$. \;
    \For{$t=1,\dots, T$}{
        Output $h_t = \A(S)$. \;
        \If{$t = i_k$ for some $k\in[M]$}{
            $S\gets (S, (x_{j_k}', 1 - \A(S)(x_{j_k}')))$.\;
        }
    }
\caption{$\mathsf{Expert}$}
\end{algorithm}

As we discussed in Section~\ref{sec:agnostic_expert}, the structure of the classifiers outputted by the online learner $\A$ cannot be too complex. Otherwise we have to sanitize a huge hypothesis class and this may lead to an unacceptable error rate. Therefore, we exploit the online learner proposed by~\citet{hanneke2021online} whose output hypothesis at each round is a sparse majority of concepts in $\HH$ (i.e., in $\HH_{m, 1/2}$).
\begin{lemma}[\citep{hanneke2021online}]
\label{lem:online_majority}
    Let $\HH$ be a concept class with Littlestone dimension $d$ and dual VC dimension $d_V^\star$. There exists an online learner whose output hypothesis at each round is always in $\HH_{m, 1/2}$ and has a mistake bound of $M = O(d)$, where $m = O(d_V^\star)$. 
\end{lemma}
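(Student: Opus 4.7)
The plan is to simulate the Standard Optimal Algorithm ($\soa$) by a small majority vote over $\HH$, combining a minimax/boosting argument with uniform convergence on the dual class. Recall that the $\soa$ for $\HH$ makes at most $d$ mistakes in the realizable setting, but its prediction function need not lie in $\HH$; at each round it is determined entirely by the current version space $V_t = \{h \in \HH : h~\text{is consistent with past labels}\}$ through the Littlestone dimensions of its label-consistent subsets.

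The first main step is to establish a \emph{robust margin} property: at every round there exists a distribution $\mu_t$ supported on $V_t$ such that, for every $x \in \X$, the quantity $\E_{h \sim \mu_t}[h(x)]$ lies outside a constant-width window centered at $1/2$ and has the same sign as the $\soa$'s prediction $\hat y(x)$. I would produce $\mu_t$ via von Neumann's minimax theorem on the zero-sum game where one player selects $h \in V_t$ and the other selects $x \in \X$, with payoff $\I[h(x) = \hat y(x)]$. The depth-halving guarantee of the $\soa$ pins the minimax value above $1/2$; if the gap is too small, I would amplify it by a weighted majority over $O(1)$ $\soa$ instances run on cleverly relabeled data, preserving the $O(d)$ mistake bound while forcing a constant margin.

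The second step discretizes $\mu_t$. Viewing $\X$ as a concept class over domain $\HH$ yields VC dimension $d_V^\star$. A standard uniform convergence bound via Sauer's lemma on this dual class then implies that $m = O(d_V^\star)$ concepts $h_1, \ldots, h_m$ drawn i.i.d.\ from $\mu_t$ satisfy
\begin{equation*}
\sup_{x \in \X} \left\lvert \tfrac{1}{m}\sum_{i=1}^m h_i(x) - \E_{h \sim \mu_t}[h(x)] \right\rvert < \tfrac{1}{6}
\end{equation*}
with positive probability, so such a realization exists. Fixing it gives a tuple $(h_1,\dots,h_m) \in \HH_{m,1/2}$ whose majority vote coincides with the $\soa$'s prediction everywhere, and the online learner defined by outputting this majority vote at each round inherits the $O(d)$ mistake bound of the $\soa$.

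The main obstacle is the margin amplification: a raw $\soa$ can exhibit a minimax value arbitrarily close to $1/2$ on adversarial version spaces, which would force $m$ to scale inversely with the margin and destroy the desired $O(d_V^\star)$ bound. The technical heart of the argument is therefore to boost the margin to an absolute constant while keeping the mistake count at $O(d)$; once this is done, the dual-VC concentration above immediately delivers the claimed $m = O(d_V^\star)$.
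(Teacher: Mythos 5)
Your high-level blueprint — find a distribution $\mu_t$ on the version space via a minimax argument, then sparsify it using uniform convergence over the dual class — is indeed the right flavor, and the dual-VC sparsification step is fine as you describe it. But the crucial ``robust margin'' claim is false, and the vaguely described amplification step cannot repair it. Concretely, take $\HH$ to be the class of parities over $\{0,1\}^n$, so $\ld(\HH) = d_V^\star(\HH) = n$, and take $V_t$ to be the initial version space consisting of all $2^n$ parities. For every $x\ne 0$ both label-consistent subsets have Littlestone dimension $n-1$, so the $\soa$ (with any fixed tie-breaking) outputs a constant label on $\X\setminus\{0\}$. Yet for \emph{any} distribution $\mu$ on parities, each nonempty $S$ satisfies $\sum_{x\ne 0}\I[\langle S,x\rangle = 1] = 2^{n-1}$, so $\sum_{x\ne 0}\bigl(\E_{h\sim\mu}[h(x)] - \tfrac12\bigr) \le \tfrac12$. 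The average margin is $O(2^{-n})$, so no constant margin that agrees with the $\soa$ everywhere exists, and no amount of reweighting or tie-breaking over $O(1)$ $\soa$ copies changes this, since the obstruction applies to every target that is constant on $\X\setminus\{0\}$. Indeed, a direct computation shows that for any $m$ parities $h_{S_1},\dots,h_{S_m}$, the function $x\mapsto \frac{1}{m}\sum_i h_{S_i}(x)$ cannot stay at or above $1/2$ on all of $\X\setminus\{0\}$.

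More fundamentally, ``agree with the $\soa$ pointwise'' is the wrong target. For parities the optimal majority learner uses $m = 1$: output any single $h_S$ from the current version space. This disagrees wildly with the $\soa$'s predictions yet still makes at most $n$ mistakes, because the progress invariant is the dimension of the version space, not the Littlestone dimension of label-consistent subsets. The argument of \citet{hanneke2021online} builds its predictor around a version-space potential and their combinatorial minimax theorem for 0/1 games, not around pointwise agreement with the $\soa$. Your proof plan therefore has a genuine gap: the margin you need does not exist, and fixing the proof requires abandoning $\soa$-agreement in favor of a progress argument that survives when $\E_{h\sim\mu_t}[h(x)]$ sits exactly at $1/2$.
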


\begin{algorithm}[!ht]
\label{alg:experts}
\DontPrintSemicolon
    \KwGlobal{time horizon $T$, concept class $\HH$, hypothesis class $\HH_{m, 1/ 2}$}
    \KwInput{online learner $\A$ for $\HH$ with output in $\HH_{m,1/2}$, data sequence $(x_1,\dots,x_T)$}
    $J\gets \{(i_1,\dots, i_M, j_1,\dots, j_M):1\le j_1\le i_1 < j_2 \le i_2 < \cdots < j_M \le i_M\le T\}$. \;
    Initialize $\mathsf{Expert}(i_1,\dots, i_M, j_1,\dots, j_M)$ for every $(i_1,\dots, i_M, j_1,\dots, j_M)\in J$. \;
    Let $\B$ be Algorithm~\ref{alg:binary}. \;
    \For{$t=1,\dots, T$}{
        Feed $x_t$ to $\B$ and receive $S_{1,t}', \dots, S_{t, t}'$ from $\B$. \;
        \ForEach{$(i_1,\dots, i_M, j_1,\dots, j_M)\in J$}{
            Receive $h_t^{(i_1,\dots, i_M, j_1,\dots, j_M)}$ from $\mathsf{Expert}(i_1,\dots, i_M, j_1,\dots, j_M)$. \;
            \If{$t = i_k$ for some $k\in[M]$}{
                Feed $S_{i_{k - 1} + 1, i_k}'$ to $\mathsf{Expert}(i_1,\dots, i_M, j_1,\dots, j_M)$ (define $i_0 = 0$). \;
            }
        }
    }
\caption{Constructing experts}
\end{algorithm}

\begin{theorem}
\label{thm:experts}
    Let $\HH$ be a concept class with Littlestone dimension $d$, dual Littlestone dimension $d^\star$, VC dimension $d_V$, and dual VC dimension $d_V^\star$. Then there exists an $(\varepsilon,\delta)$-differentially private algorithm that receives an adaptively generated data sequence $(x_1,\dots, x_T)$ and constructs a set of $N = O(T^{O(d)})$ experts such that with probability at least $1- \beta$, for any $h\in \HH$ there exists an expert with output $(h_1,\dots, h_T)$ such that
    \begin{equation*}
        \sum_{t=1}^T\I[h_t(x_t)\neq h(x_t)] = \tilde{O}\left(T^{1/3}\frac{(d_V^\star)^{14/3}d^4(d^\star d_V)^{1/3}}{\varepsilon^{2/3}}\right).
    \end{equation*}
\end{theorem}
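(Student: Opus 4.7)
The plan is to establish three things for Algorithm~\ref{alg:experts}: (i) $(\varepsilon,\delta)$-DP via Algorithm~\ref{alg:binary} and Theorem~\ref{thm:realizable_sanitizer}; (ii) $N=T^{O(d)}$ from the size of the index set $J$; and (iii) that for every $h\in\HH$, some tuple in $J$ indexes an expert with few disagreements from $h$ on the input stream.

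\textbf{Privacy and parameter bookkeeping.} Algorithm~\ref{alg:experts} touches the private data only through Algorithm~\ref{alg:binary}; the rest is deterministic post-processing of sanitized chunks. Applying Lemma~\ref{lem:sanitizer_interval} with the realizable sanitizer of Theorem~\ref{thm:realizable_sanitizer} for the composite class $\HH_{m,1/2}\oplus\HH$ (with $m=O(d_V^\star)$ coming from Lemma~\ref{lem:online_majority}), Algorithm~\ref{alg:binary} is $(O(\varepsilon\log T),O(\delta\log T))$-DP, and rescaling the sanitizer's budget yields overall $(\varepsilon,\delta)$-DP. Claim~\ref{cla:dim_h} controls the Littlestone, dual Littlestone, and VC dimensions of $\HH_{m,1/2}\oplus\HH$, which substituted into Theorem~\ref{thm:realizable_sanitizer} give sample complexity $\tilde{O}((d_V^\star)^7 d^6\sqrt{d^\star d_V}/\varepsilon\alpha^{1.5})$, so the quantity $\Delta$ from Lemma~\ref{lem:sanitizer_interval} satisfies $\Delta=\tilde{O}(T^{1/3}(d_V^\star)^{14/3}d^4(d^\star d_V)^{1/3}/\varepsilon^{2/3})$. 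Finally $|J|\le T^{2M}=T^{O(d)}$ because $M=O(d)$ by Lemma~\ref{lem:online_majority}.

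\textbf{Constructing a good expert for each target.} Set $\gamma:=C\log T\cdot\Delta$ (with $C$ the constant in Lemma~\ref{lem:sanitizer_interval}). I build the target tuple greedily: initialize $S_0=\emptyset$ and $i_0^*=0$; at step $k$ take $g_{k-1}=\A(S_{k-1})\in\HH_{m,1/2}$ and $l=i_{k-1}^*+1$. If the remaining disagreement count $|\{t\in[l,T]:g_{k-1}(x_t)\ne h(x_t)\}|$ is below $\gamma$, stop. Otherwise pick $i_k^*$ as the smallest $r\in[l,T]$ with $|\{t\in[l,r]:g_{k-1}(x_t)\ne h(x_t)\}|\ge\gamma$. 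Because $(g_{k-1}\oplus h)\in\HH_{m,1/2}\oplus\HH$ has empirical mass at least $\gamma\ge C\log T\cdot\Delta$ on $[l,i_k^*]$, Lemma~\ref{lem:sanitizer_interval} furnishes some $j_k^*\in[l,i_k^*]$ with $g_{k-1}(x'_{j_k^*})\ne h(x'_{j_k^*})$. Set $S_k=S_{k-1}\cup\{(x'_{j_k^*},h(x'_{j_k^*}))\}$. Since the fed sequence is labelled consistently with $h\in\HH$ and $\A$ is forced to err at every step, the mistake bound of Lemma~\ref{lem:online_majority} halts the process after at most $M^*\le M=O(d)$ iterations.

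\textbf{Disagreement count and the main obstacle.} The expert indexed by $(i_1^*,\ldots,i_{M^*}^*,j_1^*,\ldots,j_{M^*}^*)$ outputs $g_{k-1}$ throughout each phase $(i_{k-1}^*,i_k^*]$ and $g_{M^*}$ on the final tail; by the stopping rule every phase contributes at most $\gamma$ disagreements with $h$, for a total of $(M^*+1)\gamma=\tilde{O}(T^{1/3}(d_V^\star)^{14/3}d^{O(1)}(d^\star d_V)^{1/3}/\varepsilon^{2/3})$, matching the stated bound (my direct accounting yields $d^5$, one factor worse than the stated $d^4$, which I expect is absorbed by a tighter bookkeeping of the $M$ factor or by sharpening an intermediate dimension bound). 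A union bound over the $O(T)$ invocations of the interval sanitizer with per-call failure $\beta/O(T)$ secures overall success probability $1-\beta$. The principal obstacle is to guarantee that a disagreement witness $x'_{j_k^*}$ on the sanitized stream exists for \emph{every} target $h$ and \emph{every} phase simultaneously---this is exactly why we must sanitize the composite class $\HH_{m,1/2}\oplus\HH$ rather than just $\HH$, and the quantitative rate hinges on combining the dimension bounds of Claim~\ref{cla:dim_h} with the refined $\tilde{O}(1/\alpha^{1.5})$ realizable sanitizer of Theorem~\ref{thm:realizable_sanitizer}. A minor technical point is that the greedy tuple has length $M^*\le M$; this is accommodated by letting $J$ include tuples of all lengths up to $M$, which leaves $N=T^{O(d)}$ unchanged.
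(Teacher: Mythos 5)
Your proposal is correct and follows essentially the same route as the paper's proof, with one stylistic difference: where you build the target tuple forward by a greedy potential argument (stop when the remaining disagreement mass with $h$ drops below $\gamma = C\log T\cdot\Delta$), the paper argues by contradiction, assuming every tuple exceeds $\lceil C\ln T\Delta\rceil\cdot M$ disagreements and then inductively extracting a full-length tuple $(i_1',\dots,i_M',j_1',\dots,j_M')$ that forces the online learner of Lemma~\ref{lem:online_majority} to err $M+1$ times on an $h$-realizable sequence. The contradiction framing avoids the variable-length-tuple issue you flag, because the ``all tuples fail'' hypothesis guarantees the induction can always be continued to the $M$-th step; your fix of enlarging $J$ to include shorter prefixes is also fine and does not change $N=T^{O(d)}$. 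On the exponent: your direct accounting giving $d^5$ rather than $d^4$ is correct --- the paper's own displayed chain in the proof, namely $O(\log T\,\Delta M)=\tilde O\bigl(T^{1/3}((d_V^\star)^7 d^6\sqrt{d^\star d_V}/\varepsilon)^{2/3}\cdot d\bigr)$, evaluates to $d^5$, and the corollary following Theorem~\ref{thm:experts} indeed reports $d^5$; the $d^4$ in the theorem statement is a typo, and there is no hidden saving of a $d$-factor to look for. The one place your write-up is slightly loose is the claim that Lemma~\ref{lem:sanitizer_interval} ``furnishes some $j_k^*$ with $g_{k-1}(x'_{j_k^*})\ne h(x'_{j_k^*})$''; the lemma gives $\hat P_{S'_{l,i_k^*}}(g_{k-1}\oplus h)>0$, so the witness lives in the \emph{sanitized} interval $S'_{l,i_k^*}$, and you then need to observe that the indices of $S'_{l,i_k^*}$ lie in $[l,i_k^*]$ so that $j_1^*\le i_1^*<j_2^*\le\cdots$ holds and the tuple is a valid element of $J$; this is implicit in the paper's Algorithm~\ref{alg:binary} (each $S'_{l,t}$ has length $t-l+1$ and is indexed within the interval) but worth stating explicitly.
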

\begin{proof}
    Write $S_{l,r} = (x_l,\dots, x_r)$. By Claim~\ref{cla:dim_h}, $\HH_{m,1/2}\oplus \HH$ has Littlestone dimension $D = O(md\log m)$, dual Littlestone dimension $D^\star = O(md^\star\log m)$, and VC dimension $D_V = O(md_V\log m)$. By Theorem~\ref{thm:realizable_sanitizer}, there exists an $(\varepsilon/C\ln T, \delta / C\ln T)$-differentially private $(\alpha(n) ,\beta / CT)$-realizable sanitizer for $\HH_{m, 1/2}\oplus \HH$ with input size $n$, where
    \begin{equation*}
        \alpha(n) = \left(\frac{D^6 \sqrt{D^\star D_V}}{\varepsilon n}\right)^{2/3}.
    \end{equation*}
    Then by Lemma~\ref{lem:sanitizer_interval}, running Algorithm~\ref{alg:binary} with this sanitizer is $(\varepsilon,\delta)$-differentially private and with probability $1-\beta$ we have
    \begin{equation*}
        (r - l + 1)\hat{P}_{S_{l,r}}(f) \ge C\ln T\Delta\Rightarrow \hat{P}_{S_{l, r}'}(f) > 0
    \end{equation*}
    for all $1\le l\le r\le T$ and $f\in \HH_{m, 1/2}\oplus \HH$, where $C$ is the constant in Fact~\ref{fact:binary} and 
    \begin{equation*}
        \Delta = \max_{n\in[T]}n\alpha(n) = \tilde{O}\left(T^{1/3}\cdot\left(\frac{D^6\sqrt{D^\star D_V}}{\varepsilon}\right)^{2/3}\right).
    \end{equation*}
    
    Set $m$ and $M$ as in Lemma~\ref{lem:online_majority} and use the online learner to construct $\mathsf{Expert}$ (Algorithm~\ref{alg:expert}). Running Algorithm~\ref{alg:experts} gives a set of experts with size $N = \lvert J\rvert = O(T^M) = O(T^{O(d)})$. Since the algorithm can be seen as post-processing of the output of Algorithm~\ref{alg:binary}. The overall algorithm is also $(\varepsilon,\delta)$-differentially private. Now suppose there exists some $h\in\HH$ such that
    \begin{equation*}
        \sum_{t=1}^T\I[h_t^{i_1,\dots, i_M,j_1,\dots,j_M}(x_t)\neq h(x_t)] > \lceil C \ln T\Delta\rceil \cdot M
    \end{equation*}
    for every $(i_1,\dots, i_M, j_1,\dots, j_M)\in J$. Note that $\mathsf{Expert}(i_1,\dots, i_M, j_1,\dots, j_M)$ only changes its output after round $t = i_1$, then there is some $i_1'$ such that
    \begin{equation*}
        \sum_{t=1}^{i_1'}\I[h_t^{i_1',i_2\dots, i_M,j_1,\dots,j_M}(x_t)\neq h(x_t)] = \lceil C \ln T\Delta \rceil \ge C\ln T\Delta
    \end{equation*}
    for every $(i_1', i_2,\dots, i_M, j_1, \dots, j_M)\in J$. This implies $\hat{P}_{S_{1, i_1'}'}(f\oplus h) > 0$ for 
    \begin{equation*}
        f = h_1^{i_1',i_2\dots, i_M,j_1,\dots,j_M} =\cdots = h_{i_1'}^{i_1',i_2\dots, i_M,j_1,\dots,j_M}.
    \end{equation*}
    Let $S_{1, i_1'}' = (x_1',\dots, x_{i_1'}')$. There is some $j_1' \in [i_1']$ such that
    \begin{equation*}
        h_{j_1'}^{i_1',i_2\dots, i_M,j_1',j_2,\dots,j_M}(x_{j_1'})\neq h(x_{j_1'}).
    \end{equation*}
    By induction, we can similarly identify $i_2', j_2',\dots, i_M', j_M'$ such that
    \begin{equation*}
        \sum_{t=i_{k - 1}' + 1}^{i_k'}\I[h_t^{i_1',\dots, i_M',j_1',\dots,j_M'}(x_t)\neq h(x_t)] = \lceil C \ln T\Delta\rceil
    \end{equation*}
    and
    \begin{equation*}
        h_{j_k'}^{i_1',\dots, i_M',j_1',\dots,j_M'}(x_{j_k'})\neq h(x_{j_k'})
    \end{equation*}
    for every $k\in[M]$. This means until round $t = i_M'$, the online learner $\A$ used by $\mathsf{Expert}(i_1',\dots,i_M',\allowbreak j_1',\dots,j_M')$ received a sequence of length $M$ that is labeled by $h$ and made a mistake on every data point. However, by our assumption, we have
    \begin{equation*}
        \sum_{t=i_M'+1}^T\I[h_t^{i_1',\dots, i_M',j_1',\dots,j_M'}(x_t)\neq h(x_t)] > 0.
    \end{equation*}
    This contradicts Lemma~\ref{lem:online_majority}. Hence, for all $h\in\HH$, there exists $(i_1,\dots,i_M,j_1,\dots,j_M)\in J$ such that
    \begin{align*}
        \sum_{t=1}^T\I[h_t^{i_1,\dots, i_M,j_1,\dots,j_M}(x_t)\neq h(x_t)] &= O(\log T\Delta M) \\
        &= \tilde{O}\left(T^{1/3}\left(\frac{m^6d^6\sqrt{m^2d^\star d_V}}{\varepsilon}\right)^{2/3}\cdot d\right) \\
        &= \tilde{O}\left(T^{1/3}\left(\frac{(d_V^\star)^7d^6\sqrt{d^\star d_V}}{\varepsilon}\right)^{2/3}\cdot d\right).
    \end{align*}
\end{proof}

\subsection{Incorporating Private OPE}

Now we can run any private OPE algorithm over the experts constructed in Theorem~\ref{thm:experts}. We use the following results from~\citep{jain2014near} and~\citep{asi2024private} for adaptive and oblivious adversaries, respectively.

\begin{theorem}[\citep{jain2014near}]
    For the OPE problem with $N$ experts, there exists an $(\varepsilon,\delta)$-differentially private algorithm with an expected regret of
    \begin{equation*}
        O\left(\frac{\sqrt{T\log(1/\delta)}\log N}{\varepsilon}\right)
    \end{equation*}
    against any adaptive adversary.
\end{theorem}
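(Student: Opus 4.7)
The plan is to instantiate a Hedge-style experts algorithm on top of a private continual release of cumulative loss vectors. Concretely, I would maintain, for each expert $i\in[N]$, a running estimate $\widetilde{L}_t(i)$ of the cumulative loss $\sum_{s\le t}\ell_s(i)$ obtained via the tree-based (binary) mechanism of~\citet{dwork2010differential,chan2011private} with Gaussian (or Laplace) noise calibrated for $(\varepsilon,\delta)$-DP. At each round $t$, given the noisy cumulative losses up to round $t-1$, the learner samples $i_t$ from the Hedge distribution $p_t(i)\propto \exp(-\eta \widetilde{L}_{t-1}(i))$ with learning rate $\eta$ to be tuned. After observing $\ell_t$, the algorithm feeds $\ell_t$ into the tree-mechanism update to obtain $\widetilde{L}_t$.

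For privacy, I would first argue that altering a single loss vector $\ell_t$ (the neighboring-sequence notion for DP in OPE) affects at most $O(\log T)$ partial sums stored in the tree, each with $\ell_\infty$-sensitivity $1$. Applying the Gaussian mechanism to each partial sum with noise scale $\sigma=\Theta(\sqrt{\log T\cdot\log(1/\delta)}/\varepsilon)$, together with Gaussian composition across the $O(\log T)$ affected nodes, gives $(\varepsilon,\delta)$-DP for the entire sequence of released cumulative sums $\widetilde{L}_1,\dots,\widetilde{L}_T$. By post-processing, releasing the predictions $i_1,\dots,i_T$ preserves $(\varepsilon,\delta)$-DP. To handle adaptive inputs, I would appeal to the standard fact that the tree mechanism remains private when the loss sequence is chosen adaptively based on prior outputs, since its privacy analysis is purely per-node and relies on Gaussian/Laplace noise with sensitivity determined by the neighboring relation (cf.~\citet{jain2023price}).

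For utility, I would decompose the regret in the usual way. Standard Hedge analysis with learning rate $\eta$ on the \emph{true} losses yields regret $O(\log N/\eta + \eta T)$. With noisy cumulative losses, the extra regret introduced by the Gaussian perturbation is controlled by $\E[\max_i |\widetilde{L}_t(i)-L_t(i)|]=O(\sigma\sqrt{\log N}\cdot\mathrm{polylog}(T))$ per round where it matters in the exponent. A careful accounting (e.g., via the stability-based regret bound for Hedge, bounding $\sum_t\E[\widetilde{L}_t(i_t)-L_t(i_t)]$ telescopically) gives an additional regret term of order $\sigma\sqrt{T\log N}$. Optimizing $\eta=\Theta(\sqrt{\log N/T})$ balances the two parts of the Hedge regret; plugging in $\sigma=\Theta(\sqrt{\log(1/\delta)\log T}/\varepsilon)$ yields total expected regret $O(\sqrt{T\log N})+\widetilde{O}(\sqrt{T\log N\log(1/\delta)}/\varepsilon)=O(\sqrt{T\log(1/\delta)}\log N/\varepsilon)$ up to polylogarithmic factors in $T$ absorbed into the $O(\cdot)$.

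The main obstacle I expect is the adaptive-adversary aspect in the regret argument rather than in the privacy one: because $\ell_t$ may depend on $(i_1,\dots,i_{t-1})$ and hence on the realized noise through $p_t$, one cannot treat the noise as independent of the loss sequence in the regret decomposition. I would address this by using the martingale version of the Hedge regret bound (the bound on $\sum_t \langle p_t,\ell_t\rangle-\sum_t \ell_t(i^\star)$ holds pointwise for any adaptive $\ell_t$, only requiring $i_t\sim p_t$), together with a conditional sub-Gaussian concentration argument for the noise accumulated in $\widetilde{L}_t$ at each round, given the filtration generated by prior noise and adversary moves. This preserves the desired bound against an adaptive adversary and completes the argument.
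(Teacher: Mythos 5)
This statement is cited from prior work (\citep{jain2014near}); the paper does not prove it, so there is no internal proof to compare against. Evaluating your proposal on its own merits, there is a genuine gap in the privacy argument.

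You propose releasing noisy cumulative loss vectors $\widetilde L_t\in\R^N$ via the tree mechanism with Gaussian noise of per-node scale $\sigma=\Theta(\sqrt{\log T\cdot\log(1/\delta)}/\varepsilon)$, arguing from ``$\ell_\infty$-sensitivity $1$'' per node. But the Gaussian mechanism's privacy is governed by $\ell_2$-sensitivity, not $\ell_\infty$-sensitivity. Changing a single loss vector $\ell_t\in[0,1]^N$ shifts every coordinate of each affected tree node by up to $1$, so the $\ell_2$-sensitivity of a node is $\sqrt N$. The correct per-node noise scale for $(\varepsilon,\delta)$-DP is therefore $\Theta(\sqrt{N\log T\log(1/\delta)}/\varepsilon)$, and your utility analysis would then produce a $\mathrm{poly}(N)$ (rather than $\log N$) dependence in the regret. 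The calculation $\widetilde O(\sqrt{T\log N\log(1/\delta)}/\varepsilon)=O(\sqrt{T\log(1/\delta)}\log N/\varepsilon)$ at the end is also not an identity (the left side has $\sqrt{\log N}$), though that issue is moot given the privacy hole. In short, the step ``release $\widetilde L_1,\dots,\widetilde L_T$ and post-process'' is the wrong object to make private when $N$ is large.

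The standard route to $\log N$ dependence is to never release the vector of noisy losses at all, but only the prediction $i_t$, and to observe that Hedge is itself an exponential mechanism. Sampling $i_t\sim p_t\propto\exp(-\eta L_{t-1})$, where the true cumulative loss $L_{t-1}(i)$ has sensitivity $1$ for every fixed $i$, is $2\eta$-DP per round; advanced composition over the at most $T$ affected rounds gives $(\varepsilon,\delta)$-DP for $\eta=\Theta\bigl(\varepsilon/\sqrt{T\log(1/\delta)}\bigr)$, and the resulting Hedge regret $O(\log N/\eta+\eta T)=O\bigl(\sqrt{T\log(1/\delta)}\,\log N/\varepsilon\bigr)$ matches the cited bound. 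No tree mechanism or injected noise is needed, and the argument applies verbatim to an adaptive adversary since it is per-round. Your concluding concern about adaptivity in the regret bound is well placed, but the fix you describe is unnecessary once the exponential-mechanism view is adopted; the fundamental obstacle is earlier, in making the privacy noise vector-scalable.
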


\begin{theorem}[\citep{asi2024private}]
    For the OPE problem with $N$ experts, there exists an $(\varepsilon,\delta)$-differentially private algorithm with an expected regret of
    \begin{equation*}
        O\left(\sqrt{T\log N} + \frac{T^{1/3}\log N\log(T/ \delta)}{\varepsilon^{2/3}}\right).
    \end{equation*}
    against any oblivious adversary.
\end{theorem}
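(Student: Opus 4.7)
The plan is to design an algorithm that balances the frequency of private releases against the regret cost of infrequent updates. The main building blocks are Hedge (multiplicative weights) for the underlying no-regret guarantee, the binary-tree continual-observation mechanism for privately tracking cumulative expert losses, and a batching schedule that controls how often the decision distribution is refreshed.

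First I would partition the horizon into $K=\lceil T/B\rceil$ blocks of length $B$, where $B$ is to be optimized. Throughout block $k$ the algorithm commits to a single distribution $p^{(k)}\in\Delta([N])$ over experts and at each round independently samples one expert $i_t\sim p^{(k)}$. Using the binary-tree mechanism of Dwork--Naor--Pitassi--Rothblum / Chan--Shi--Song, I would release, at the end of each block, a private running estimate $\tilde L_i^{(k)}$ of each expert's cumulative loss; under $(\varepsilon,\delta)$-DP the per-estimate error is $\nu=\tilde O(\log(T/\delta)/\varepsilon)$. At the start of block $k+1$ the algorithm sets $p^{(k+1)}_i\propto\exp(-\eta\tilde L_i^{(k)})$ for a learning rate $\eta$ to be tuned, and privacy is obtained by composing the $K$ tree-based releases (exploiting the sequential nature rather than naive composition over $T$ rounds).

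Next I would decompose the expected regret into (i) the regret of Hedge against the noisy losses, which by the standard analysis with $\eta=\Theta(\sqrt{\log N/T})$ contributes $O(\sqrt{T\log N})$ because the per-round sampling is refreshed at each round (so the sampling variance term in the Hedge bound scales with $T$ and not with $TB$), and (ii) the bias introduced by noise, which one should be able to bound by $O(\nu K)$ since for any fixed expert the noise enters the Hedge update only at the $K$ batch boundaries. Setting $B=\Theta\bigl((T\varepsilon^2/\log^2(T/\delta))^{1/3}\bigr)$, so that $K\nu$ and the residual privacy-independent terms balance, yields a total expected regret of $O\bigl(\sqrt{T\log N}+T^{1/3}\log N\log(T/\delta)/\varepsilon^{2/3}\bigr)$, as claimed.

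The main obstacle will be rigorously justifying the $O(\nu K)$ noise penalty rather than the naive $O(\nu T)$ one gets from applying the Hedge-with-noisy-losses analysis round by round. This is precisely where the oblivious-adversary assumption is essential: because the loss sequence is fixed in advance and independent of the algorithm's randomness, the private estimates $\tilde L^{(k)}$ are measurable only with respect to $K$ release events, and one can couple the Hedge trajectory on the true cumulative losses against the trajectory on the noisy ones so that the discrepancy grows only with the number of update events, not with $T$. Against an adaptive adversary this coupling breaks because the adversary's losses can depend on the noise realizations, which is why the adaptive version of the theorem has a worse $\sqrt{T\log(1/\delta)}\log N/\varepsilon$ rate. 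Carrying out the oblivious coupling carefully, together with the standard Hedge tuning, should produce the stated bound.
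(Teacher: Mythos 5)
This theorem is not proved in the paper you are working from: it is quoted as a black box from \citet{asi2024private} and only invoked in Appendix F, so your sketch has to stand on its own, and unfortunately it does not. The privacy part (tree-mechanism releases of cumulative losses, post-processing for the sampling) is fine, but the regret accounting has a fatal flaw. If you commit to a single distribution $p^{(k)}$ for an entire block of length $B$, the worst-case regret against an oblivious adversary is $\Theta(\sqrt{TB\log N})$, not $O(\sqrt{T\log N})$: take $N=2$ and losses that are constant within each block and chosen by independent fair coins (fixed in advance, hence oblivious); any strategy whose block distribution depends only on the past suffers expected loss $B/2$ per block, while the best expert gains $\Theta(B\sqrt{K})=\Theta(\sqrt{TB})$. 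Resampling $i_t\sim p^{(k)}$ afresh at every round does not help, because your expected per-round loss is $\langle p^{(k)},\ell_t\rangle$ regardless of how the sampling is refreshed; the batching penalty is a bias of stale distributions, not a sampling-variance term, so the step where you claim the Hedge contribution is $O(\sqrt{T\log N})$ fails.

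The parameter arithmetic is also inconsistent with the target bound. With per-release error $\nu=\tilde{O}(\log(T/\delta)/\varepsilon)$ and $K=T/B$ blocks, your noise term $\nu K$ at $B=\Theta\bigl((T\varepsilon^2/\log^2(T/\delta))^{1/3}\bigr)$ is $\tilde{\Theta}(T^{2/3}/\varepsilon^{5/3})$, not $T^{1/3}/\varepsilon^{2/3}$. Balancing the correct terms $\sqrt{TB\log N}+\nu T/B$ over $B$ gives $\tilde{\Theta}\bigl(T^{2/3}(\log N\log(T/\delta)/\varepsilon)^{1/3}\bigr)$, i.e.\ exactly the classical $T^{2/3}$ barrier for batched noisy-Hedge constructions. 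Overcoming that barrier is the actual content of \citet{asi2023private} and \citet{asi2024private}: the former keeps the played expert correlated across rounds (shrinking-dartboard-style lazy resampling) so that switches are rare and privacy is paid only on switching decisions, and the latter gives a general reduction from low-switching (lazy) online algorithms to private ones, which is where the improved $\varepsilon^{2/3}$ dependence comes from. Without an ingredient of that kind --- some mechanism that makes the privacy cost scale with the number of switches rather than with the number of blocks times the per-block noise, while avoiding the $\sqrt{TB\log N}$ batching loss --- your construction proves only a $T^{2/3}$-type regret, not the stated $O\bigl(\sqrt{T\log N}+T^{1/3}\log N\log(T/\delta)/\varepsilon^{2/3}\bigr)$.
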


Putting the above results and Theorem~\ref{thm:experts} together yields the following regret bounds.

\begin{corollary}
    Let $\HH$ be a concept class with Littlestone dimension $d$, dual Littlestone dimension $d^\star$, VC dimension $d_V$, and dual VC dimension $d_V^\star$. Then there exists an $(\varepsilon,\delta)$-differentially private online learner for $\HH$ with an expected regret of
    \begin{equation*}
        O\left(\frac{\sqrt{T\log(1/\delta)}d\log T}{\varepsilon}\right) + \tilde{O}\left(T^{1/3}\cdot\frac{(d_V^\star)^{14/3}d^5(d^\star d_V)^{1/3}}{\varepsilon^{2/3}}\right)
    \end{equation*}
    against any adaptive adversary. Moreover, if the adversary is oblivious, then there exists an $(\varepsilon,\delta)$-differentially private learner for $\HH$ with an expected regret of
    \begin{equation*}
        O\left(\sqrt{dT\log T}\right) + \tilde{O}\left(T^{1/3}\cdot\frac{(d_V^\star)^{14/3}d^5(d^\star d_V)^{1/3}}{\varepsilon^{2/3}}\right).
    \end{equation*}
\end{corollary}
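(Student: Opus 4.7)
The plan is to combine Theorem~\ref{thm:experts} with the two private OPE algorithms quoted just above. First, I would invoke Theorem~\ref{thm:experts} with privacy budget $(\varepsilon/2,\delta/2)$ and failure probability $\beta = 1/T$ to construct a pool of $N = O(T^{O(d)})$ experts from the input stream. By that theorem, with probability $1-1/T$, for every $h\in\HH$ there is at least one expert whose predictions disagree with $h$ on at most
\[
E \;=\; \tilde{O}\!\left(T^{1/3}\cdot\frac{(d_V^\star)^{14/3}d^{4}(d^\star d_V)^{1/3}}{\varepsilon^{2/3}}\right)
\]
rounds. In particular, $\log N = O(d\log T)$, which is exactly the $\log N$ factor that will appear in the OPE regret bounds.

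Second, I would run a private OPE algorithm with the remaining budget $(\varepsilon/2,\delta/2)$ on top of these experts: at each round $t$ the algorithm picks an index $i_t$, the learner outputs $h_t^{(i_t)}(x_t)$, and after observing $y_t$ the loss vector $\ell_t(i) := \I[h_t^{(i)}(x_t)\neq y_t]\in[0,1]$ is handed back to OPE. Basic composition of the expert-construction procedure and the OPE subroutine yields $(\varepsilon,\delta)$-differential privacy for the composite learner, since changing one input example $(x_t,y_t)$ only affects (i) the data fed into Algorithm~\ref{alg:experts} and (ii) the single loss vector $\ell_t$.

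Third, I would decompose the expected regret as (OPE regret versus the best expert in hindsight) $+$ (gap between best expert and best $h^\star\in\HH$). The second term is at most $E$ by the guarantee of Theorem~\ref{thm:experts} applied to the optimal $h^\star$. For the adaptive case, plugging the Jain et al.\ bound $O(\sqrt{T\log(1/\delta)}\log N/\varepsilon)$ with $\log N = O(d\log T)$ gives the stated $O(\sqrt{T\log(1/\delta)}\,d\log T/\varepsilon)$ main term, with $E$ absorbed into the $\tilde{O}_d(T^{1/3}/\varepsilon^{2/3})$ lower-order term. For the oblivious case, the Asi et al.\ bound $O(\sqrt{T\log N} + T^{1/3}\log N\log(T/\delta)/\varepsilon^{2/3})$ gives an $O(\sqrt{dT\log T})$ leading term, and the $T^{1/3}\log N\log(T/\delta)/\varepsilon^{2/3}$ term is of the same order as $E$ (up to the $d$-dependent factors hidden in $\tilde{O}_d$), so both are folded into a single $\tilde{O}_d(T^{1/3}/\varepsilon^{2/3})$ contribution. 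Finally, re-scaling $\varepsilon,\delta$ absorbs the constant factors from halving the budget.

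The main obstacle is the adaptive case: I have to check that (i) the expert-construction algorithm (Algorithm~\ref{alg:experts}) is DP against adaptively chosen inputs, which is automatic because its only private computations are the interval sanitizers of Algorithm~\ref{alg:binary}, whose DP guarantees already allow adaptive inputs; and (ii) the Jain et al.\ OPE regret bound holds when the loss vectors $\ell_t$ are chosen adaptively in response to the learner's past outputs. The second point is essentially built into their theorem statement, but I would need to verify that the reduction is faithful, i.e., an adversary for the $\HH$-learning problem indeed induces a valid adaptive adversary for the OPE subproblem — this follows because each $\ell_t$ is a deterministic function of $(x_t,y_t)$ and the fixed experts, so any adaptivity of the $\HH$-adversary translates directly into adaptivity of the OPE-adversary. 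Once these two points are in hand, the two displayed bounds follow immediately.
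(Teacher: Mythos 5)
Your proof is correct and takes essentially the same route the paper (implicitly) does: run Theorem~\ref{thm:experts} to construct the $O(T^{O(d)})$ experts with half the budget, run a private OPE algorithm with the other half, and decompose the regret as (OPE regret against best expert) $+$ (gap between best expert and best $h^\star$). The paper only states ``Putting the above results and Theorem~\ref{thm:experts} together yields the following'' and routes through Theorem~\ref{thm:agnostic2}, whereas you spell out the budget split, the privacy composition argument, and the adaptivity considerations; these are useful elaborations and not a genuinely different approach. One small remark: the $d$-exponent quoted from the statement of Theorem~\ref{thm:experts} is $d^4$, but its own proof produces $d^5$ (which is what the corollary displays), so the ``$E$ is absorbed into $\tilde{O}_d(T^{1/3}/\varepsilon^{2/3})$'' step should be read with $d^5$; this is a typo in the paper, not an error on your part.
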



\end{document}